\documentclass{article}

\usepackage[utf8]{inputenc}
\usepackage[T1]{fontenc}

\usepackage{microtype}
\usepackage{graphicx}
\usepackage{subcaption}
\usepackage{algorithm}
\usepackage{algorithmic}
\usepackage{booktabs} 

\usepackage{hyperref}
\usepackage{url}
\usepackage{xcolor}
\usepackage{enumitem}
\usepackage{titletoc}
\usepackage{listings}
\usepackage[preprint]{neurips_2026}

\usepackage{amsmath}
\usepackage{amssymb}
\usepackage{mathtools}
\usepackage{amsthm}

\usepackage[capitalize,noabbrev]{cleveref}

\usepackage{booktabs}
\usepackage{siunitx}
\usepackage{multirow}
\usepackage{wrapfig}
\usepackage{dblfloatfix}
\usepackage{placeins}
\newcommand{\NA}{\textemdash}
\newcommand{\stagepair}[2]{#1$\rightarrow$#2}
\newcommand{\stagebest}[2]{#1$\rightarrow$\textbf{#2}}
\newcommand{\MeanAvg}{\ensuremath{\overline{\mathrm{Avg}}}}
\theoremstyle{plain}
\newtheorem{theorem}{Theorem}[section]
\newtheorem{proposition}[theorem]{Proposition}

\newtheorem{corollary}[theorem]{Corollary}
\theoremstyle{definition}

\theoremstyle{remark}

\usepackage[disable,textsize=tiny]{todonotes}

\title{Rotation-Preserving Supervised Fine-Tuning}

\author{%
\begin{tabular}{c}
Hangzhan Jin$^{1,2,*}$
\quad
Tianwei Ni$^{1,3}$
\quad
Lu Li$^{1,3}$\\
Pierre-Luc Bacon$^{1,3,5}$
\quad
Mohammad Hamdaqa$^{2}$
\quad
Doina Precup$^{1,4,5,6}$\\[0.35em]
\normalfont\footnotesize
$^{1}$Mila - Quebec AI Institute
\quad
$^{2}$Polytechnique Montr\'eal
\quad
$^{3}$Universit\'e de Montr\'eal\\[-0.1em]
\normalfont\footnotesize
$^{4}$McGill University
\quad
$^{5}$CIFAR AI Chair
\quad
$^{6}$Google DeepMind
\end{tabular}
}

\begin{document}

\maketitle

\begingroup
\renewcommand{\thefootnote}{}
\footnotetext{
$^{*}$Corresponding author: \texttt{hangzhan.jin@mila.quebec}.
}
\endgroup
\vspace{-0.5cm}
\begin{abstract}

Supervised fine-tuning (SFT) improves in-domain performance but can degrade out-of-domain (OOD) generalization. Prior work suggests that this degradation is related to changes in dominant singular subspaces of pretrained weight matrices. However, directly identifying loss-sensitive directions with Hessian or Fisher information is computationally expensive at LLM scale. In this work, we propose preserving projected rotations in pretrained singular subspaces as an efficient proxy for Fisher-sensitive directions, which we call Rotation-Preserving Supervised Fine-Tuning (RPSFT). RPSFT penalizes changes in the projected top-$k$ singular-vector block of each pretrained weight matrix, limiting unnecessary rotation while preserving task adaptation. Across model families and sizes trained on math reasoning data, RPSFT improves the in-domain/OOD trade-off over standard SFT and strong SFT baselines, better preserves pretrained representations, and provides stronger initializations for downstream RL fine-tuning. 
Code is available at \href{https://github.com/jinhangzhan/RPSFT.git}{https://github.com/jinhangzhan/RPSFT}.

\end{abstract}

\begin{figure}[htbp]
\centering
\includegraphics[width=\linewidth]{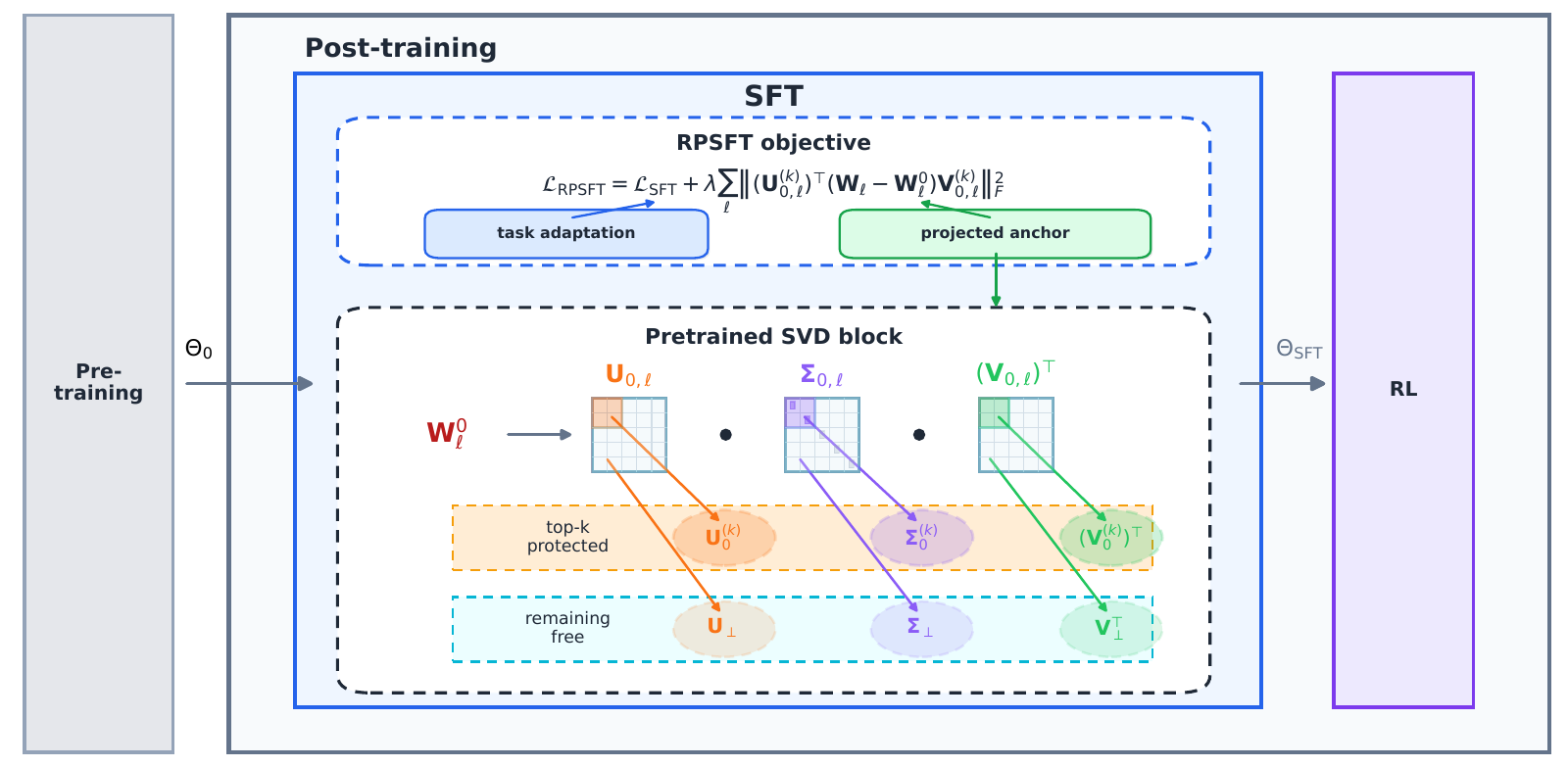}
\vspace{-1em}
\caption{Method overview. RPSFT modifies SFT by adding a projected-block anchor in the pretrained SVD basis while retaining full-parameter task adaptation. The top-$k$ pretrained singular block is protected, complementary directions remain free, and the resulting $\theta_{\text{SFT}}$ initializes RLFT.}
\vspace{-1em}
\label{fig:rpsft-method-overview}
\end{figure}

\section{Introduction}
\looseness=-1
Large language models are commonly post-trained with supervised fine-tuning (SFT) followed by reinforcement learning fine-tuning (RLFT). Although SFT improves tasks represented in the post-training data, it can degrade out-of-domain (OOD) capabilities, a form of forgetting caused by excessive specialization~\citep{zhu2025proximalsupervisedfinetuning, wu2025generalizationsftreinforcementlearning}. Prior work links this degradation to geometric drift, especially rotations of dominant singular directions in pretrained weight matrices~\citep{jin2025rlfinetuninghealsood, zhu2025pathtakenrlvrprovably}. Since these directions are associated with high-variance and high-curvature structure~\citep{haink2023hessianeigenvectorsprincipalcomponent}, preserving them may protect general-purpose capabilities during SFT.

This motivates our question: \textit{Can supervised fine-tuning preserve OOD generalization by limiting unnecessary geometric drift while retaining task adaptation?} To address this question, we propose \emph{Rotation-Preserving Supervised Fine-Tuning} (RPSFT), a simple regularization method motivated by the observed overlap between dominant singular subspaces and Fisher-sensitive directions tied to OOD forgetting. RPSFT penalizes changes in the pretrained top-$k$ singular-vector block of each selected weight matrix. Unlike freezing or hard gradient projection, it anchors only the dominant pretrained block while leaving complementary directions free to adapt, and integrates into standard SFT pipelines without additional data or task boundaries.

Concretely, for each selected matrix $\mathbf{W}$, RPSFT precomputes the pretrained top-$k$ left and right singular-vector bases $\mathbf{U}^{(k)}_{0}$ and $\mathbf{V}^{(k)}_{0}$, and adds the  Frobenius norm $\lambda\|(\mathbf{U}^{(k)}_{0})^\top(\mathbf{W}-\mathbf{W}^0)\mathbf{V}^{(k)}_{0}\|_F^2$ to the SFT loss. This formulation preserves the expressivity of supervised fine-tuning while stabilizing general-purpose capability learned during pretraining. Figure~\ref{fig:rpsft-method-overview} summarizes the overall RPSFT post-training workflow and Algorithm~\ref{alg:fpft} summarizes the algorithm. This method integrates directly into standard SFT pipelines and requires no additional data or task boundaries.

We evaluate RPSFT with full-parameter fine-tuning across Llama \citep{grattafiori2024llama} and Qwen \citep{qwen2.5} checkpoints trained on OpenR1-Math~\citep{openr1}. We treat math benchmarks as in-domain and general reasoning, safety, and knowledge benchmarks as OOD. Across model families and sizes, RPSFT improves the ID/OOD trade-off over SFT and strong SFT baselines, summarized in Figure~\ref{fig:result-overview}. To understand why it works, we analyze representation drift after supervised fine-tuning. Comparing the hidden states of the base model and the checkpoint after SFT, we show that RPSFT better preserves the pretrained representation geometry. We then evaluate downstream reinforcement learning with DAPO~\citep{yu2025dapoopensourcellmreinforcement}, a variant of GRPO~\citep{shao2024deepseekmathpushinglimitsmathematical}, across all three model sizes and find that RPSFT provides strong downstream initializations and consistently higher or competitive final RL performance.

Together, RPSFT contributes a simple projected-subspace regularizer for SFT, consistent improvements in the ID/OOD trade-off across Llama~\citep{grattafiori2024llama} and Qwen~\citep{qwen2.5} models, and empirical and theoretical evidence that preserving dominant pretrained subspaces reduces rotation, protects hidden-state representations, and improves the forgetting--adaptation trade-off.

\section{Preliminaries}

Modern post-training for reasoning language models typically proceeds in two stages: supervised fine-tuning (SFT) on curated instruction data, followed by reinforcement learning fine-tuning (RLFT) on task-level rewards~\citep{deepseekr1, openai2024openaio1card, bai2023qwentechnicalreport}.

\paragraph{SFT and PEFT.}
Supervised fine-tuning adapts pretrained parameters $\theta_0$ on labeled pairs $\mathcal{D}=\{(x,y)\}$ by minimizing the negative log-likelihood $\mathcal{L}_{\mathrm{SFT}}(\theta)=\mathbb{E}_{(x,y)\sim\mathcal{D}}[-\log \pi_\theta(y\mid x)]$, where $x$ is an input prompt, $y$ is the target response, and $\pi_\theta(y\mid x)$ is the model distribution with parameters $\theta$ initialized from $\theta_0$. In this work, the main setting is full-parameter SFT, where all model parameters are updated. We include vanilla LoRA~\citep{hu2021loralowrankadaptationlarge} as a PEFT baseline in Appendix~\ref{app:robustness-extra}.

\paragraph{RLFT and DAPO.}
After SFT, the model can be further optimized with reinforcement learning. We use DAPO~\citep{yu2025dapoopensourcellmreinforcement} that samples a group of $G$ responses from the rollout policy $\pi_{\theta_{\mathrm{old}}}(\cdot \mid x)$, computes a normalized group-relative advantage \( \hat{A}_i = \frac{r_i - \frac{1}{G}\sum_{j=1}^{G} r_j}{\operatorname{std}(\{r_j\}_{j=1}^{G}) + \varepsilon} \) with \(r_i = r(x,y_i)\) and small numerical constant \(\varepsilon\), and uses the token-level PPO ratio~\citep{schulman2017proximalpolicyoptimizationalgorithms} \( \rho_{i,t}(\theta) = \frac{\pi_{\theta}(y_{i,t} \mid x, y_{i,<t})}{\pi_{\theta_{\mathrm{old}}}(y_{i,t} \mid x, y_{i,<t})} \). The resulting surrogate objective is
\begin{equation}
{\footnotesize
\mathcal{J}_{\mathrm{DAPO}}(\theta)
= \mathbb{E}_{x,\{y_i\}\sim \pi_{\theta_{\mathrm{old}}}(\cdot \mid x)} \!\left[
\frac{1}{\sum_{i=1}^{G} |y_i|}
\sum_{i,t}
\min\!\Big(
\rho_{i,t}\hat{A}_i,\,
\operatorname{clip}\!\big(\rho_{i,t}, 1-\epsilon_{\mathrm{low}}, 1+\epsilon_{\mathrm{high}}\big)\hat{A}_i
\Big)
\right].
}
\end{equation}
Here $i$ indexes the $G$ sampled responses for each prompt, $t$ ranges over the valid tokens of response $y_i$, $|y_i|$ is the response length, and $\epsilon_{\mathrm{low}},\epsilon_{\mathrm{high}}$ are the PPO clipping thresholds. Compared with GRPO, DAPO averages the clipped policy objective over valid tokens rather than over whole responses.


\section{Why Preserving Singular Vectors  Mitigates Forgetting}
\label{sec:motivation}
Post-training must balance \emph{rapid adaptation}, which efficiently improves upon the target task, and \emph{mitigating forgetting}, which preserves capabilities already encoded in the pretrained model \citep{lyle2023understandingplasticityneuralnetworks,lu2025rethinkingstabilityplasticitytradeoffcontinual}. Prior work has linked dominant subspaces to important functional properties of neural networks \citep{haink2023hessianeigenvectorsprincipalcomponent}, and excessive rotation of leading singular subspaces in large language models has been associated with out-of-domain degradation \citep{jin2025rlfinetuninghealsood}. In this section, we complement this view by connecting this trade-off to forgetting. We analyze a local second-order expansion around the pretrained weights, following the standard curvature view used in second-order optimization \citep{martens2010deeplearninghessianfree}. Let a vectorized base-model weight be $w \in \mathbb R^d$, the post-trained weight be $w' = w + \Delta w$, and $l$ denote the loss on a task. The local expansion is
\begin{equation}
l(w') \approx l(w) + \underbrace{\langle \nabla l(w), \Delta w \rangle}_{ \text{adaptation}} + \underbrace{\frac{1}{2} \Delta w^\top \nabla^2 l(w) \Delta w}_{\text{forgetting}} .
\end{equation}
The first-order term captures task adaptation: if the update aligns with the target gradient, the loss decreases. The second-order term captures the need to mitigate forgetting: moving along high-curvature directions can sharply increase the loss of capabilities that the pretrained model already fits well. For OOD retention, forgetting is often dominated by this curvature term once the base model already achieves local-optimal OOD performance where the first-order is close to zero.

\begin{wrapfigure}[24]{r}{0.60\columnwidth}
\vspace{-0.80\baselineskip}
\centering
\includegraphics[width=\linewidth]{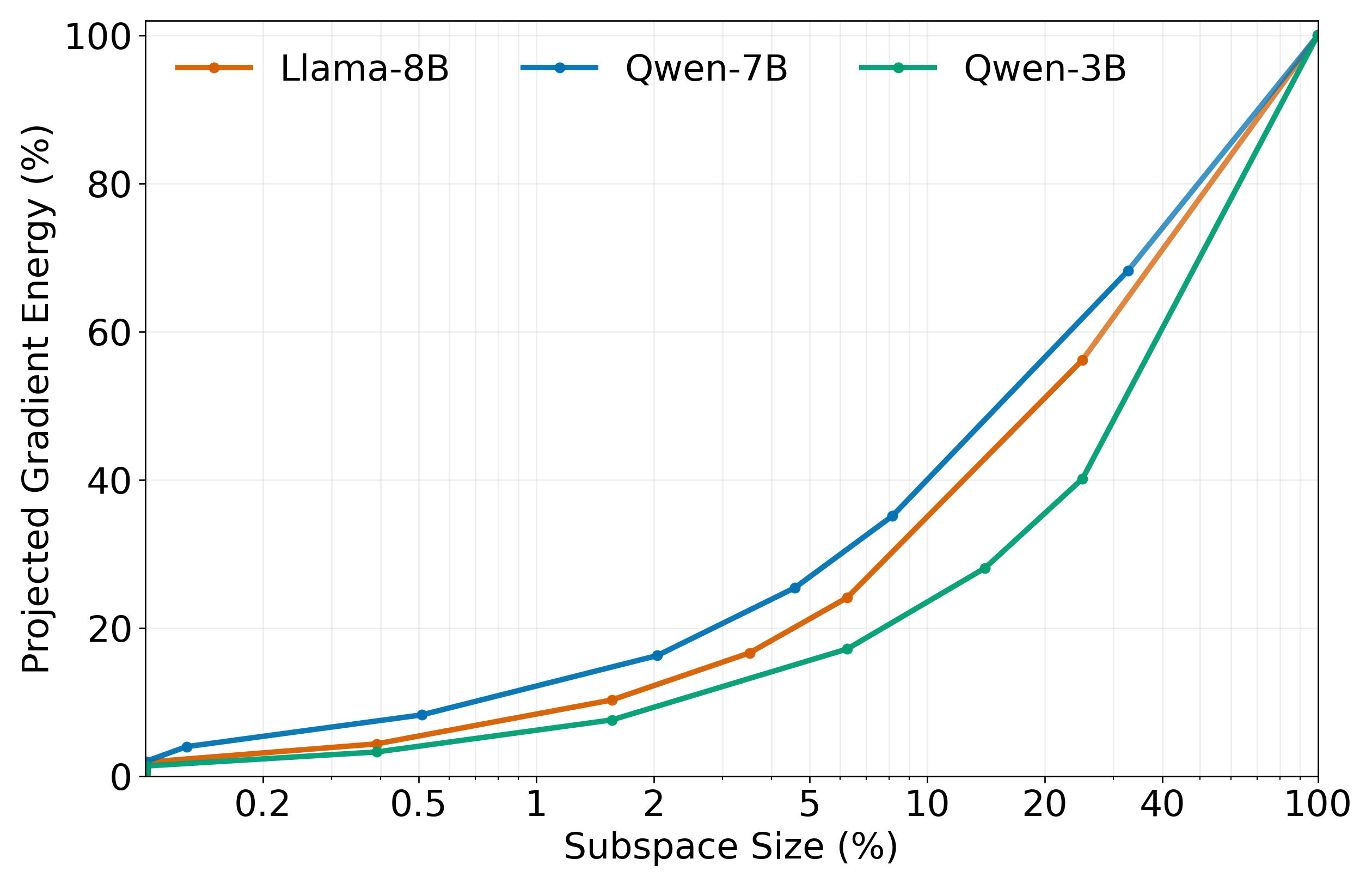}
{\caption{Fisher-projected total sum of squared gradient norms (gradient energy) captured by top-$r\times r$ SVD blocks. The x-axis reports $x(r)= \frac{r^2}{R^2}$, the strict top-$r\times r$ SVD-block size relative to the full $R\times R$ block, and the y-axis reports $y(r)=\frac{\sum_t\|\mathbf{U}_r^\top\mathbf{G}_t\mathbf{V}_r\|_F^2}{\sum_t\|\mathbf{G}_t\|_F^2}$, the Fisher-projected gradient-energy percentage, where $\mathbf{G}_t \in \mathbb{R}^{m \times n}$ denotes the gradient matrix of the $t$-th sample.}
\label{fig:svd-fisher-motivation}}
\vspace{-0.80\baselineskip}
\end{wrapfigure}

The local second-order view suggests that the Hessian is the most direct object for identifying loss-sensitive directions: updates along high-curvature directions can cause large increases in retained capability loss. However, explicitly estimating, storing, or regularizing with the Hessian is infeasible at LLM scale. For the SFT negative log-likelihood objective, the Fisher information provides a tractable positive-semidefinite curvature proxy; near a local optimum, its expected form coincides with the Hessian of the expected SFT loss \citep{martens2020newinsightsperspectivesnatural}, and continual-learning methods such as EWC~\citep{Kirkpatrick_2017} therefore use Fisher information to identify directions that should remain stable. At the same time, the empirical Fisher should not be treated as identical to the Hessian away from the local optimum~\citep{kunstner2019limitationsempiricalfisher}, and even estimating and storing a full Fisher matrix remains too expensive for large language models. This motivates a more practical question: can we find a cheap structural proxy that aligns with Fisher-sensitive directions without explicitly constructing the Fisher matrix?

We answer this question with a Fisher-projected energy diagnostic (Figure~\ref{fig:svd-fisher-motivation}). In the first attention layer, the cumulative projected energy rises quickly for Llama-8B, Qwen-7B, and Qwen-3B: a relatively small strict top-5\% singular block already captures about 20\% of the full Fisher-projected gradient energy. We therefore use rank 768 as the default protected rank in our experiments. Although the pretrained singular basis is not identical to the Fisher eigenspace, its strong overlap with Fisher-projected gradient energy indicates that dominant singular directions provide a useful low-rank structural proxy for loss-sensitive curvature directions. The exact diagnostic is given in Appendix~\ref{app:svd-energy-metric}.

Taken together, the rapid-adaptation versus forgetting-mitigation view, the second-order curvature argument, and the Fisher-overlap evidence motivate our method: preserve the dominant pretrained singular directions while still allowing useful task adaptation.

\section{Method}
\label{sec:method}

Based on the prior analysis in Section ~\ref{sec:motivation}, we propose \textbf{Rotation-Preserving SFT (RPSFT)}. Intuitively, RPSFT treats dominant singular subspaces as carriers of general-purpose transformations and discourages unnecessary rotation during fine-tuning that can harm generalization. This simple regularizer reduces out-of-domain (OOD) capability loss during supervised fine-tuning by discouraging drift within the dominant singular subspace of each weight matrix. All reported RPSFT results use full-parameter fine-tuning (FPFT). We penalize each selected layer equally.

\paragraph{Setup and Notation.}
Let $\theta_0$ be the pretrained model and let $\mathcal{M}$ denote the selected 2D weight matrices. For each pretrained matrix $\mathbf{W}^0\in\mathbb{R}^{m\times n}$, we compute its truncated SVD $\mathbf{W}^0=\mathbf{U}^0\mathbf{\Sigma}^0(\mathbf{V}^0)^\top$ and cache the top-$k$ bases $\mathbf{U}^{(k)}_0\in\mathbb{R}^{m\times k}$ and $\mathbf{V}^{(k)}_0\in\mathbb{R}^{n\times k}$, where $k$ is the protected rank.




\paragraph{Rotation-Preserving Regularization.}
For a fine-tuned weight $\mathbf{W}$, define the projected block $S(\mathbf{W})=(\mathbf{U}^{(k)}_0)^\top\mathbf{W}\mathbf{V}^{(k)}_0$ and the pretrained reference block $S^{\mathrm{ref}}=S(\mathbf{W}^0)$. Since the basis is computed from the pretrained SVD, $S^{\mathrm{ref}}$ is the leading $k\times k$ diagonal block of $\mathbf{\Sigma}^0$. RPSFT anchors how the current weight acts between the pretrained top-$k$ left and right singular subspaces, while leaving complementary blocks free to adapt.

This viewpoint also clarifies the boundary cases. When $k=0$, the penalty vanishes and RPSFT reduces to standard SFT. When $k=\min(m_\ell,n_\ell)$, the penalty becomes full weight-space $\ell_2$ anchoring around the pretrained model~\citep{kumar2024maintainingplasticitycontinuallearning}. We formalize both cases in Appendix~\ref{app:theory}. The practical intuition is that the regularizer suppresses drift along dominant pretrained directions.

In the full-parameter setting, the total loss is:
\begin{equation}
\label{eq:fpft_obj}
\mathcal{L}(\theta)
=
\mathcal{L}_{\text{task}}(\theta)
+
\lambda \sum_{\ell \in \mathcal{M}'} 
\left\|
S_\ell(\mathbf{W}) - S^{\mathrm{ref}}_\ell
\right\|_F^2,
\end{equation}
where $\lambda$ controls the regularization strength. This penalty is zero at initialization and only grows when fine-tuning moves the weight inside the pretrained top-$k$ singular subspace. Appendix Algorithm~\ref{alg:fpft} summarizes the FPFT training procedure. Appendix~\ref{app:lora-objective} gives an optional LoRA formulation.

\paragraph{Computation costs.} 
\label{subsec:cost}
For each regularized matrix $\mathbf{W}^0_\ell \in \mathbb{R}^{m_\ell \times n_\ell}$, we store $\mathbf{U}^{(k)}_{0,\ell}$, $\mathbf{V}^{(k)}_{0,\ell}$ and $S^{\mathrm{ref}}_\ell$:
\begin{equation}
\text{extra memory per layer} \;=\; \mathcal{O}\!\left((m_\ell+n_\ell)k + k^2\right).
\end{equation}
In practice, $k$ is small relative to $m_\ell,n_\ell$, so this overhead is linear in the protected rank and remains manageable under the default bf16 precision.

For each regularized matrix $\mathbf{W}\in\mathbb{R}^{m\times n}$, computing the projected block
$(\mathbf{U}^{(k)}_0)^\top \mathbf{W}\mathbf{V}^{(k)}_0$ adds $O(mnk)$ FLOPs per evaluation, with the same order for backpropagation. If applied every $s$ optimization steps, the amortized cost becomes $O(mnk/s)$ per step, corresponding to a heuristic relative overhead of roughly $\frac{k}{s\min(m,n)}$.


\section{Experiments}

\textbf{Benchmarks and evaluation.} The supervised stage is trained on \href{https://huggingface.co/datasets/wh-zhu/train_openr1_4k}{OpenR1-Math}~\citep{openr1}, and downstream DAPO is initialized from those checkpoints using the \href{https://huggingface.co/datasets/wh-zhu/dapo}{DAPO-Math-17k} RL set. In-domain results cover \href{https://huggingface.co/datasets/math-ai/aime24}{AIME24},
\href{https://huggingface.co/datasets/math-ai/aime25}{AIME25},
\href{https://huggingface.co/datasets/math-ai/amc23}{AMC23},
\href{https://huggingface.co/datasets/HuggingFaceH4/MATH-500}{MATH-500}~\citep{hendrycks2021measuringmathematicalproblemsolving},
\href{https://huggingface.co/datasets/math-ai/minervamath}{Minerva Math}~\citep{lewkowycz2022solvingquantitativereasoningproblems}, and
\href{https://huggingface.co/datasets/math-ai/olympiadbench}{OlympiadBench}~\citep{he2024olympiadbenchchallengingbenchmarkpromoting}, while out-of-domain results cover \href{https://huggingface.co/datasets/Idavidrein/gpqa}{GPQA}~\citep{rein2023gpqagraduatelevelgoogleproofqa},
\href{https://huggingface.co/datasets/google/IFEval}{IFEval-loose}~\citep{zhou2023instructionfollowingevaluationlargelanguage},
\href{https://huggingface.co/datasets/TIGER-Lab/MMLU-Pro}{MMLU-Pro}~\citep{wang2024mmluprorobustchallengingmultitask},
\href{https://huggingface.co/datasets/Maxwell-Jia/SuperGPQA-Astro}{SuperGPQA}~\citep{pteam2025supergpqascalingllmevaluation},
\href{https://huggingface.co/datasets/ThWu/safety_benchmark}{Safety Benchmark}, and
\href{https://huggingface.co/datasets/truthfulqa/truthful_qa}{TruthfulQA}~\citep{lin2022truthfulqameasuringmodelsmimic}. 
The full post-training pipeline and evaluation split are summarized in Figure~\ref{fig:result-overview}.

\textbf{Base models and baselines.}
We evaluate three instruction-tuned base checkpoints spanning two model families and scales: Llama-3.1-8B-Instruct, Qwen2.5-7B-Instruct, and Qwen2.5-3B-Instruct. We report the untuned base checkpoint as a reference and compare RPSFT with standard full-parameter SFT, Dynamic Fine-Tuning (DFT)~\citep{wu2025generalizationsftreinforcementlearning}, and importance-weighted SFT (IW)~\citep{qin2025supervisedfinetuningcurated}. DFT and IW rescale the supervised training gradient by either the token probability or the importance sampling ratio. 


\begin{figure}[htbp]
\centering
\includegraphics[width=1.0\textwidth]{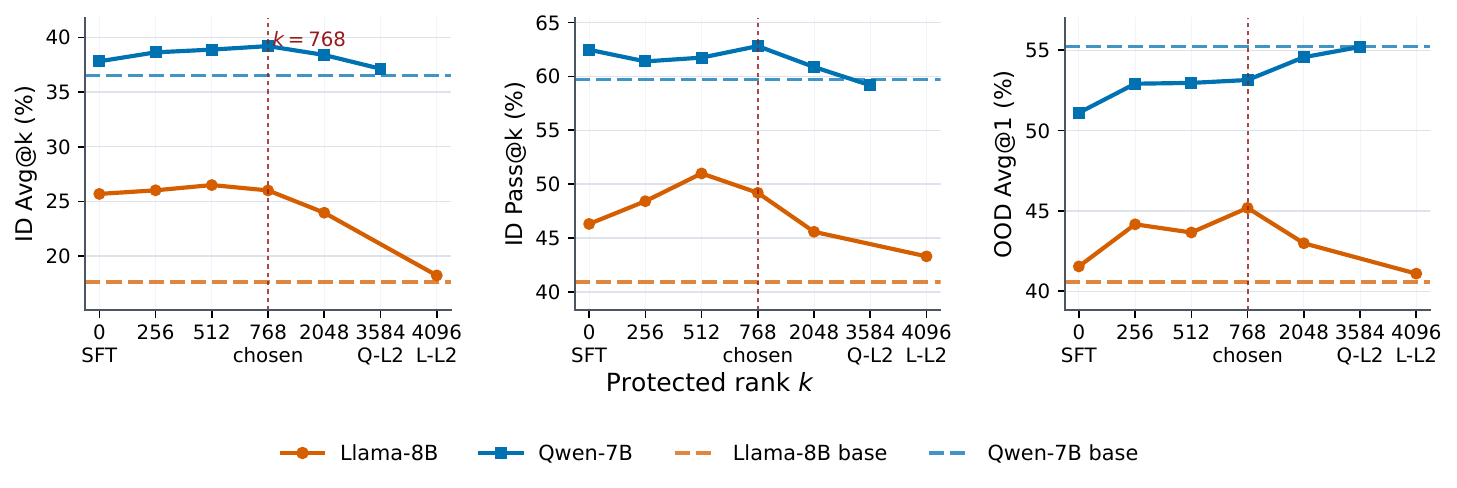}
\caption{\small Rank-selection sensitivity. The x-axis is the protected rank $k$ and the y-axes report ID Avg@k, ID Pass@k, and OOD Avg@1. $0$ is vanilla SFT, $768$ is our default, Q-L2/L-L2 are full-rank protection for Qwen/Llama (equivalent to L2 Init~\citep{kumar2024maintainingplasticitycontinuallearning}), and dashed lines show base-model scores. Scores average six ID math benchmarks and six OOD benchmarks.}

\label{fig:rank-selection-robustness}
\end{figure}


\textbf{Hyperparameters.}
RPSFT has two hyperparameters: the protected rank $k$ (this $k$ should not be confused with the $k$ in avg@k and pass@k) and regularization strength $\lambda$. As shown in Figure~\ref{fig:rank-selection-robustness}, performance is robust in intermediate ranks, while overly large ranks limit adaptation. Unless stated otherwise, we use $k=768$ and $\lambda=1$. Appendix~\ref{app:rank_selection} gives the rank-selection rule and full sweeps.

\textbf{Empirical Computation Costs.} Table~\ref{tab:empirical-cost} summarizes the empirical training costs. Although RPSFT requires more peak memory than SFT and DFT because of the precomputed protected bases and projection buffers, it remains below IW and L2 Init in peak allocated memory. RPSFT uses only $6.0\%$ more GPU-hours than SFT while remaining cheaper than IW and L2 Init.

\begin{figure}[!b]
\centering
\includegraphics[width=\linewidth]{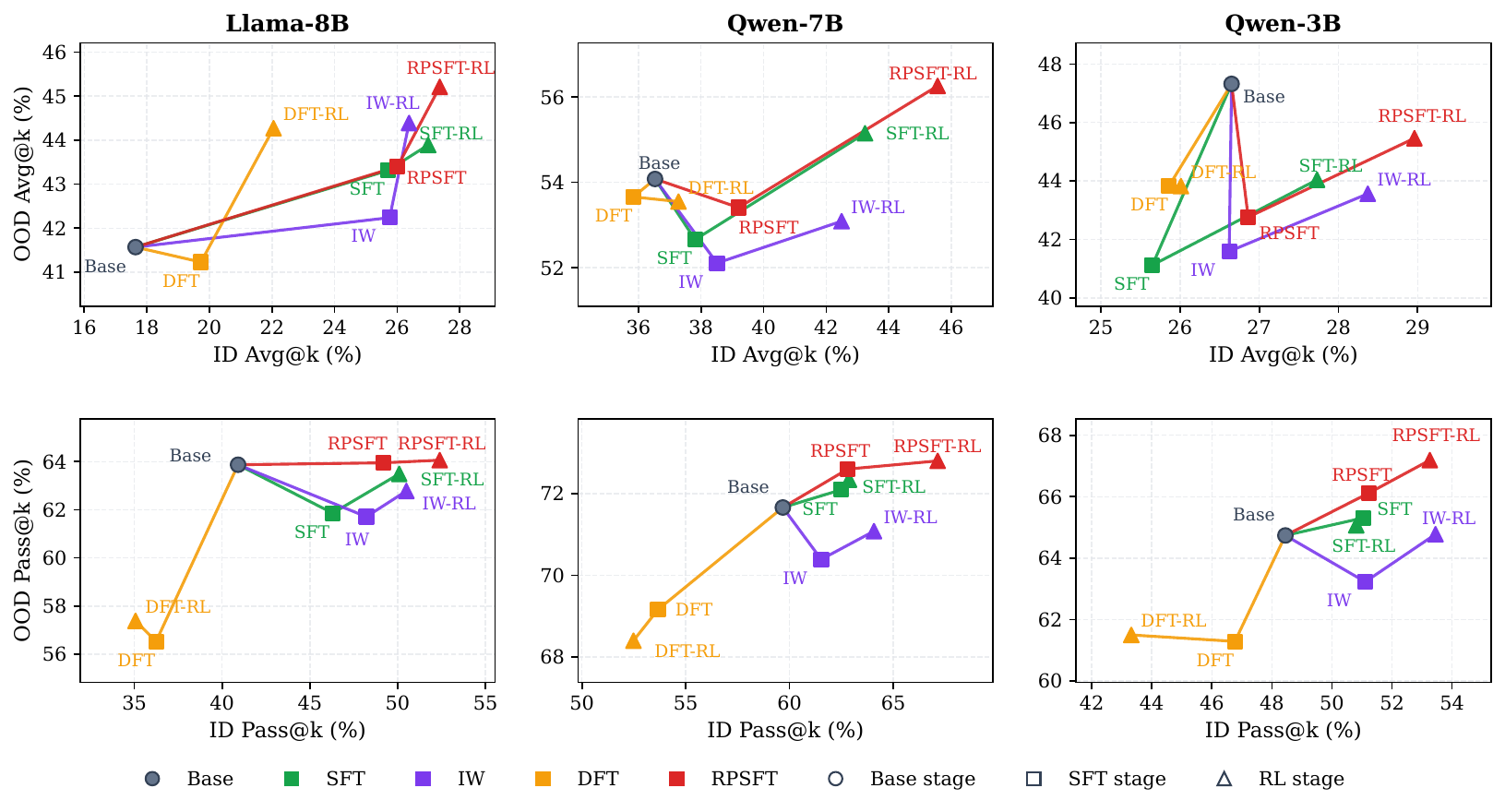}
\caption{ID/OOD trade-off across SFT and RL stages for RPSFT and baselines. The top row compares ID Avg@k with OOD Avg@k, and the bottom row compares ID Pass@k with OOD Pass@k. Scores follow the settings in Figure~\ref{fig:rank-selection-robustness}.}
\label{fig:result-overview}
\end{figure}

\begin{table}[!htbp]
\centering
\caption{\small Empirical computation and memory cost for RPSFT and baselines. Samples/s and Steps/s report training throughput; Peak alloc. and Peak reserved report maximum allocated and CUDA-reserved GPU memory.}
\label{tab:empirical-cost}
\scriptsize
\setlength{\tabcolsep}{4pt}
\renewcommand{\arraystretch}{0.88}
\begin{tabular*}{0.98\textwidth}{@{\extracolsep{\fill}}lrrrrr@{}}
\toprule
Metric & SFT & DFT & IW & L2 Init & RPSFT \\
\midrule
GPU-hours & 156.3 & 159.0 & 181.8 & 174.1 & 165.7 \\
Samples/s & 4.368 & 4.294 & 3.755 & 3.922 & 4.119 \\
Steps/s & 0.273 & 0.268 & 0.235 & 0.245 & 0.257 \\
Peak alloc. & 32.4 & 40.0 & 59.3 & 60.9 & 52.7 \\
Peak reserved & 125.4 & 128.1 & 137.0 & 136.6 & 135.2 \\
\bottomrule
\end{tabular*}
\vspace{-0.8em}
\end{table}

\subsection{Results on Supervised Fine-tuning}
\textbf{RPSFT improves generalization when possible and reduces forgetting otherwise.} Table~\ref{tab:sft-merged} shows that RPSFT gives the best tuned ID Avg@k on all three checkpoints and the best tuned ID Pass@k overall. On OOD Pass@k, RPSFT exceeds the base checkpoint for all three models, indicating improved generalization after SFT. For OOD Avg@k, RPSFT improves over the base on Llama-3.1-8B, while on the two Qwen models it reduces OOD degradation relative to vanilla SFT. Although DFT has slightly less OOD Avg@k forgetting on Qwen, it does so by underfitting the ID task: its ID averages are consistently worse than RPSFT and even below the base model in several Qwen summary rows. Thus, RPSFT provides a better ID/OOD trade-off: it preserves or improves OOD behavior without sacrificing task adaptation.
 
\textbf{First-order transfer explains when SFT generalizes or forgets.}
Figure~\ref{fig:first-order-signal} shows that Llama-8B has mostly positive first-order alignment (Section~\ref{sec:motivation}) on both ID and OOD data, explaining why SFT can improve ID performance while also transferring to OOD. In contrast, the Qwen models have weaker and sign-changing OOD first-order signals: although they still adapt to the ID task, OOD updates are less consistently descent-aligned, making them more vulnerable to curvature-driven forgetting. The metric is defined in Appendix~\ref{app:first-order-signal}.\looseness=-1

\begin{table}[t]
\centering
\scriptsize
\setlength{\tabcolsep}{3pt}
\renewcommand{\arraystretch}{0.95}
\caption{\textbf{SFT results on in-domain and out-of-domain benchmarks.} ``$\Delta$ vs.\ Base'' rows show change relative to the base model. Bold marks the strongest tuned result in the same block, and summary-average rows or columns are marked as \MeanAvg{}.
For the in-domain benchmarks, we use the same benchmark-specific $k$ for both Avg@k and Pass@k: $k=16$ for AIME24, AIME25, and AMC23, $k=4$ for MATH-500 and OlympiadBench, and $k=8$ for Minerva Math. For out-of-domain benchmarks, we use $k=4$ and report both Avg@k and Pass@k.}
\label{tab:sft-merged}
\resizebox{\textwidth}{!}{%
\begin{tabular}{cl S S S S S S S S S S S S S S S}
\toprule
& & \multicolumn{5}{c}{\textbf{Llama-3.1-8B-Instruct}} & \multicolumn{5}{c}{\textbf{Qwen2.5-7B-Instruct}} & \multicolumn{5}{c}{\textbf{Qwen2.5-3B-Instruct}} \\
\cmidrule(lr){3-7}\cmidrule(lr){8-12}\cmidrule(lr){13-17}
Domain & Benchmark & {Base} & {SFT} & {IW} & {DFT} & {RPSFT} & {Base} & {SFT} & {IW} & {DFT} & {RPSFT} & {Base} & {SFT} & {IW} & {DFT} & {RPSFT} \\
\midrule
\multirow{8}{*}{\rotatebox[origin=c]{90}{\textbf{ID Avg@k}}} & AIME24 & 3.13 & \bfseries 5.00 & 3.13 & 4.17 & 4.38 & 12.08 & \bfseries 15.21 & 14.17 & 12.29 & 14.38 & 4.79 & 3.96 & \bfseries 6.67 & 2.71 & 6.46 \\
& AIME25 & 0.83 & 5.00 & \bfseries 6.88 & 1.88 & 6.67 & 7.70 & \bfseries 19.38 & 18.54 & 9.38 & 18.75 & 2.29 & 5.00 & \bfseries 6.04 & 4.79 & 5.00 \\
& AMC23 & 21.25 & 32.19 & \bfseries 33.59 & 19.53 & 31.56 & 52.03 & 50.47 & 52.50 & 48.91 & \bfseries 55.31 & 37.03 & 35.00 & \bfseries 36.25 & 34.06 & 34.84 \\
& MATH-500 & 43.65 & 59.45 & 58.50 & 46.40 & \bfseries 60.45 & 72.85 & 72.85 & 72.60 & 73.35 & \bfseries 74.95 & 63.35 & 60.95 & 61.05 & 62.05 & \bfseries 63.35 \\
& Minerva & 22.33 & 26.88 & 26.75 & 20.17 & \bfseries 27.48 & 37.09 & 31.52 & \bfseries 34.05 & 33.73 & 33.82 & 26.24 & 22.75 & 23.71 & \bfseries 26.56 & 23.85 \\
& Olympiad & 14.63 & 25.59 & 25.71 & \bfseries 26.22 & 25.52 & 37.41 & 37.41 & \bfseries 39.18 & 37.41 & 38.00 & 26.22 & 26.25 & 26.00 & 25.00 & \bfseries 27.67 \\
& \MeanAvg{} & 17.64 & 25.70 & 25.76 & 19.73 & \bfseries 26.01 & 36.53 & 37.81 & 38.51 & 35.84 & \bfseries 39.20 & 26.65 & 25.65 & 26.62 & 25.86 & \bfseries 26.86 \\
& {$\Delta$ vs.\ Base} & \multicolumn{1}{c}{\NA} & \multicolumn{1}{c}{$\uparrow$8.06} & \multicolumn{1}{c}{$\uparrow$8.12} & \multicolumn{1}{c}{$\uparrow$2.09} & \multicolumn{1}{c}{\textbf{$\uparrow$8.37}} & \multicolumn{1}{c}{\NA} & \multicolumn{1}{c}{$\uparrow$1.28} & \multicolumn{1}{c}{$\uparrow$1.98} & \multicolumn{1}{c}{$\downarrow$0.69} & \multicolumn{1}{c}{\textbf{$\uparrow$2.67}} & \multicolumn{1}{c}{\NA} & \multicolumn{1}{c}{$\downarrow$1.00} & \multicolumn{1}{c}{$\downarrow$0.03} & \multicolumn{1}{c}{$\downarrow$0.79} & \multicolumn{1}{c}{\textbf{$\uparrow$0.21}} \\
\midrule
\multirow{8}{*}{\rotatebox[origin=c]{90}{\textbf{ID Pass@k}}} & AIME24 & 20.00 & \bfseries 20.00 & 16.67 & 13.33 & \bfseries 20.00 & 36.67 & \bfseries 50.00 & 46.67 & 23.33 & 43.33 & 20.00 & 30.00 & 26.66 & 23.33 & \bfseries 33.33 \\
& AIME25 & 13.33 & 13.33 & \bfseries 26.67 & 16.67 & 20.00 & 36.67 & 36.67 & 36.67 & 33.33 & \bfseries 43.33 & 20.00 & 23.33 & \bfseries 30.00 & 23.33 & 23.33 \\
& AMC23 & 72.50 & 77.50 & 80.00 & 62.50 & \bfseries 85.00 & 90.00 & 87.50 & 85.00 & 82.50 & \bfseries 90.00 & 80.00 & \bfseries 82.50 & \bfseries 82.50 & 77.50 & 80.00 \\
& MATH-500 & 65.00 & \bfseries 78.60 & 76.20 & 61.20 & 77.20 & 84.60 & 87.60 & \bfseries 89.00 & 84.20 & 87.40 & 79.20 & \bfseries 80.60 & 79.60 & 75.80 & 79.80 \\
& Minerva & 44.85 & 47.06 & 48.16 & 37.50 & \bfseries 50.74 & 55.17 & \bfseries 58.09 & 55.51 & 48.53 & 56.62 & 50.00 & \bfseries 47.42 & 45.22 & 42.28 & 47.06 \\
& Olympiad & 29.78 & 41.33 & 41.63 & 26.22 & \bfseries 42.22 & 54.96 & 54.96 & \bfseries 56.30 & 50.07 & 56.15 & 41.48 & 42.37 & 42.67 & 38.37 & \bfseries 43.85 \\
& \MeanAvg{} & 40.91 & 46.30 & 48.22 & 36.24 & \bfseries 49.19 & 59.68 & 62.47 & 61.53 & 53.66 & \bfseries 62.81 & 48.45 & 51.04 & 51.11 & 46.77 & \bfseries 51.23 \\
& {$\Delta$ vs.\ Base} & \multicolumn{1}{c}{\NA} & \multicolumn{1}{c}{$\uparrow$5.39} & \multicolumn{1}{c}{$\uparrow$7.31} & \multicolumn{1}{c}{$\downarrow$4.67} & \multicolumn{1}{c}{\textbf{$\uparrow$8.28}} & \multicolumn{1}{c}{\NA} & \multicolumn{1}{c}{$\uparrow$2.79} & \multicolumn{1}{c}{$\uparrow$1.85} & \multicolumn{1}{c}{$\downarrow$6.02} & \multicolumn{1}{c}{\textbf{$\uparrow$3.13}} & \multicolumn{1}{c}{\NA} & \multicolumn{1}{c}{$\uparrow$2.59} & \multicolumn{1}{c}{$\uparrow$2.66} & \multicolumn{1}{c}{$\downarrow$1.68} & \multicolumn{1}{c}{\textbf{$\uparrow$2.78}} \\
\midrule
\multirow{8}{*}{\rotatebox[origin=c]{90}{\textbf{OOD Avg@k}}} & GPQA & 27.62 & 32.70 & \bfseries 34.32 & 26.79 & 33.48 & 32.76 & \bfseries 35.66 & 34.82 & 30.47 & 34.21 & 27.18 & 27.73 & \bfseries 29.41 & 26.45 & 28.13 \\
& IFEval & 32.30 & \bfseries 32.58 & 31.70 & 31.84 & 31.93 & 64.23 & 54.48 & 55.91 & \bfseries 61.18 & 58.32 & 58.09 & 47.97 & 47.92 & \bfseries 52.59 & 49.72 \\
& MMLU-Pro & 50.71 & \bfseries 56.07 & 52.86 & 50.00 & 52.86 & 62.50 & 64.64 & 64.64 & 66.07 & \bfseries 68.21 & 50.36 & 40.71 & 42.14 & 40.71 & \bfseries 43.93 \\
& SuperGPQA & 19.88 & \bfseries 24.38 & 22.28 & 17.72 & 23.77 & 25.62 & \bfseries 28.46 & 25.99 & 25.25 & 27.84 & 21.36 & 19.81 & 19.38 & 19.26 & \bfseries 21.67 \\
& Safety & 59.43 & 57.83 & \bfseries 62.08 & 60.45 & 60.55 & 73.00 & 70.30 & 69.80 & \bfseries 72.35 & 70.45 & 65.90 & 59.98 & 60.00 & \bfseries 66.03 & 60.83 \\
& TruthfulQA & 59.50 & 56.29 & 50.18 & \bfseries 60.56 & 57.82 & 66.37 & 62.43 & 61.48 & \bfseries 66.63 & 61.40 & 61.04 & 50.55 & 50.69 & \bfseries 57.93 & 52.30 \\
& \MeanAvg{} & 41.57 & 43.31 & 42.24 & 41.23 & \bfseries 43.40 & 54.08 & 52.66 & 52.11 & \bfseries 53.66 & 53.41 & 47.32 & 41.12 & 41.59 & \bfseries 43.83 & 42.76 \\
& {$\Delta$ vs.\ Base} & \multicolumn{1}{c}{\NA} & \multicolumn{1}{c}{$\uparrow$1.73} & \multicolumn{1}{c}{$\uparrow$0.66} & \multicolumn{1}{c}{$\downarrow$0.35} & \multicolumn{1}{c}{\textbf{$\uparrow$1.83}} & \multicolumn{1}{c}{\NA} & \multicolumn{1}{c}{$\downarrow$1.42} & \multicolumn{1}{c}{$\downarrow$1.97} & \multicolumn{1}{c}{\textbf{$\downarrow$0.42}} & \multicolumn{1}{c}{$\downarrow$0.67} & \multicolumn{1}{c}{\NA} & \multicolumn{1}{c}{$\downarrow$6.20} & \multicolumn{1}{c}{$\downarrow$5.73} & \multicolumn{1}{c}{\textbf{$\downarrow$3.49}} & \multicolumn{1}{c}{$\downarrow$4.56} \\
\midrule
\multirow{8}{*}{\rotatebox[origin=c]{90}{\textbf{OOD Pass@k}}} & GPQA & 61.38 & 65.40 & \bfseries 69.20 & 57.14 & 68.08 & 59.82 & \bfseries 65.63 & 64.29 & 58.04 & 65.18 & 54.46 & 61.61 & \bfseries 65.40 & 55.13 & 62.05 \\
& IFEval & 35.86 & 35.49 & 35.30 & 34.57 & \bfseries 35.67 & 76.71 & 69.50 & 69.50 & 73.01 & \bfseries 73.38 & 70.61 & 63.03 & 62.85 & \bfseries 65.62 & 65.25 \\
& MMLU-Pro & 81.43 & 75.71 & 77.14 & 75.71 & \bfseries 81.43 & 81.43 & 80.00 & 78.57 & 80.00 & \bfseries 84.29 & 72.86 & 70.00 & 68.57 & 60.00 & \bfseries 72.86 \\
& SuperGPQA & 48.15 & \bfseries 54.81 & 49.14 & 38.02 & 53.58 & 49.38 & 53.83 & 51.11 & 45.93 & \bfseries 54.57 & 49.63 & 45.68 & 43.70 & 44.94 & \bfseries 48.89 \\
& Safety & 76.60 & 67.80 & \bfseries 73.10 & 64.60 & 68.60 & 84.10 & \bfseries 83.30 & 80.80 & 78.30 & 79.80 & 69.10 & \bfseries 80.20 & 69.70 & 68.20 & 76.20 \\
& TruthfulQA & 79.82 & 71.93 & 66.37 & 69.15 & \bfseries 76.32 & 78.51 & \bfseries 80.26 & 78.07 & 79.68 & 78.36 & 71.78 & 71.35 & 69.15 & \bfseries 73.83 & 71.49 \\
& \MeanAvg{} & 63.87 & 61.86 & 61.71 & 56.53 & \bfseries 63.95 & 71.66 & 72.09 & 70.39 & 69.16 & \bfseries 72.60 & 64.74 & 65.31 & 63.23 & 61.29 & \bfseries 66.12 \\
& {$\Delta$ vs.\ Base} & \multicolumn{1}{c}{\NA} & \multicolumn{1}{c}{$\downarrow$2.02} & \multicolumn{1}{c}{$\downarrow$2.17} & \multicolumn{1}{c}{$\downarrow$7.34} & \multicolumn{1}{c}{\textbf{$\uparrow$0.07}} & \multicolumn{1}{c}{\NA} & \multicolumn{1}{c}{$\uparrow$0.43} & \multicolumn{1}{c}{$\downarrow$1.27} & \multicolumn{1}{c}{$\downarrow$2.50} & \multicolumn{1}{c}{\textbf{$\uparrow$0.94}} & \multicolumn{1}{c}{\NA} & \multicolumn{1}{c}{$\uparrow$0.57} & \multicolumn{1}{c}{$\downarrow$1.51} & \multicolumn{1}{c}{$\downarrow$3.45} & \multicolumn{1}{c}{\textbf{$\uparrow$1.38}} \\
\bottomrule
\end{tabular}}
\renewcommand{\arraystretch}{1}
\end{table}

\FloatBarrier

\begin{figure}[!t]
\centering
\includegraphics[width=1.0\textwidth]{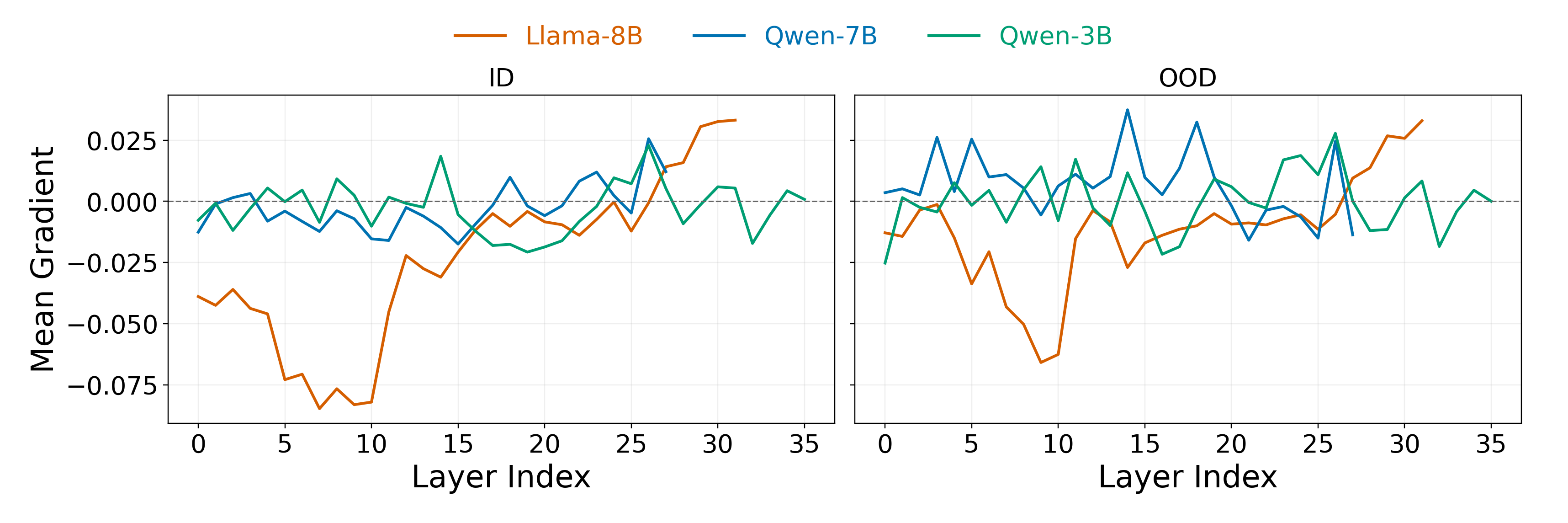}
\caption{Layerwise first-order signal on the in-domain (left) and out-of-domain (right) distributions across Llama-8B, Qwen-7B, and Qwen-3B.}
\label{fig:first-order-signal}
\end{figure}

\subsection{Results on RL Fine-Tuning}

\begin{table}[t]
\centering
\scriptsize
\setlength{\tabcolsep}{3.7pt}
\caption{\textbf{Downstream RL (DAPO) results on the six ID math benchmarks}, following Table~\ref{tab:sft-merged}'s notation. Avg@k and Pass@k blocks use init$\rightarrow$DAPO entries; bold marks the highest final value among methods.}
\label{tab:rl-id-avg}
\begin{tabular}{ll c c c c c c c}
\toprule
Metric & Method & {AIME24} & {AIME25} & {AMC23} & {MATH-500} & {Minerva} & {Olympiad} & {\MeanAvg} \\
\midrule
\multicolumn{9}{l}{\textbf{Llama-3.1-8B-Instruct}}\\
\midrule
\multirow{4}{*}{\textbf{Avg@k}} & SFT & \stagebest{5.00}{6.46} & \stagepair{5.00}{5.42} & \stagebest{32.19}{34.38} & \stagebest{59.45}{61.05} & \stagebest{26.88}{29.09} & \stagepair{25.59}{25.56} & \stagepair{25.70}{26.99} \\
& IW & \stagepair{3.13}{5.21} & \stagepair{6.88}{6.46} & \stagepair{33.59}{32.81} & \stagepair{58.50}{59.60} & \stagepair{26.75}{28.77} & \stagepair{25.71}{25.44} & \stagepair{25.76}{26.38} \\
& DFT & \stagepair{4.17}{3.13} & \stagepair{1.88}{0.83} & \stagepair{19.53}{24.53} & \stagepair{46.40}{48.50} & \stagepair{20.17}{22.98} & \stagepair{26.22}{17.41} & \stagepair{19.73}{22.05} \\
& RPSFT & \stagepair{4.38}{6.25} & \stagebest{6.67}{7.92} & \stagepair{31.56}{33.59} & \stagepair{60.45}{60.80} & \stagepair{27.48}{28.54} & \stagebest{25.52}{27.03} & \stagebest{26.01}{27.36} \\
\midrule
\multirow{4}{*}{\textbf{Pass@k}} & SFT & \stagepair{20.00}{20.00} & \stagepair{13.33}{26.67} & \stagepair{77.50}{82.50} & \stagepair{78.60}{77.80} & \stagepair{47.06}{51.47} & \stagepair{41.33}{42.07} & \stagepair{46.30}{50.09} \\
& IW & \stagepair{16.67}{23.33} & \stagebest{26.67}{30.00} & \stagepair{80.00}{77.50} & \stagepair{76.20}{78.40} & \stagepair{48.16}{51.84} & \stagepair{41.63}{41.93} & \stagepair{48.22}{50.50} \\
& DFT & \stagepair{13.33}{10.00} & \stagepair{16.67}{10.00} & \stagepair{62.50}{60.00} & \stagepair{61.20}{63.40} & \stagepair{37.50}{38.60} & \stagepair{26.22}{28.44} & \stagepair{36.24}{35.07} \\
& RPSFT & \stagebest{20.00}{30.00} & \stagepair{20.00}{23.33} & \stagebest{85.00}{85.00} & \stagebest{77.20}{79.40} & \stagebest{50.74}{52.94} & \stagebest{42.22}{43.70} & \stagebest{49.19}{52.40} \\
\midrule
\multicolumn{9}{l}{\textbf{Qwen2.5-7B-Instruct}}\\
\midrule
\multirow{4}{*}{\textbf{Avg@k}} & SFT & \stagepair{15.21}{18.75} & \stagepair{19.38}{20.83} & \stagepair{50.47}{58.12} & \stagepair{72.85}{79.35} & \stagepair{31.52}{38.97} & \stagepair{37.41}{43.44} & \stagepair{37.81}{43.24} \\
& IW & \stagepair{14.17}{17.29} & \stagepair{18.54}{20.00} & \stagepair{52.50}{60.00} & \stagepair{72.60}{77.95} & \stagepair{34.05}{36.76} & \stagepair{39.18}{42.96} & \stagepair{38.51}{42.49} \\
& DFT & \stagepair{12.29}{11.04} & \stagepair{9.38}{12.29} & \stagepair{48.91}{53.13} & \stagepair{73.35}{74.50} & \stagepair{33.73}{35.94} & \stagepair{37.41}{36.81} & \stagepair{35.84}{37.28} \\
& RPSFT & \stagebest{14.38}{21.25} & \stagebest{18.75}{22.71} & \stagebest{55.31}{62.19} & \stagebest{74.95}{80.50} & \stagebest{33.82}{39.94} & \stagebest{38.00}{46.78} & \stagebest{39.20}{45.56} \\
\midrule
\multirow{4}{*}{\textbf{Pass@k}} & SFT & \stagepair{50.00}{46.67} & \stagepair{36.67}{33.33} & \stagepair{87.50}{90.00} & \stagepair{87.60}{89.60} & \stagepair{58.09}{58.09} & \stagepair{54.96}{59.55} & \stagepair{62.47}{62.87} \\
& IW & \stagepair{46.67}{46.66} & \stagebest{36.67}{43.33} & \stagepair{85.00}{92.50} & \stagepair{89.00}{88.00} & \stagepair{55.51}{55.14} & \stagepair{56.30}{58.81} & \stagepair{61.53}{64.07} \\
& DFT & \stagepair{23.33}{23.33} & \stagepair{33.33}{23.33} & \stagepair{82.50}{82.50} & \stagepair{84.20}{84.20} & \stagepair{48.53}{52.94} & \stagepair{50.07}{48.59} & \stagepair{53.66}{52.48} \\
& RPSFT & \stagebest{43.33}{53.33} & \stagebest{43.33}{43.33} & \stagebest{90.00}{95.00} & \stagebest{87.40}{90.20} & \stagebest{56.62}{59.19} & \stagebest{56.15}{61.78} & \stagebest{62.81}{67.14} \\
\midrule
\multicolumn{9}{l}{\textbf{Qwen2.5-3B-Instruct}}\\
\midrule
\multirow{4}{*}{\textbf{Avg@k}} & SFT & \stagepair{3.96}{4.79} & \stagepair{5.00}{5.42} & \stagepair{35.00}{37.50} & \stagepair{60.95}{65.90} & \stagepair{22.75}{25.32} & \stagepair{26.25}{27.44} & \stagepair{25.65}{27.73} \\
& IW & \stagepair{6.67}{6.25} & \stagebest{6.04}{6.25} & \stagebest{36.25}{40.94} & \stagepair{61.05}{64.65} & \stagepair{23.71}{24.54} & \stagepair{26.00}{27.59} & \stagepair{26.62}{28.37} \\
& DFT & \stagepair{2.71}{2.50} & \stagepair{4.79}{4.38} & \stagepair{34.06}{39.53} & \stagepair{62.05}{60.70} & \stagepair{26.56}{23.67} & \stagepair{25.00}{25.30} & \stagepair{25.86}{26.01} \\
& RPSFT & \stagebest{6.46}{6.46} & \stagepair{5.00}{6.04} & \stagepair{34.84}{40.00} & \stagebest{63.35}{65.75} & \stagebest{23.85}{26.47} & \stagebest{27.67}{29.04} & \stagebest{26.86}{28.96} \\
\midrule
\multirow{4}{*}{\textbf{Pass@k}} & SFT & \stagepair{30.00}{23.33} & \stagebest{23.33}{33.33} & \stagepair{82.50}{75.00} & \stagebest{80.60}{81.00} & \stagepair{47.42}{47.79} & \stagepair{42.37}{44.44} & \stagepair{51.04}{50.81} \\
& IW & \stagebest{26.66}{33.33} & \stagebest{30.00}{33.33} & \stagebest{82.50}{85.00} & \stagepair{79.60}{80.40} & \stagepair{45.22}{45.96} & \stagepair{42.67}{42.66} & \stagebest{51.11}{53.45} \\
& DFT & \stagepair{23.33}{13.33} & \stagepair{23.33}{26.67} & \stagepair{77.50}{72.50} & \stagepair{75.80}{70.80} & \stagepair{42.28}{39.71} & \stagepair{38.37}{36.89} & \stagepair{46.77}{43.32} \\
& RPSFT & \stagepair{33.33}{30.00} & \stagepair{23.33}{30.00} & \stagepair{80.00}{82.50} & \stagepair{79.80}{80.00} & \stagebest{47.06}{50.37} & \stagebest{43.85}{46.67} & \stagepair{51.23}{53.26} \\
\bottomrule
\end{tabular}
\end{table}


\textbf{Stronger SFT initializers transfer better to downstream RL.} Table~\ref{tab:rl-id-avg} shows that RPSFT gives the best final ID Avg@k and Pass@k on Llama-3.1-8B and Qwen2.5-7B. On Qwen2.5-3B, RPSFT still gives the best final Avg@k, while IW is marginally higher on Pass@k (53.45 vs.\ 53.26). Thus, the ID/OOD trade-off improved by RPSFT at the SFT stage generally carries into RLFT rather than being washed out by DAPO.
\begin{table}[!htbp]
\centering
\scriptsize
\setlength{\tabcolsep}{3.2pt}
\renewcommand{\arraystretch}{0.95}
\caption{\textbf{Downstream RL (DAPO) results on the six OOD benchmarks}, following Table~\ref{tab:sft-merged}'s notation. Avg@k and Pass@k blocks use init$\rightarrow$DAPO entries with $k=4$; final checkpoints are selected by ID performance.}

\label{tab:rl-ood}
\begin{tabular}{ll c c c c c c c}
\toprule
Metric & Method & {GPQA} & {IFEval} & {MMLU-Pro} & {SuperGPQA} & {Safety} & {TruthfulQA} & {\MeanAvg} \\
\midrule
\multicolumn{9}{l}{\textbf{Llama-3.1-8B-Instruct}}\\
\midrule
\multirow{4}{*}{\textbf{Avg@k}} & SFT & \stagepair{32.70}{35.10} & \stagebest{32.58}{31.79} & \stagepair{56.07}{51.07} & \stagepair{24.38}{26.17} & \stagepair{57.83}{58.85} & \stagepair{56.29}{60.34} & \stagepair{43.31}{43.89} \\
& IW & \stagebest{34.32}{36.72} & \stagepair{31.70}{31.15} & \stagepair{52.86}{55.00} & \stagepair{22.28}{24.94} & \stagebest{62.08}{65.13} & \stagepair{50.18}{53.40} & \stagepair{42.24}{44.39} \\
& DFT & \stagepair{26.79}{28.96} & \stagepair{31.84}{31.01} & \stagebest{50.00}{57.86} & \stagepair{17.72}{18.77} & \stagepair{60.45}{64.50} & \stagebest{60.56}{64.55} & \stagepair{41.23}{44.27} \\
& RPSFT & \stagepair{33.48}{34.54} & \stagepair{31.93}{31.56} & \stagepair{52.86}{55.71} & \stagebest{23.77}{27.41} & \stagepair{60.55}{59.25} & \stagepair{57.82}{62.79} & \stagebest{43.40}{45.21} \\
\midrule
\multirow{4}{*}{\textbf{Pass@k}} & SFT & \stagepair{65.40}{68.08} & \stagebest{35.49}{35.12} & \stagebest{75.71}{80.00} & \stagebest{54.81}{56.79} & \stagepair{67.80}{66.40} & \stagepair{71.93}{74.56} & \stagepair{61.86}{63.49} \\
& IW & \stagebest{69.20}{68.75} & \stagebest{35.30}{35.12} & \stagepair{77.14}{77.14} & \stagepair{49.14}{51.85} & \stagebest{73.10}{74.40} & \stagepair{66.37}{69.44} & \stagepair{61.71}{62.78} \\
& DFT & \stagepair{57.14}{55.36} & \stagepair{34.57}{34.01} & \stagepair{75.71}{72.86} & \stagepair{38.02}{39.75} & \stagepair{64.60}{69.20} & \stagepair{69.15}{73.10} & \stagepair{56.53}{57.38} \\
& RPSFT & \stagepair{68.08}{67.86} & \stagepair{35.67}{34.75} & \stagepair{81.43}{78.57} & \stagepair{53.58}{54.57} & \stagepair{68.60}{71.40} & \stagebest{76.32}{77.19} & \stagebest{63.95}{64.06} \\
\midrule
\multicolumn{9}{l}{\textbf{Qwen2.5-7B-Instruct}}\\
\midrule
\multirow{4}{*}{\textbf{Avg@k}} & SFT & \stagepair{35.66}{37.39} & \stagepair{54.48}{52.82} & \stagepair{64.64}{71.07} & \stagepair{28.46}{32.28} & \stagepair{70.30}{71.90} & \stagepair{62.43}{65.42} & \stagepair{52.66}{55.15} \\
& IW & \stagepair{34.82}{34.26} & \stagepair{55.91}{53.37} & \stagepair{64.64}{67.50} & \stagepair{25.99}{28.51} & \stagepair{69.80}{72.05} & \stagepair{61.48}{62.83} & \stagepair{52.11}{53.09} \\
& DFT & \stagepair{30.47}{33.54} & \stagebest{61.18}{59.10} & \stagepair{66.07}{64.29} & \stagepair{25.25}{24.88} & \stagebest{72.35}{73.18} & \stagepair{66.63}{66.34} & \stagepair{53.66}{53.55} \\
& RPSFT & \stagebest{34.21}{38.00} & \stagepair{58.32}{54.07} & \stagebest{68.21}{72.86} & \stagebest{27.84}{33.33} & \stagepair{70.45}{71.93} & \stagebest{61.40}{67.36} & \stagebest{53.41}{56.26} \\
\midrule
\multirow{4}{*}{\textbf{Pass@k}} & SFT & \stagepair{65.63}{64.73} & \stagepair{69.50}{70.06} & \stagepair{80.00}{84.29} & \stagepair{53.83}{56.05} & \stagepair{83.30}{80.40} & \stagepair{80.26}{78.51} & \stagepair{72.09}{72.34} \\
& IW & \stagepair{64.29}{60.49} & \stagepair{69.50}{68.39} & \stagebest{78.57}{85.71} & \stagepair{51.11}{54.32} & \stagebest{80.80}{81.10} & \stagepair{78.07}{76.46} & \stagepair{70.39}{71.08} \\
& DFT & \stagepair{58.04}{61.61} & \stagebest{73.01}{71.35} & \stagepair{80.00}{78.57} & \stagepair{45.93}{45.43} & \stagepair{78.30}{77.10} & \stagepair{79.68}{76.32} & \stagepair{69.16}{68.40} \\
& RPSFT & \stagebest{65.18}{66.52} & \stagepair{73.38}{70.43} & \stagepair{84.29}{82.86} & \stagebest{54.57}{58.02} & \stagepair{79.80}{79.70} & \stagebest{78.36}{79.24} & \stagebest{72.60}{72.80} \\
\midrule
\multicolumn{9}{l}{\textbf{Qwen2.5-3B-Instruct}}\\
\midrule
\multirow{4}{*}{\textbf{Avg@k}} & SFT & \stagepair{27.73}{27.12} & \stagepair{47.97}{46.77} & \stagepair{40.71}{50.71} & \stagebest{19.81}{22.84} & \stagepair{59.98}{63.63} & \stagepair{50.55}{53.18} & \stagepair{41.12}{44.04} \\
& IW & \stagepair{29.41}{29.13} & \stagepair{47.92}{47.92} & \stagepair{42.14}{48.93} & \stagepair{19.38}{20.93} & \stagepair{60.00}{61.25} & \stagepair{50.69}{53.22} & \stagepair{41.59}{43.56} \\
& DFT & \stagepair{26.45}{26.90} & \stagebest{52.59}{52.73} & \stagepair{40.71}{41.07} & \stagepair{19.26}{19.94} & \stagebest{66.03}{63.95} & \stagebest{57.93}{58.33} & \stagepair{43.83}{43.82} \\
& RPSFT & \stagebest{28.13}{31.64} & \stagepair{49.72}{49.40} & \stagebest{43.93}{51.43} & \stagepair{21.67}{22.41} & \stagepair{60.83}{63.15} & \stagepair{52.30}{54.75} & \stagebest{42.76}{45.46} \\
\midrule
\multirow{4}{*}{\textbf{Pass@k}} & SFT & \stagepair{61.61}{57.37} & \stagepair{63.03}{63.59} & \stagepair{70.00}{71.43} & \stagebest{45.68}{48.64} & \stagebest{80.20}{78.90} & \stagepair{71.35}{70.47} & \stagepair{65.31}{65.07} \\
& IW & \stagepair{65.40}{59.60} & \stagepair{62.85}{63.22} & \stagebest{68.57}{75.71} & \stagepair{43.70}{47.41} & \stagepair{69.70}{73.00} & \stagepair{69.15}{69.73} & \stagepair{63.23}{64.78} \\
& DFT & \stagepair{55.13}{52.45} & \stagebest{65.62}{67.84} & \stagepair{60.00}{64.29} & \stagepair{44.94}{42.47} & \stagepair{68.20}{66.80} & \stagebest{73.83}{75.15} & \stagepair{61.29}{61.50} \\
& RPSFT & \stagebest{62.05}{65.18} & \stagepair{65.25}{63.40} & \stagebest{72.86}{75.71} & \stagebest{48.89}{48.64} & \stagepair{76.20}{77.10} & \stagepair{71.49}{73.10} & \stagebest{66.12}{67.19} \\
\bottomrule
\end{tabular}
\renewcommand{\arraystretch}{1}
\end{table}

\looseness=-1
\textbf{RPSFT gives the strongest final OOD RL performance.} Table~\ref{tab:rl-ood} shows that RPSFT achieves the best final OOD Avg@k and Pass@k summary averages across three models. IW often has a larger init$\rightarrow$DAPO gain, suggesting that RL can partially recover OOD behavior from a weaker SFT initializer. That said, RPSFT usually starts from a stronger OOD checkpoint and remains best after RL, indicating that preserving pretrained singular structure provides a better OOD initialization.

\FloatBarrier
\subsection{Geometric and Representation Drift}
RPSFT is designed to preserve dominant pretrained structure, so we evaluate drift at both the weight and representation levels. First, we measure how much fine-tuning rotates the dominant left-singular subspaces of pretrained weights across architectures; smaller values indicate that the tuned model remains closer to the pretrained model. We then test whether this weight-space stability carries over to hidden representations by comparing each tuned model's hidden states against the base model. The rotation and hidden-state drift metrics are defined in Appendices~\ref{app:rotation-metrics} and~\ref{app:hidden-drift}.

\begin{figure}[!htbp]
\centering
\includegraphics[width=0.98\textwidth]{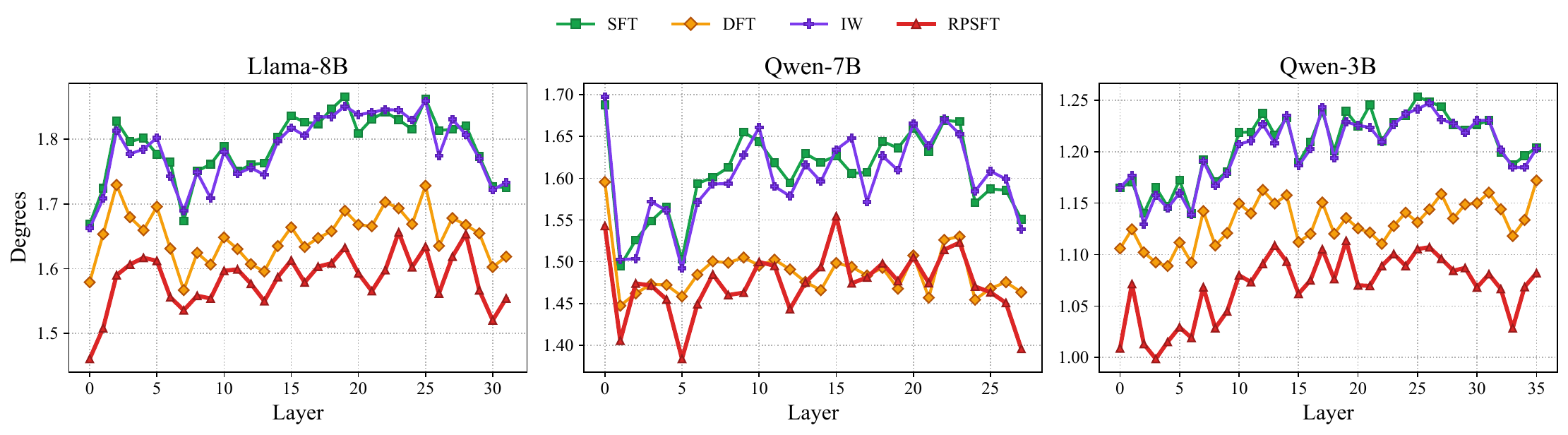}
\vspace{-0.6em}
\caption{Layerwise U-space rotation across three models. Each panel averages the principal-angle rotation over \texttt{q\_proj}, \texttt{k\_proj}, \texttt{v\_proj}, \texttt{o\_proj}, \texttt{up\_proj}, \texttt{down\_proj}, and \texttt{gate\_proj}.}
\label{fig:rotation-layerwise}
\vspace{-0.6em}
\end{figure}

\begin{figure}[htbp]
\centering
\includegraphics[width=0.98\textwidth]{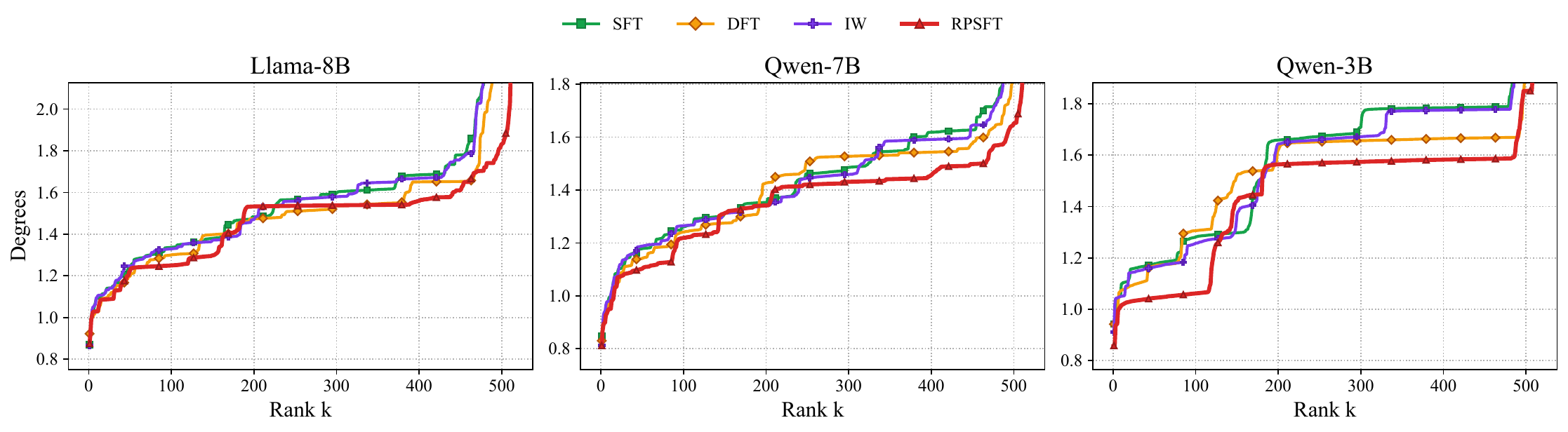}
\caption{Rankwise U-space rotation for one representative weight matrix in each model. The x-axis increases the prefix rank from 0 to 512, and the y-axis reports the mean principal-angle rotation of the first $k$ left singular vectors.}
\label{fig:rotation-rankwise}
\end{figure}
\textbf{RPSFT consistently reduces dominant-subspace rotation.}
Figures~\ref{fig:rotation-layerwise} and~\ref{fig:rotation-rankwise} show that RPSFT keeps tuned weights closer to the pretrained singular subspaces across three models. The reduction appears through most layers and across the leading-to-middle rank range, indicating that RPSFT stabilizes the dominant directions targeted by the regularizer.

\begin{figure}[htbp]
\centering
\includegraphics[width=1.00\textwidth]{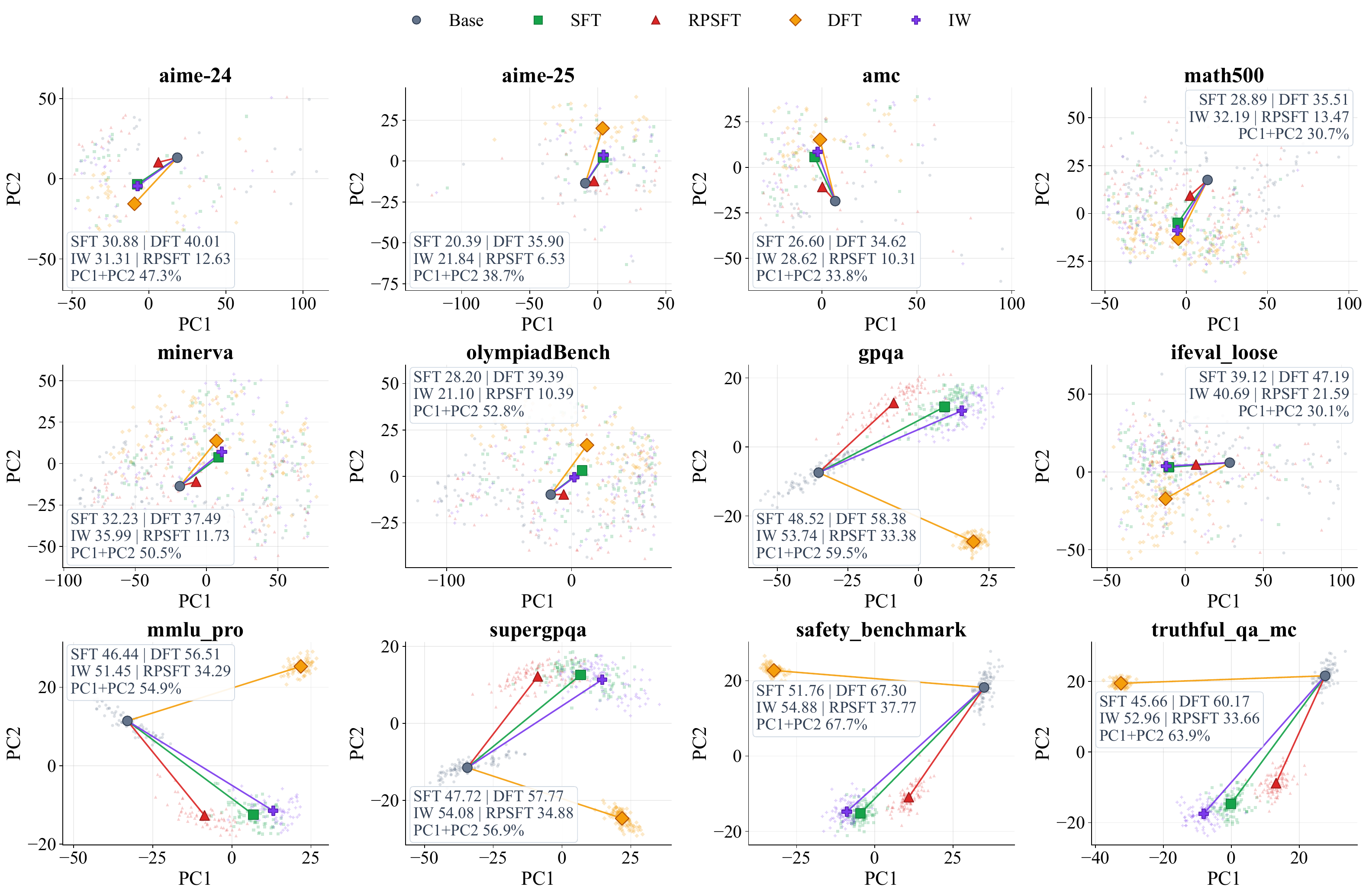}
\caption{Hidden-state drift on Qwen2.5-3B-Instruct. Each panel shows centroid shifts away from the base model in a 2D PCA view. RPSFT stays among the closest-tuned centroids to the base model, while SFT, IW, and especially DFT often shift farther. Entropy analyses are provided in Appendix~\ref{app:entropy-family}.}
\label{fig:base-shift-qwen3b}
\end{figure}

\textbf{Weight-space stability translates into representation-space stability.}
Figure~\ref{fig:base-shift-qwen3b} shows that RPSFT keeps tuned hidden-state centroids closest to the base model across benchmark panels, while SFT, IW, and especially DFT drift farther away. This suggests that constraining dominant-subspace rotation helps preserve pretrained representations rather than only reducing a weight-space metric. The same pattern holds for Llama-8B and Qwen2.5-7B in Appendix Figures~\ref{fig:base-shift-llama} and~\ref{fig:base-shift-qwen7b}.

\FloatBarrier



\section{Conclusion}
We presented RPSFT, a supervised fine-tuning regularizer that anchors the projected top-$k$ singular block of pretrained weight matrices. Across Llama and Qwen models, RPSFT improves the ID/OOD trade-off over standard SFT and strong baselines, and provides stronger initializations for downstream RLFT. Our analyses show a consistent mechanism: RPSFT reduces dominant-subspace rotation in weight space and better preserves hidden-state geometry, suggesting that controlling drift in pretrained singular subspaces is a practical way to mitigate forgetting while retaining task adaptation.

\textbf{Limitations and future work.} 
\label{con:limit_future}
RPSFT applies the same protected-rank rule broadly across weight matrices. A promising direction is to regularize layers more selectively by focusing on the early self-attention layers or other layers with the strongest Fisher--SVD overlap. This could further reduce overhead while preserving the main benefit, and is supported by recent evidence that fine-tuning effects are often layer-localized rather than uniformly distributed across the whole model~\citep{shi2025understandinglayersignificancellm,zhao2026layerwiseanalysissupervisedfinetuning}.
In terms of experiments, we focus on math training datasets with 3B--8B models; future work can extend to non-math training data and much larger model scales.

\bibliography{main}

\begin{thebibliography}{50}
\providecommand{\natexlab}[1]{#1}
\providecommand{\url}[1]{\texttt{#1}}
\expandafter\ifx\csname urlstyle\endcsname\relax
  \providecommand{\doi}[1]{doi: #1}\else
  \providecommand{\doi}{doi: \begingroup \urlstyle{rm}\Url}\fi

\bibitem[Bai et~al.(2023)Bai, Bai, Chu, Cui, Dang, Deng, Fan, Ge, Han, Huang, Hui, Ji, Li, Lin, Lin, Liu, Liu, Lu, Lu, Ma, Men, Ren, Ren, Tan, Tan, Tu, Wang, Wang, Wang, Wu, Xu, Xu, Yang, Yang, Yang, Yang, Yao, Yu, Yuan, Yuan, Zhang, Zhang, Zhang, Zhang, Zhou, Zhou, Zhou, and Zhu]{bai2023qwentechnicalreport}
Jinze Bai, Shuai Bai, Yunfei Chu, Zeyu Cui, Kai Dang, Xiaodong Deng, Yang Fan, Wenbin Ge, Yu~Han, Fei Huang, Binyuan Hui, Luo Ji, Mei Li, Junyang Lin, Runji Lin, Dayiheng Liu, Gao Liu, Chengqiang Lu, Keming Lu, Jianxin Ma, Rui Men, Xingzhang Ren, Xuancheng Ren, Chuanqi Tan, Sinan Tan, Jianhong Tu, Peng Wang, Shijie Wang, Wei Wang, Shengguang Wu, Benfeng Xu, Jin Xu, An~Yang, Hao Yang, Jian Yang, Shusheng Yang, Yang Yao, Bowen Yu, Hongyi Yuan, Zheng Yuan, Jianwei Zhang, Xingxuan Zhang, Yichang Zhang, Zhenru Zhang, Chang Zhou, Jingren Zhou, Xiaohuan Zhou, and Tianhang Zhu.
\newblock Qwen technical report, 2023.
\newblock URL \url{https://arxiv.org/abs/2309.16609}.

\bibitem[Chung et~al.(2024)Chung, Cherif, Meger, and Precup]{chung2024parsevalregularizationcontinual}
Wesley Chung, Lynn Cherif, David Meger, and Doina Precup.
\newblock Parseval regularization for continual reinforcement learning, 2024.
\newblock URL \url{https://arxiv.org/abs/2412.07224}.

\bibitem[DeepSeek-AI(2025)]{deepseekr1}
DeepSeek-AI.
\newblock Deepseek-r1: Incentivizing reasoning capability in large language models via reinforcement learning.
\newblock \emph{arXiv preprint arXiv:2501.12948}, 2025.

\bibitem[Elsayed et~al.(2024)Elsayed, Lan, Lyle, and Mahmood]{elsayed2024weightclippingdeepcontinual}
Mohamed Elsayed, Qingfeng Lan, Clare Lyle, and A.~Rupam Mahmood.
\newblock Weight clipping for deep continual and reinforcement learning, 2024.
\newblock URL \url{https://arxiv.org/abs/2407.01704}.

\bibitem[Franke et~al.(2024)Franke, Hefenbrock, and Hutter]{franke2024preserving}
J{\"o}rg~K.H. Franke, Michael Hefenbrock, and Frank Hutter.
\newblock Preserving principal subspaces to reduce catastrophic forgetting in fine-tuning.
\newblock In \emph{ICLR 2024 Workshop on Mathematical and Empirical Understanding of Foundation Models}, 2024.
\newblock URL \url{https://openreview.net/forum?id=XoWtroECJU}.

\bibitem[Grattafiori et~al.(2024)Grattafiori, Dubey, Jauhri, Pandey, Kadian, Al-Dahle, Letman, Mathur, Schelten, Vaughan, et~al.]{grattafiori2024llama}
Aaron Grattafiori, Abhimanyu Dubey, Abhinav Jauhri, Abhinav Pandey, Abhishek Kadian, Ahmad Al-Dahle, Aiesha Letman, Akhil Mathur, Alan Schelten, Alex Vaughan, et~al.
\newblock The llama 3 herd of models.
\newblock \emph{arXiv preprint arXiv:2407.21783}, 2024.

\bibitem[Haink(2023)]{haink2023hessianeigenvectorsprincipalcomponent}
David Haink.
\newblock Hessian eigenvectors and principal component analysis of neural network weight matrices, 2023.
\newblock URL \url{https://arxiv.org/abs/2311.00452}.

\bibitem[He et~al.(2024)He, Luo, Bai, Hu, Thai, Shen, Hu, Han, Huang, Zhang, Liu, Qi, Liu, and Sun]{he2024olympiadbenchchallengingbenchmarkpromoting}
Chaoqun He, Renjie Luo, Yuzhuo Bai, Shengding Hu, Zhen~Leng Thai, Junhao Shen, Jinyi Hu, Xu~Han, Yujie Huang, Yuxiang Zhang, Jie Liu, Lei Qi, Zhiyuan Liu, and Maosong Sun.
\newblock Olympiadbench: A challenging benchmark for promoting agi with olympiad-level bilingual multimodal scientific problems, 2024.
\newblock URL \url{https://arxiv.org/abs/2402.14008}.

\bibitem[Hendrycks et~al.(2021)Hendrycks, Burns, Kadavath, Arora, Basart, Tang, Song, and Steinhardt]{hendrycks2021measuringmathematicalproblemsolving}
Dan Hendrycks, Collin Burns, Saurav Kadavath, Akul Arora, Steven Basart, Eric Tang, Dawn Song, and Jacob Steinhardt.
\newblock Measuring mathematical problem solving with the math dataset, 2021.
\newblock URL \url{https://arxiv.org/abs/2103.03874}.

\bibitem[Hu et~al.(2021)Hu, Shen, Wallis, Allen-Zhu, Li, Wang, Wang, and Chen]{hu2021loralowrankadaptationlarge}
Edward~J. Hu, Yelong Shen, Phillip Wallis, Zeyuan Allen-Zhu, Yuanzhi Li, Shean Wang, Lu~Wang, and Weizhu Chen.
\newblock Lora: Low-rank adaptation of large language models, 2021.
\newblock URL \url{https://arxiv.org/abs/2106.09685}.

\bibitem[Huan et~al.(2025)Huan, Li, Zheng, Xu, Kim, Du, Poovendran, Neubig, and Yue]{huan2025mathreasoninggeneral}
Maggie Huan, Yuetai Li, Tuney Zheng, Xiaoyu Xu, Seungone Kim, Minxin Du, Radha Poovendran, Graham Neubig, and Xiang Yue.
\newblock Does math reasoning improve general llm capabilities? understanding transferability of llm reasoning, 2025.
\newblock URL \url{https://arxiv.org/abs/2507.00432}.

\bibitem[{Hugging Face}(2025)]{openr1}
{Hugging Face}.
\newblock Open r1: A fully open reproduction of deepseek-r1, January 2025.
\newblock URL \url{https://github.com/huggingface/open-r1}.

\bibitem[Jin et~al.(2025)Jin, Luan, Lyu, Rabusseau, Rabbany, Precup, and Hamdaqa]{jin2025rlfinetuninghealsood}
Hangzhan Jin, Sitao Luan, Sicheng Lyu, Guillaume Rabusseau, Reihaneh Rabbany, Doina Precup, and Mohammad Hamdaqa.
\newblock Rl fine-tuning heals ood forgetting in sft, 2025.
\newblock URL \url{https://arxiv.org/abs/2509.12235}.

\bibitem[Kirkpatrick et~al.(2017)Kirkpatrick, Pascanu, Rabinowitz, Veness, Desjardins, Rusu, Milan, Quan, Ramalho, Grabska-Barwinska, Hassabis, Clopath, Kumaran, and Hadsell]{Kirkpatrick_2017}
James Kirkpatrick, Razvan Pascanu, Neil Rabinowitz, Joel Veness, Guillaume Desjardins, Andrei~A. Rusu, Kieran Milan, John Quan, Tiago Ramalho, Agnieszka Grabska-Barwinska, Demis Hassabis, Claudia Clopath, Dharshan Kumaran, and Raia Hadsell.
\newblock Overcoming catastrophic forgetting in neural networks.
\newblock \emph{Proceedings of the National Academy of Sciences}, 114\penalty0 (13):\penalty0 3521–3526, March 2017.
\newblock ISSN 1091-6490.
\newblock \doi{10.1073/pnas.1611835114}.
\newblock URL \url{http://dx.doi.org/10.1073/pnas.1611835114}.

\bibitem[Kumar et~al.(2024)Kumar, Marklund, and Roy]{kumar2024maintainingplasticitycontinuallearning}
Saurabh Kumar, Henrik Marklund, and Benjamin~Van Roy.
\newblock Maintaining plasticity in continual learning via regenerative regularization, 2024.
\newblock URL \url{https://arxiv.org/abs/2308.11958}.

\bibitem[Kunstner et~al.(2019)Kunstner, Hennig, and Balles]{kunstner2019limitationsempiricalfisher}
Frederik Kunstner, Philipp Hennig, and Lukas Balles.
\newblock Limitations of the empirical fisher approximation for natural gradient descent.
\newblock In \emph{Advances in Neural Information Processing Systems 32}, pages 4158--4169, 2019.

\bibitem[Lai et~al.(2025)Lai, Zhao, Feng, Ma, Liu, Zhao, Lin, Yi, Xie, Zhang, Liu, Meng, and Zhu]{lai2025reinforcementfinetuningnaturallymitigates}
Song Lai, Haohan Zhao, Rong Feng, Changyi Ma, Wenzhuo Liu, Hongbo Zhao, Xi~Lin, Dong Yi, Min Xie, Qingfu Zhang, Hongbin Liu, Gaofeng Meng, and Fei Zhu.
\newblock Reinforcement fine-tuning naturally mitigates forgetting in continual post-training, 2025.
\newblock URL \url{https://arxiv.org/abs/2507.05386}.

\bibitem[Lewandowski et~al.(2024)Lewandowski, Bortkiewicz, Kumar, Gyorgy, Schuurmans, Ostaszewski, and Machado]{lewandowski2024learningcontinuallyspectral}
Alex Lewandowski, Michal Bortkiewicz, Saurabh Kumar, Andras Gyorgy, Dale Schuurmans, Mateusz Ostaszewski, and Marlos~C. Machado.
\newblock Learning continually by spectral regularization, 2024.
\newblock URL \url{https://arxiv.org/abs/2406.06811}.

\bibitem[Lewkowycz et~al.(2022)Lewkowycz, Andreassen, Dohan, Dyer, Michalewski, Ramasesh, Slone, Anil, Schlag, Gutman-Solo, Wu, Neyshabur, Gur-Ari, and Misra]{lewkowycz2022solvingquantitativereasoningproblems}
Aitor Lewkowycz, Anders Andreassen, David Dohan, Ethan Dyer, Henryk Michalewski, Vinay Ramasesh, Ambrose Slone, Cem Anil, Imanol Schlag, Theo Gutman-Solo, Yuhuai Wu, Behnam Neyshabur, Guy Gur-Ari, and Vedant Misra.
\newblock Solving quantitative reasoning problems with language models, 2022.
\newblock URL \url{https://arxiv.org/abs/2206.14858}.

\bibitem[Lin et~al.(2025)Lin, Wang, Qian, Wang, Srinivasan, Zeng, Jiao, Zhou, Gesi, Wang, Guo, Zhong, Zhang, Sanghavi, Chen, Yun, and Li]{lin2025sftdoesntalwayshurt}
Jiacheng Lin, Zhongruo Wang, Kun Qian, Tian Wang, Arvind Srinivasan, Hansi Zeng, Ruochen Jiao, Xie Zhou, Jiri Gesi, Dakuo Wang, Yufan Guo, Kai Zhong, Weiqi Zhang, Sujay Sanghavi, Changyou Chen, Hyokun Yun, and Lihong Li.
\newblock Sft doesn't always hurt general capabilities: Revisiting domain-specific fine-tuning in llms, 2025.
\newblock URL \url{https://arxiv.org/abs/2509.20758}.

\bibitem[Lin et~al.(2022)Lin, Hilton, and Evans]{lin2022truthfulqameasuringmodelsmimic}
Stephanie Lin, Jacob Hilton, and Owain Evans.
\newblock Truthfulqa: Measuring how models mimic human falsehoods, 2022.
\newblock URL \url{https://arxiv.org/abs/2109.07958}.

\bibitem[Lu et~al.(2025)Lu, Yuan, Feng, and Sun]{lu2025rethinkingstabilityplasticitytradeoffcontinual}
Aojun Lu, Hangjie Yuan, Tao Feng, and Yanan Sun.
\newblock Rethinking the stability-plasticity trade-off in continual learning from an architectural perspective, 2025.
\newblock URL \url{https://arxiv.org/abs/2506.03951}.

\bibitem[Lyle et~al.(2023)Lyle, Zheng, Nikishin, Pires, Pascanu, and Dabney]{lyle2023understandingplasticityneuralnetworks}
Clare Lyle, Zeyu Zheng, Evgenii Nikishin, Bernardo~Avila Pires, Razvan Pascanu, and Will Dabney.
\newblock Understanding plasticity in neural networks, 2023.
\newblock URL \url{https://arxiv.org/abs/2303.01486}.

\bibitem[Martens(2010)]{martens2010deeplearninghessianfree}
James Martens.
\newblock Deep learning via hessian-free optimization.
\newblock In \emph{Proceedings of the 27th International Conference on Machine Learning}, pages 735--742, 2010.

\bibitem[Martens(2020)]{martens2020newinsightsperspectivesnatural}
James Martens.
\newblock New insights and perspectives on the natural gradient method, 2020.
\newblock URL \url{https://arxiv.org/abs/1412.1193}.

\bibitem[Meng et~al.(2025)Meng, Wang, and Zhang]{meng2025pissaprincipalsingularvalues}
Fanxu Meng, Zhaohui Wang, and Muhan Zhang.
\newblock Pissa: Principal singular values and singular vectors adaptation of large language models, 2025.
\newblock URL \url{https://arxiv.org/abs/2404.02948}.

\bibitem[Mukherjee et~al.(2025)Mukherjee, Yuan, Hakkani-Tur, and Peng]{mukherjee2025reinforcementlearningfinetunes}
Sagnik Mukherjee, Lifan Yuan, Dilek Hakkani-Tur, and Hao Peng.
\newblock Reinforcement learning finetunes small subnetworks in large language models, 2025.
\newblock URL \url{https://arxiv.org/abs/2505.11711}.

\bibitem[Nayak et~al.(2025)Nayak, Killamsetty, Han, Bhandwaldar, Chanda, Xu, Wang, Pareja, Silkin, Eyceoz, and Srivastava]{nayak2025sculptingsubspacesconstrainedfinetuning}
Nikhil~Shivakumar Nayak, Krishnateja Killamsetty, Ligong Han, Abhishek Bhandwaldar, Prateek Chanda, Kai Xu, Hao Wang, Aldo Pareja, Oleg Silkin, Mustafa Eyceoz, and Akash Srivastava.
\newblock Sculpting subspaces: Constrained full fine-tuning in llms for continual learning, 2025.
\newblock URL \url{https://arxiv.org/abs/2504.07097}.

\bibitem[Ni et~al.(2025)Ni, Nie, Chaudhary, Liu, Rangwala, and Fakoor]{ni2025offlinelearningforgettingreasoning}
Tianwei Ni, Allen Nie, Sapana Chaudhary, Yao Liu, Huzefa Rangwala, and Rasool Fakoor.
\newblock Offline learning and forgetting for reasoning with large language models, 2025.
\newblock URL \url{https://arxiv.org/abs/2504.11364}.

\bibitem[OpenAI et~al.(2024)OpenAI, :, Jaech, Kalai, Lerer, Richardson, El-Kishky, Low, Helyar, Madry, Beutel, Carney, Iftimie, Karpenko, Passos, Neitz, Prokofiev, Wei, Tam, Bennett, Kumar, Saraiva, Vallone, Duberstein, Kondrich, Mishchenko, Applebaum, Jiang, Nair, Zoph, Ghorbani, Rossen, Sokolowsky, Barak, McGrew, Minaiev, Hao, Baker, Houghton, McKinzie, Eastman, Lugaresi, Bassin, Hudson, Li, de~Bourcy, Voss, Shen, Zhang, Koch, Orsinger, Hesse, Fischer, Chan, Roberts, Kappler, Levy, Selsam, Dohan, Farhi, Mely, Robinson, Tsipras, Li, Oprica, Freeman, Zhang, Wong, Proehl, Cheung, Mitchell, Wallace, Ritter, Mays, Wang, Such, Raso, Leoni, Tsimpourlas, Song, von Lohmann, Sulit, Salmon, Parascandolo, Chabot, Zhao, Brockman, Leclerc, Salman, Bao, Sheng, Andrin, Bagherinezhad, Ren, Lightman, Chung, Kivlichan, O'Connell, Osband, Gilaberte, Akkaya, Kostrikov, Sutskever, Kofman, Pachocki, Lennon, Wei, Harb, Twore, Feng, Yu, Weng, Tang, Yu, Candela, Palermo, Parish, Heidecke, Hallman, Rizzo, Gordon, Uesato, Ward,
  Huizinga, Wang, Chen, Xiao, Singhal, Nguyen, Cobbe, Shi, Wood, Rimbach, Gu-Lemberg, Liu, Lu, Stone, Yu, Ahmad, Yang, Liu, Maksin, Ho, Fedus, Weng, Li, McCallum, Held, Kuhn, Kondraciuk, Kaiser, Metz, Boyd, Trebacz, Joglekar, Chen, Tintor, Meyer, Jones, Kaufer, Schwarzer, Shah, Yatbaz, Guan, Xu, Yan, Glaese, Chen, Lampe, Malek, Wang, Fradin, McClay, Pavlov, Wang, Wang, Murati, Bavarian, Rohaninejad, McAleese, Chowdhury, Chowdhury, Ryder, Tezak, Brown, Nachum, Boiko, Murk, Watkins, Chao, Ashbourne, Izmailov, Zhokhov, Dias, Arora, Lin, Lopes, Gaon, Miyara, Leike, Hwang, Garg, Brown, James, Shu, Cheu, Greene, Jain, Altman, Toizer, Toyer, Miserendino, Agarwal, Hernandez, Baker, McKinney, Yan, Zhao, Hu, Santurkar, Chaudhuri, Zhang, Fu, Papay, Lin, Balaji, Sanjeev, Sidor, Broda, Clark, Wang, Gordon, Sanders, Patwardhan, Sottiaux, Degry, Dimson, Zheng, Garipov, Stasi, Bansal, Creech, Peterson, Eloundou, Qi, Kosaraju, Monaco, Pong, Fomenko, Zheng, Zhou, McCabe, Zaremba, Dubois, Lu, Chen, Cha, Bai, He, Zhang, Wang,
  Shao, and Li]{openai2024openaio1card}
OpenAI, :, Aaron Jaech, Adam Kalai, Adam Lerer, Adam Richardson, Ahmed El-Kishky, Aiden Low, Alec Helyar, Aleksander Madry, Alex Beutel, Alex Carney, Alex Iftimie, Alex Karpenko, Alex~Tachard Passos, Alexander Neitz, Alexander Prokofiev, Alexander Wei, Allison Tam, Ally Bennett, Ananya Kumar, Andre Saraiva, Andrea Vallone, Andrew Duberstein, Andrew Kondrich, Andrey Mishchenko, Andy Applebaum, Angela Jiang, Ashvin Nair, Barret Zoph, Behrooz Ghorbani, Ben Rossen, Benjamin Sokolowsky, Boaz Barak, Bob McGrew, Borys Minaiev, Botao Hao, Bowen Baker, Brandon Houghton, Brandon McKinzie, Brydon Eastman, Camillo Lugaresi, Cary Bassin, Cary Hudson, Chak~Ming Li, Charles de~Bourcy, Chelsea Voss, Chen Shen, Chong Zhang, Chris Koch, Chris Orsinger, Christopher Hesse, Claudia Fischer, Clive Chan, Dan Roberts, Daniel Kappler, Daniel Levy, Daniel Selsam, David Dohan, David Farhi, David Mely, David Robinson, Dimitris Tsipras, Doug Li, Dragos Oprica, Eben Freeman, Eddie Zhang, Edmund Wong, Elizabeth Proehl, Enoch Cheung, Eric
  Mitchell, Eric Wallace, Erik Ritter, Evan Mays, Fan Wang, Felipe~Petroski Such, Filippo Raso, Florencia Leoni, Foivos Tsimpourlas, Francis Song, Fred von Lohmann, Freddie Sulit, Geoff Salmon, Giambattista Parascandolo, Gildas Chabot, Grace Zhao, Greg Brockman, Guillaume Leclerc, Hadi Salman, Haiming Bao, Hao Sheng, Hart Andrin, Hessam Bagherinezhad, Hongyu Ren, Hunter Lightman, Hyung~Won Chung, Ian Kivlichan, Ian O'Connell, Ian Osband, Ignasi~Clavera Gilaberte, Ilge Akkaya, Ilya Kostrikov, Ilya Sutskever, Irina Kofman, Jakub Pachocki, James Lennon, Jason Wei, Jean Harb, Jerry Twore, Jiacheng Feng, Jiahui Yu, Jiayi Weng, Jie Tang, Jieqi Yu, Joaquin~Quiñonero Candela, Joe Palermo, Joel Parish, Johannes Heidecke, John Hallman, John Rizzo, Jonathan Gordon, Jonathan Uesato, Jonathan Ward, Joost Huizinga, Julie Wang, Kai Chen, Kai Xiao, Karan Singhal, Karina Nguyen, Karl Cobbe, Katy Shi, Kayla Wood, Kendra Rimbach, Keren Gu-Lemberg, Kevin Liu, Kevin Lu, Kevin Stone, Kevin Yu, Lama Ahmad, Lauren Yang, Leo Liu,
  Leon Maksin, Leyton Ho, Liam Fedus, Lilian Weng, Linden Li, Lindsay McCallum, Lindsey Held, Lorenz Kuhn, Lukas Kondraciuk, Lukasz Kaiser, Luke Metz, Madelaine Boyd, Maja Trebacz, Manas Joglekar, Mark Chen, Marko Tintor, Mason Meyer, Matt Jones, Matt Kaufer, Max Schwarzer, Meghan Shah, Mehmet Yatbaz, Melody~Y. Guan, Mengyuan Xu, Mengyuan Yan, Mia Glaese, Mianna Chen, Michael Lampe, Michael Malek, Michele Wang, Michelle Fradin, Mike McClay, Mikhail Pavlov, Miles Wang, Mingxuan Wang, Mira Murati, Mo~Bavarian, Mostafa Rohaninejad, Nat McAleese, Neil Chowdhury, Neil Chowdhury, Nick Ryder, Nikolas Tezak, Noam Brown, Ofir Nachum, Oleg Boiko, Oleg Murk, Olivia Watkins, Patrick Chao, Paul Ashbourne, Pavel Izmailov, Peter Zhokhov, Rachel Dias, Rahul Arora, Randall Lin, Rapha~Gontijo Lopes, Raz Gaon, Reah Miyara, Reimar Leike, Renny Hwang, Rhythm Garg, Robin Brown, Roshan James, Rui Shu, Ryan Cheu, Ryan Greene, Saachi Jain, Sam Altman, Sam Toizer, Sam Toyer, Samuel Miserendino, Sandhini Agarwal, Santiago Hernandez,
  Sasha Baker, Scott McKinney, Scottie Yan, Shengjia Zhao, Shengli Hu, Shibani Santurkar, Shraman~Ray Chaudhuri, Shuyuan Zhang, Siyuan Fu, Spencer Papay, Steph Lin, Suchir Balaji, Suvansh Sanjeev, Szymon Sidor, Tal Broda, Aidan Clark, Tao Wang, Taylor Gordon, Ted Sanders, Tejal Patwardhan, Thibault Sottiaux, Thomas Degry, Thomas Dimson, Tianhao Zheng, Timur Garipov, Tom Stasi, Trapit Bansal, Trevor Creech, Troy Peterson, Tyna Eloundou, Valerie Qi, Vineet Kosaraju, Vinnie Monaco, Vitchyr Pong, Vlad Fomenko, Weiyi Zheng, Wenda Zhou, Wes McCabe, Wojciech Zaremba, Yann Dubois, Yinghai Lu, Yining Chen, Young Cha, Yu~Bai, Yuchen He, Yuchen Zhang, Yunyun Wang, Zheng Shao, and Zhuohan Li.
\newblock Openai o1 system card, 2024.
\newblock URL \url{https://arxiv.org/abs/2412.16720}.

\bibitem[Qin and Springenberg(2025)]{qin2025supervisedfinetuningcurated}
Chongli Qin and Jost~Tobias Springenberg.
\newblock Supervised fine tuning on curated data is reinforcement learning (and can be improved), 2025.
\newblock URL \url{https://arxiv.org/abs/2507.12856}.

\bibitem[Rein et~al.(2023)Rein, Hou, Stickland, Petty, Pang, Dirani, Michael, and Bowman]{rein2023gpqagraduatelevelgoogleproofqa}
David Rein, Betty~Li Hou, Asa~Cooper Stickland, Jackson Petty, Richard~Yuanzhe Pang, Julien Dirani, Julian Michael, and Samuel~R. Bowman.
\newblock Gpqa: A graduate-level google-proof q\&a benchmark, 2023.
\newblock URL \url{https://arxiv.org/abs/2311.12022}.

\bibitem[Schulman et~al.(2017)Schulman, Wolski, Dhariwal, Radford, and Klimov]{schulman2017proximalpolicyoptimizationalgorithms}
John Schulman, Filip Wolski, Prafulla Dhariwal, Alec Radford, and Oleg Klimov.
\newblock Proximal policy optimization algorithms, 2017.
\newblock URL \url{https://arxiv.org/abs/1707.06347}.

\bibitem[Shao et~al.(2024)Shao, Wang, Zhu, Xu, Song, Bi, Zhang, Zhang, Li, Wu, and Guo]{shao2024deepseekmathpushinglimitsmathematical}
Zhihong Shao, Peiyi Wang, Qihao Zhu, Runxin Xu, Junxiao Song, Xiao Bi, Haowei Zhang, Mingchuan Zhang, Y.~K. Li, Y.~Wu, and Daya Guo.
\newblock Deepseekmath: Pushing the limits of mathematical reasoning in open language models, 2024.
\newblock URL \url{https://arxiv.org/abs/2402.03300}.

\bibitem[Shenfeld et~al.(2025)Shenfeld, Pari, and Agrawal]{shenfeld2025rlsrazoronlinereinforcement}
Idan Shenfeld, Jyothish Pari, and Pulkit Agrawal.
\newblock Rl's razor: Why online reinforcement learning forgets less, 2025.
\newblock URL \url{https://arxiv.org/abs/2509.04259}.

\bibitem[Shi et~al.(2025)Shi, Lu, Dong, Zhang, Zhang, Feng, and Wu]{shi2025understandinglayersignificancellm}
Guangyuan Shi, Zexin Lu, Xiaoyu Dong, Wenlong Zhang, Xuanyu Zhang, Yujie Feng, and Xiao-Ming Wu.
\newblock Understanding layer significance in llm alignment, 2025.
\newblock URL \url{https://arxiv.org/abs/2410.17875}.

\bibitem[Team et~al.(2025)Team, Du, Yao, Ma, Wang, Zheng, Zhu, Liu, Liang, Jin, Wei, Zheng, Deng, Gavin, Jia, Jiang, Liao, Li, Li, Li, Li, Li, Ma, Ni, Que, Wang, Wen, Wu, Hsing, Xu, Yang, Wang, Zhou, Bai, Bu, Cai, Chen, Chen, Cheng, Cheng, Ding, Huang, Huang, Li, Li, Li, Liang, Lin, Lin, Ma, Pang, Peng, Peng, Qi, Qiu, Qu, Quan, Tan, Wang, Wang, Wang, Wang, Wang, Xu, Yang, Yuan, Yue, Zhan, Zhang, Zhang, Zhang, Zhang, Zhang, Zhao, Zheng, Zhong, Gao, Li, Liu, Liu, Liu, Ni, Peng, Qin, Su, Wang, Wang, Yang, Yang, Cao, Yue, Zhang, Zhou, Liu, Lin, Huang, and Zhang]{pteam2025supergpqascalingllmevaluation}
P~Team, Xinrun Du, Yifan Yao, Kaijing Ma, Bingli Wang, Tianyu Zheng, King Zhu, Minghao Liu, Yiming Liang, Xiaolong Jin, Zhenlin Wei, Chujie Zheng, Kaixin Deng, Shawn Gavin, Shian Jia, Sichao Jiang, Yiyan Liao, Rui Li, Qinrui Li, Sirun Li, Yizhi Li, Yunwen Li, David Ma, Yuansheng Ni, Haoran Que, Qiyao Wang, Zhoufutu Wen, Siwei Wu, Tyshawn Hsing, Ming Xu, Zhenzhu Yang, Zekun~Moore Wang, Junting Zhou, Yuelin Bai, Xingyuan Bu, Chenglin Cai, Liang Chen, Yifan Chen, Chengtuo Cheng, Tianhao Cheng, Keyi Ding, Siming Huang, Yun Huang, Yaoru Li, Yizhe Li, Zhaoqun Li, Tianhao Liang, Chengdong Lin, Hongquan Lin, Yinghao Ma, Tianyang Pang, Zhongyuan Peng, Zifan Peng, Qige Qi, Shi Qiu, Xingwei Qu, Shanghaoran Quan, Yizhou Tan, Zili Wang, Chenqing Wang, Hao Wang, Yiya Wang, Yubo Wang, Jiajun Xu, Kexin Yang, Ruibin Yuan, Yuanhao Yue, Tianyang Zhan, Chun Zhang, Jinyang Zhang, Xiyue Zhang, Xingjian Zhang, Yue Zhang, Yongchi Zhao, Xiangyu Zheng, Chenghua Zhong, Yang Gao, Zhoujun Li, Dayiheng Liu, Qian Liu, Tianyu Liu, Shiwen
  Ni, Junran Peng, Yujia Qin, Wenbo Su, Guoyin Wang, Shi Wang, Jian Yang, Min Yang, Meng Cao, Xiang Yue, Zhaoxiang Zhang, Wangchunshu Zhou, Jiaheng Liu, Qunshu Lin, Wenhao Huang, and Ge~Zhang.
\newblock Supergpqa: Scaling llm evaluation across 285 graduate disciplines, 2025.
\newblock URL \url{https://arxiv.org/abs/2502.14739}.

\bibitem[Team(2024)]{qwen2.5}
Qwen Team.
\newblock Qwen2.5: A party of foundation models, September 2024.
\newblock URL \url{https://qwenlm.github.io/blog/qwen2.5/}.

\bibitem[Wang et~al.(2025{\natexlab{a}})Wang, Lyu, Sun, and Jing]{wang2025continualgradientlowrankprojection}
Chenxu Wang, Yilin Lyu, Zicheng Sun, and Liping Jing.
\newblock Continual gradient low-rank projection fine-tuning for llms, 2025{\natexlab{a}}.
\newblock URL \url{https://arxiv.org/abs/2507.02503}.

\bibitem[Wang et~al.(2025{\natexlab{b}})Wang, Li, Wang, Chen, and Chen]{wang2025miloraharnessingminorsingular}
Hanqing Wang, Yixia Li, Shuo Wang, Guanhua Chen, and Yun Chen.
\newblock Milora: Harnessing minor singular components for parameter-efficient llm finetuning, 2025{\natexlab{b}}.
\newblock URL \url{https://arxiv.org/abs/2406.09044}.

\bibitem[Wang et~al.(2024{\natexlab{a}})Wang, Zhang, Su, and Zhu]{wang2024comprehensivesurveycontinuallearning}
Liyuan Wang, Xingxing Zhang, Hang Su, and Jun Zhu.
\newblock A comprehensive survey of continual learning: Theory, method and application, 2024{\natexlab{a}}.
\newblock URL \url{https://arxiv.org/abs/2302.00487}.

\bibitem[Wang et~al.(2023)Wang, Chen, Ge, Xia, Bao, Zheng, Zhang, Gui, and Huang]{wang2023orthogonalsubspacelearninglanguage}
Xiao Wang, Tianze Chen, Qiming Ge, Han Xia, Rong Bao, Rui Zheng, Qi~Zhang, Tao Gui, and Xuanjing Huang.
\newblock Orthogonal subspace learning for language model continual learning, 2023.
\newblock URL \url{https://arxiv.org/abs/2310.14152}.

\bibitem[Wang et~al.(2024{\natexlab{b}})Wang, Ma, Zhang, Ni, Chandra, Guo, Ren, Arulraj, He, Jiang, Li, Ku, Wang, Zhuang, Fan, Yue, and Chen]{wang2024mmluprorobustchallengingmultitask}
Yubo Wang, Xueguang Ma, Ge~Zhang, Yuansheng Ni, Abhranil Chandra, Shiguang Guo, Weiming Ren, Aaran Arulraj, Xuan He, Ziyan Jiang, Tianle Li, Max Ku, Kai Wang, Alex Zhuang, Rongqi Fan, Xiang Yue, and Wenhu Chen.
\newblock Mmlu-pro: A more robust and challenging multi-task language understanding benchmark, 2024{\natexlab{b}}.
\newblock URL \url{https://arxiv.org/abs/2406.01574}.

\bibitem[Wu et~al.(2025)Wu, Zhou, Ziheng, Peng, Ye, Hu, Zhu, Qi, Yang, and Yang]{wu2025generalizationsftreinforcementlearning}
Yongliang Wu, Yizhou Zhou, Zhou Ziheng, Yingzhe Peng, Xinyu Ye, Xinting Hu, Wenbo Zhu, Lu~Qi, Ming-Hsuan Yang, and Xu~Yang.
\newblock On the generalization of sft: A reinforcement learning perspective with reward rectification, 2025.
\newblock URL \url{https://arxiv.org/abs/2508.05629}.

\bibitem[Yu et~al.(2025)Yu, Zhang, Zhu, Yuan, Zuo, Yue, Dai, Fan, Liu, Liu, Liu, Lin, Lin, Ma, Sheng, Tong, Zhang, Zhang, Zhang, Zhu, Zhu, Chen, Chen, Wang, Yu, Song, Wei, Zhou, Liu, Ma, Zhang, Yan, Qiao, Wu, and Wang]{yu2025dapoopensourcellmreinforcement}
Qiying Yu, Zheng Zhang, Ruofei Zhu, Yufeng Yuan, Xiaochen Zuo, Yu~Yue, Weinan Dai, Tiantian Fan, Gaohong Liu, Lingjun Liu, Xin Liu, Haibin Lin, Zhiqi Lin, Bole Ma, Guangming Sheng, Yuxuan Tong, Chi Zhang, Mofan Zhang, Wang Zhang, Hang Zhu, Jinhua Zhu, Jiaze Chen, Jiangjie Chen, Chengyi Wang, Hongli Yu, Yuxuan Song, Xiangpeng Wei, Hao Zhou, Jingjing Liu, Wei-Ying Ma, Ya-Qin Zhang, Lin Yan, Mu~Qiao, Yonghui Wu, and Mingxuan Wang.
\newblock Dapo: An open-source llm reinforcement learning system at scale, 2025.
\newblock URL \url{https://arxiv.org/abs/2503.14476}.

\bibitem[Zhao et~al.(2026)Zhao, Gong, Chen, Kang, and Li]{zhao2026layerwiseanalysissupervisedfinetuning}
Qinghua Zhao, Xueling Gong, Xinyu Chen, Zhongfeng Kang, and Xinlu Li.
\newblock A layer-wise analysis of supervised fine-tuning, 2026.
\newblock URL \url{https://arxiv.org/abs/2604.11838}.

\bibitem[Zhou et~al.(2023)Zhou, Lu, Mishra, Brahma, Basu, Luan, Zhou, and Hou]{zhou2023instructionfollowingevaluationlargelanguage}
Jeffrey Zhou, Tianjian Lu, Swaroop Mishra, Siddhartha Brahma, Sujoy Basu, Yi~Luan, Denny Zhou, and Le~Hou.
\newblock Instruction-following evaluation for large language models, 2023.
\newblock URL \url{https://arxiv.org/abs/2311.07911}.

\bibitem[Zhu et~al.(2025{\natexlab{a}})Zhu, Zhang, Huang, Su, Liu, Zhao, Fedorov, Pirsiavash, Lee, Pan, Wang, Tian, and Tai]{zhu2025whyrlupdateslooksparse}
Hanqing Zhu, Zhenyu Zhang, Hanxian Huang, DiJia Su, Zechun Liu, Jiawei Zhao, Igor Fedorov, Hamed Pirsiavash, Jinwon Lee, David~Z. Pan, Zhangyang Wang, Yuandong Tian, and Kai~Sheng Tai.
\newblock Why rl updates look sparse: An implicit compass drives optimization bias, 2025{\natexlab{a}}.
\newblock URL \url{https://openreview.net/forum?id=Q4mF4tLGbf}.
\newblock Submitted to ICLR 2026.

\bibitem[Zhu et~al.(2025{\natexlab{b}})Zhu, Zhang, Huang, Su, Liu, Zhao, Fedorov, Pirsiavash, Sha, Lee, Pan, Wang, Tian, and Tai]{zhu2025pathtakenrlvrprovably}
Hanqing Zhu, Zhenyu Zhang, Hanxian Huang, DiJia Su, Zechun Liu, Jiawei Zhao, Igor Fedorov, Hamed Pirsiavash, Zhizhou Sha, Jinwon Lee, David~Z. Pan, Zhangyang Wang, Yuandong Tian, and Kai~Sheng Tai.
\newblock The path not taken: Rlvr provably learns off the principals, 2025{\natexlab{b}}.
\newblock URL \url{https://arxiv.org/abs/2511.08567}.

\bibitem[Zhu et~al.(2025{\natexlab{c}})Zhu, Xie, Wang, Sun, Wang, and Liu]{zhu2025proximalsupervisedfinetuning}
Wenhong Zhu, Ruobing Xie, Rui Wang, Xingwu Sun, Di~Wang, and Pengfei Liu.
\newblock Proximal supervised fine-tuning, 2025{\natexlab{c}}.
\newblock URL \url{https://arxiv.org/abs/2508.17784}.

\end{thebibliography}
\bibliographystyle{plainnat}

\clearpage
\onecolumn
\appendix

\startcontents[appendix]
\section*{Appendix Contents}
\printcontents[appendix]{}{1}{\setcounter{tocdepth}{2}}
\clearpage

\section{Algorithms}

\begin{algorithm}[H]
\caption{Rotation-Preserving SFT (RPSFT)}
\label{alg:fpft}
\begin{algorithmic}[1]
\REQUIRE Pretrained weights $\theta_0 = \{\mathbf{W}^0_\ell\}$, SFT dataset $\mathcal{D}$, regularized layer set $\mathcal{M}_{\mathrm{reg}}$, rank $k$, regularization coefficient $\lambda$, update period $s$
\STATE \textbf{Precompute:} for each $\ell\in\mathcal{M}_{\mathrm{reg}}$, compute $(\mathbf{U}^{(k)}_{0,\ell}, \mathbf{V}^{(k)}_{0,\ell})$ from the SVD of $\mathbf{W}^0_\ell$; set $S^{\mathrm{ref}}_\ell \leftarrow (\mathbf{U}^{(k)}_{0,\ell})^\top \mathbf{W}^0_\ell \mathbf{V}^{(k)}_{0,\ell}$
\STATE Initialize $\theta \leftarrow \theta_0$
\FOR{supervised training step $t=1,2,\dots$}
  \STATE Sample minibatch $(x,y)\sim\mathcal{D}$
  \STATE Compute the standard SFT loss $\mathcal{L}_{\mathrm{SFT}}$
  \STATE $\mathcal{L}_{\mathrm{reg}} \leftarrow 0$
  \FOR{each $\ell \in \mathcal{M}_{\mathrm{reg}}$}
    \STATE $S \leftarrow (\mathbf{U}^{(k)}_{0,\ell})^\top \mathbf{W}_\ell \mathbf{V}^{(k)}_{0,\ell}$
    \STATE $\mathcal{L}_{\mathrm{reg}} \leftarrow \mathcal{L}_{\mathrm{reg}} + \|S - S^{\mathrm{ref}}_\ell\|_F^2$
  \ENDFOR
  \STATE $\mathcal{L} \leftarrow \mathcal{L}_{\mathrm{SFT}} + \lambda \mathcal{L}_{\mathrm{reg}}$
  \STATE Update $\theta \leftarrow \theta - \eta \nabla_\theta \mathcal{L}$
\ENDFOR
\STATE \textbf{return} $\theta$
\end{algorithmic}
\end{algorithm}

\noindent\textbf{Illustrative PyTorch implementation.}
The core implementation only caches the pretrained singular bases once and adds a projected-block penalty to the usual SFT loss:
\begin{lstlisting}[language=Python]

# Precompute once for each selected pretrained weight W0
svd_cache = []
with torch.no_grad():
    for W0 in pretrained_layers:
        U, _, Vh = torch.linalg.svd(W0.float(), full_matrices=False)
        Uk = U[:, :k].to(device=W0.device, dtype=W0.dtype)
        Vk = Vh[:k].T.to(device=W0.device, dtype=W0.dtype)
        S0 = Uk.T @ W0 @ Vk
        svd_cache.append((Uk, Vk, S0))

# During SFT, add the projected-block penalty every s steps
reg_loss = task_loss.new_zeros(())
if step % s == 0:
    for W, (Uk, Vk, S0) in zip(model_layers, svd_cache):
        S = Uk.T @ W @ Vk
        reg_loss = reg_loss + (S - S0).pow(2).sum()

loss = task_loss + lam * reg_loss
loss.backward()

\end{lstlisting}
\subsection{SVD Notation}
\label{app:svd-background}

For a weight matrix $\mathbf{W} \in \mathbb{R}^{d_{\text{out}} \times d_{\text{in}}}$, singular value decomposition writes $\mathbf{W} = \mathbf{U}\mathbf{\Sigma}\mathbf{V}^\top$, where $d_{\text{out}}$ and $d_{\text{in}}$ are the output and input dimensions, $\mathbf{U}\in\mathbb{R}^{d_{\text{out}}\times r}$ and $\mathbf{V}\in\mathbb{R}^{d_{\text{in}}\times r}$ have orthonormal columns, $\mathbf{\Sigma}\in\mathbb{R}^{r\times r}$ is the diagonal matrix of singular values in descending order, and $r=\operatorname{rank}(\mathbf{W})$. The leading singular directions capture the dominant transformations implemented by the model and define the pretrained subspaces that RPSFT protects.

\subsection{LoRA Objective}
\label{app:lora-objective}

For completeness, RPSFT can be applied to LoRA by adding the same projected-block drift penalty to the effective weight. This optional variant is not used in the reported experiments; the LoRA rows in Appendix~\ref{app:robustness-extra} are vanilla LoRA baselines with adapter rank $r=32$.
\begin{equation}
\label{eq:lora_obj}
\mathcal{L}(A,B)
=
\mathcal{L}_{\text{task}}(A,B)
+
\lambda \sum_{\ell \in \mathcal{M}'}
\left\|
S_\ell(\mathbf{W}^0_\ell + \Delta \mathbf{W}_\ell) - S^{\mathrm{ref}}_\ell
\right\|_F^2,
\end{equation}
where $\Delta \mathbf{W}_\ell = B_\ell A_\ell$. This keeps the regularizer aligned with
FPFT: it penalizes drift inside the pretrained dominant subspace even when only
low-rank parameters are trained.


\section{Related Work}
\label{app:related-work}

\subsection{Traditional Continual-Learning Regularization}

Continual-learning surveys commonly separate replay, architecture expansion, and regularization-based approaches \citep{wang2024comprehensivesurveycontinuallearning}. Our work is closest to the regularization family: rather than storing old data or changing the model architecture, these methods modify the optimization objective or update rule to control parameter drift. A classical example is Elastic Weight Consolidation (EWC) \citep{Kirkpatrick_2017}, which uses a Fisher-weighted quadratic penalty to discourage movement of important parameters. A simpler related baseline is anchoring or regenerative regularization, which penalizes uniform Euclidean movement from a reference model, typically through a term proportional to $\|\theta-\theta_0\|_2^2$ \citep{kumar2024maintainingplasticitycontinuallearning}. Other recent continual-learning regularizers control the weight geometry more directly: spectral regularization keeps the maximum singular value of each layer near one \citep{lewandowski2024learningcontinuallyspectral}, weight clipping bounds weight magnitudes after each update \citep{elsayed2024weightclippingdeepcontinual}, and Parseval regularization encourages near-orthogonal weight matrices \citep{chung2024parsevalregularizationcontinual}. These methods motivate regularizing the geometry of the update, but they do not target the pretrained singular directions that are most tied to OOD forgetting in our analysis.

\subsection{LLM Post-Training and Reasoning Generalization}

Recent LLM studies show that reasoning post-training can either generalize or over-specialize depending on the learning signal and update geometry. Other work shows that math-only post-training often transfers imperfectly: SFT can induce representation and output drift, whereas RL tends to preserve broader capabilities \citep{huan2025mathreasoninggeneral}. This is consistent with findings that RL fine-tuning mitigates forgetting through reward-variance scaling \citep{lai2025reinforcementfinetuningnaturallymitigates}, that online RL is biased toward KL-minimal solutions \citep{shenfeld2025rlsrazoronlinereinforcement}, and that RL updates may be sparse, spectrum-preserving, or off-principal in parameter space \citep{mukherjee2025reinforcementlearningfinetunes,zhu2025whyrlupdateslooksparse,zhu2025pathtakenrlvrprovably}. Offline reasoning studies also show that naive SFT can harm search capability and that smaller learning rates can reduce degradation \citep{ni2025offlinelearningforgettingreasoning,lin2025sftdoesntalwayshurt}. These results motivate our focus on the SFT stage: RPSFT explicitly controls SFT-induced movement in dominant pretrained singular subspaces, aiming to obtain rapid task adaptation without causing the representation drift that later RL analyses suggest is avoidable.

Several LLM fine-tuning methods are close to this goal. Proximal SFT constrains policy drift with PPO-style updates \citep{zhu2025proximalsupervisedfinetuning}; Dynamic SFT and importance-weighted SFT rescale token or example losses \citep{wu2025generalizationsftreinforcementlearning,qin2025supervisedfinetuningcurated}; and subspace-constrained methods project gradients away from protected directions, including principal-subspace preservation and gradient low-rank projection \citep{franke2024preserving,nayak2025sculptingsubspacesconstrainedfinetuning,wang2025continualgradientlowrankprojection}. Orthogonality-based PEFT methods such as O-LoRA similarly reduce interference by constraining the adapter subspace \citep{wang2023orthogonalsubspacelearninglanguage}. Spectral PEFT methods such as PiSSA and MiLoRA use singular directions for efficient adaptation \citep{meng2025pissaprincipalsingularvalues,wang2025miloraharnessingminorsingular}, but their goal is parameter efficiency rather than explicitly mitigating OOD forgetting. In contrast, RPSFT keeps the full fine-tuning parameterization and adds a soft projected-block penalty in the pretrained singular-vector basis.

\subsection{Comparison with Regularization Methods}
\label{app:regularizer-comparison}

Let $\theta_0$ denote the pretrained parameters, and let $\mathbf{W}^0_\ell=\mathbf{U}^0_\ell\mathbf{\Sigma}^0_\ell(\mathbf{V}^0_\ell)^\top$ be the SVD of layer $\ell$. Table~\ref{tab:method-comparison} compares methods whose update rules or objectives are explicit enough to make a direct theoretical comparison. We omit papers that are primarily diagnostic or empirical motivation when the paper does not define a comparable regularizer.

\begin{table}[t]
\centering
\scriptsize
\caption{Comparison with related regularization and subspace-preservation methods.}
\label{tab:method-comparison}
\resizebox{\textwidth}{!}{%
\begin{tabular}{p{0.15\textwidth}p{0.36\textwidth}p{0.20\textwidth}p{0.16\textwidth}p{0.08\textwidth}}
\toprule
Method & Objective or update & Extra compute & Extra memory & Ref. \\
\midrule
EWC &
$\mathcal{L}_{\rm task}
+\frac{\lambda}{2}\sum_i F_i(\theta_i-\theta_{0,i})^2$ &
$O(TP)$ Fisher estimation; $O(P)$ per step &
$O(P)$ &
\citep{Kirkpatrick_2017} \\
\midrule
L2 anchor &
$\mathcal{L}_{\rm task}+\lambda\|\theta-\theta_0\|_2^2$ &
$O(P)$ per step &
$O(P)$ &
\citep{kumar2024maintainingplasticitycontinuallearning} \\
\midrule
Weight clipping &
$\theta_i\leftarrow\operatorname{clip}(\theta_i,-\epsilon,\epsilon)$ &
$O(P)$ per step &
$O(1)$ &
\citep{elsayed2024weightclippingdeepcontinual} \\
\midrule
Hard subspace projection &
$g_\ell\leftarrow(\mathbf{I}-\mathbf{P}_\ell)g_\ell$ &
$O\!\left(\sum_\ell m_\ell n_\ell k_\ell\right)$ per step &
$O\!\left(\sum_\ell (m_\ell+n_\ell)k_\ell\right)$ &
\citep{franke2024preserving,nayak2025sculptingsubspacesconstrainedfinetuning} \\
\midrule
RPSFT (ours) &
$\mathcal{L}_{\rm task}
+\lambda\sum_\ell\|S_\ell-S_\ell^0\|_F^2$,

$S_\ell=(\mathbf{U}^{(k)}_{0,\ell})^\top
\mathbf{W}_\ell\mathbf{V}^{(k)}_{0,\ell}$ &
$O\!\left(\sum_\ell m_\ell n_\ell k_\ell / s\right)$ &
$O\!\left(\sum_\ell ((m_\ell+n_\ell)k_\ell+k_\ell^2)\right)$ &
-- \\
\bottomrule
\end{tabular}%
}
\end{table}

\section{Theory: Properties of RPSFT Regularization}
\label{app:theory}

\subsection{Setup and Notation}
Consider a weight matrix $\mathbf{W}\in\mathbb{R}^{m\times n}$ with pretrained initialization $\mathbf{W}_0$.
Let the (thin) SVD of $\mathbf{W}_0$ be
\begin{equation}
\mathbf{W}_0 = \mathbf{U}_0 \mathbf{\Sigma}_0 \mathbf{V}_0^\top,
\end{equation}
where $\mathbf{U}_0\in\mathbb{R}^{m\times r}$, $\mathbf{V}_0\in\mathbb{R}^{n\times r}$ have orthonormal columns, $\mathbf{\Sigma}_0\in\mathbb{R}^{r\times r}$ is diagonal, and $r=\mathrm{rank}(\mathbf{W}_0)$.
Let $\mathbf{U}_k\in\mathbb{R}^{m\times k}$ and $\mathbf{V}_k\in\mathbb{R}^{n\times k}$ denote the top-$k$ left and right singular vectors of $\mathbf{W}_0$.
Define the linear operator
\begin{equation}
\mathcal{P}_k(\mathbf{W}) \triangleq \mathbf{U}_k^\top \mathbf{W} \mathbf{V}_k \in \mathbb{R}^{k\times k}.
\end{equation}
RPSFT adds the penalty
\begin{equation}
\mathcal{R}_k(\mathbf{W}) \triangleq \left\| \mathcal{P}_k(\mathbf{W}) - \mathcal{P}_k(\mathbf{W}_0)\right\|_F^2
= \left\| \mathbf{U}_k^\top (\mathbf{W}-\mathbf{W}_0) \mathbf{V}_k \right\|_F^2.
\end{equation}
Given a task loss $f(\mathbf{W})$ (e.g., the SFT objective), the RPSFT objective is
\begin{equation}
F_\lambda(\mathbf{W}) \triangleq f(\mathbf{W}) + \lambda \, \mathcal{R}_k(\mathbf{W}), \qquad \lambda \ge 0.
\end{equation}

\subsection{Boundary Cases and Limiting Behavior}
We first record basic limiting cases that clarify how $\lambda$ and $k$ interpolate between standard SFT and constrained training.

\paragraph{Case $\lambda \to 0$.}
As $\lambda\to 0$, $F_\lambda(\mathbf{W}) \to f(\mathbf{W})$ and RPSFT reduces to the original SFT objective.

\paragraph{Case $\lambda \to \infty$.}
For $\lambda\to\infty$, minimizers of $F_\lambda$ converge to solutions of the
constrained problem:
\begin{equation}
\min_{\mathbf{W}} \ f(\mathbf{W}) \quad \text{s.t.} \quad \mathcal{P}_k(\mathbf{W})=\mathcal{P}_k(\mathbf{W}_0)
\ \Longleftrightarrow\ 
\mathbf{U}_k^\top (\mathbf{W}-\mathbf{W}_0) \mathbf{V}_k = 0.
\label{eq:constrained_limit}
\end{equation}
which fixes the component of $\mathbf{W}$ acting between the pretrained top-$k$ left and
right singular subspaces while leaving all other components unconstrained.


\paragraph{Case $k=0$.}
$\mathcal{R}_0(\mathbf{W})\equiv 0$, hence RPSFT is exactly standard SFT for any $\lambda$.

\paragraph{Case $k=\min(m,n)$ (full rank).}
Let $k=\min(m,n)$ and let $\mathbf{U}\in\mathbb{R}^{m\times m}$, $\mathbf{V}\in\mathbb{R}^{n\times n}$ be any orthogonal completions of the singular vector bases of $\mathbf{W}_0$.
Then, by orthogonal invariance of the Frobenius norm,
\begin{equation}
\mathcal{R}_k(\mathbf{W}) = \|\mathbf{U}^\top (\mathbf{W}-\mathbf{W}_0) \mathbf{V}\|_F^2 = \|\mathbf{W}-\mathbf{W}_0\|_F^2,
\end{equation}
so RPSFT becomes standard $\ell_2$ anchoring around the pretrained weights.

\subsection{Block Decomposition Interpretation}
Let $\mathbf{U}=[\mathbf{U}_k\ \ \mathbf{U}_\perp]\in\mathbb{R}^{m\times m}$ and $\mathbf{V}=[\mathbf{V}_k\ \ \mathbf{V}_\perp]\in\mathbb{R}^{n\times n}$ be orthogonal matrices extending $\mathbf{U}_k,\mathbf{V}_k$.
Define the rotated coordinates
\begin{equation}
\widetilde{\mathbf{W}} \triangleq \mathbf{U}^\top \mathbf{W} \mathbf{V}
=
\begin{pmatrix}
A & B\\
C & D
\end{pmatrix},
\end{equation}
where $A\in\mathbb{R}^{k\times k}$.
Similarly $\widetilde{\mathbf{W}}_0 = \mathbf{U}^\top \mathbf{W}_0 \mathbf{V}$ has top-left block $A_0$.
Then
\begin{equation}
\mathcal{R}_k(\mathbf{W})=\|A-A_0\|_F^2.
\label{eq:block_penalty}
\end{equation}
Thus, RPSFT penalizes drift of the top-left $k\times k$ block in the pretrained singular-vector basis, while allowing updates through the other blocks $B,C,D$.

\subsection{Optimality Condition and \texorpdfstring{$O(1/\lambda)$}{O(1/lambda)} Drift Control}
We give a simple bound showing how $\lambda$ controls the protected-block deviation.

\begin{proposition}[Stationary Condition]
\label{prop:stationary}
Assume $f$ is differentiable. Any stationary point $\mathbf{W}_\lambda$ of $F_\lambda$ satisfies
\begin{equation}
\nabla f(\mathbf{W}_\lambda) + 2\lambda \, \mathbf{U}_k \big(\mathbf{U}_k^\top (\mathbf{W}_\lambda - \mathbf{W}_0) \mathbf{V}_k\big) \mathbf{V}_k^\top = 0.
\label{eq:stationary_eq}
\end{equation}
\end{proposition}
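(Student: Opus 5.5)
The argument reduces to computing the gradient of the penalty $\mathcal{R}_k$ and invoking the first-order condition. Because $f$ is differentiable and $\mathcal{R}_k$ is a polynomial in the entries of $\mathbf{W}$, the sum $F_\lambda = f + \lambda\mathcal{R}_k$ is differentiable. A stationary point is therefore exactly a point where $\nabla F_\lambda(\mathbf{W}_\lambda) = \nabla f(\mathbf{W}_\lambda) + \lambda\nabla\mathcal{R}_k(\mathbf{W}_\lambda) = 0$. It remains to show that
\[
\nabla\mathcal{R}_k(\mathbf{W}) = 2\,\mathbf{U}_k\mathbf{U}_k^\top(\mathbf{W}-\mathbf{W}_0)\mathbf{V}_k\mathbf{V}_k^\top,
\]
where gradients are taken with respect to the Frobenius inner product $\langle \mathbf{X},\mathbf{Y}\rangle = \operatorname{tr}(\mathbf{X}^\top\mathbf{Y})$.

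To compute this gradient, write $\mathcal{R}_k(\mathbf{W}) = \|\mathcal{P}_k(\mathbf{W}-\mathbf{W}_0)\|_F^2$, where $\mathcal{P}_k$ is the linear map $\mathbf{X}\mapsto\mathbf{U}_k^\top\mathbf{X}\mathbf{V}_k$. Perturb $\mathbf{W}$ by $\mathbf{H}$ and expand:
\[
\mathcal{R}_k(\mathbf{W}+\mathbf{H}) = \mathcal{R}_k(\mathbf{W}) + 2\langle \mathcal{P}_k(\mathbf{W}-\mathbf{W}_0), \mathcal{P}_k(\mathbf{H})\rangle + \|\mathcal{P}_k(\mathbf{H})\|_F^2.
\]
The last term is $O(\|\mathbf{H}\|_F^2)$, so the directional derivative is the middle term, which is linear in $\mathbf{H}$.

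Next we move $\mathcal{P}_k$ to the other side of the inner product. Its adjoint is $\mathcal{P}_k^*(\mathbf{Y}) = \mathbf{U}_k\mathbf{Y}\mathbf{V}_k^\top$, which follows from cyclicity of the trace:
\[
\operatorname{tr}(\mathbf{Y}^\top\mathbf{U}_k^\top\mathbf{H}\mathbf{V}_k) = \operatorname{tr}\big((\mathbf{U}_k\mathbf{Y}\mathbf{V}_k^\top)^\top\mathbf{H}\big).
\]
Applying this with $\mathbf{Y} = \mathcal{P}_k(\mathbf{W}-\mathbf{W}_0)$ turns the middle term into $\langle 2\,\mathbf{U}_k\mathcal{P}_k(\mathbf{W}-\mathbf{W}_0)\mathbf{V}_k^\top, \mathbf{H}\rangle$. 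This identifies the gradient as claimed, and substituting it into $\nabla F_\lambda(\mathbf{W}_\lambda)=0$ gives \eqref{eq:stationary_eq}.

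No step here is a real obstacle. The only points needing care are using the Frobenius-inner-product convention for matrix gradients and getting the adjoint of the two-sided projection right. In particular, one must not confuse $\mathbf{U}_k\mathbf{Y}\mathbf{V}_k^\top$ with its transpose; orthonormality of the columns of $\mathbf{U}_k$ and $\mathbf{V}_k$ is not needed for this step.
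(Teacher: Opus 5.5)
Your proposal is correct and follows the same route as the paper: compute $\nabla\mathcal{R}_k(\mathbf{W}) = 2\,\mathbf{U}_k\big(\mathbf{U}_k^\top(\mathbf{W}-\mathbf{W}_0)\mathbf{V}_k\big)\mathbf{V}_k^\top$ and set $\nabla F_\lambda(\mathbf{W}_\lambda)=0$. The paper only cites ``standard matrix calculus'' for the gradient. Your first-order expansion together with the adjoint of $\mathbf{X}\mapsto\mathbf{U}_k^\top\mathbf{X}\mathbf{V}_k$ is a correct, explicit version of that step.
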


\begin{proof}
Using $\mathcal{R}_k(\mathbf{W})=\|\mathbf{U}_k^\top (\mathbf{W}-\mathbf{W}_0)\mathbf{V}_k\|_F^2$ and standard matrix calculus,
$\nabla_{\mathbf{W}} \mathcal{R}_k(\mathbf{W}) = 2 \mathbf{U}_k (\mathbf{U}_k^\top (\mathbf{W}-\mathbf{W}_0)\mathbf{V}_k) \mathbf{V}_k^\top$.
Setting $\nabla F_\lambda(\mathbf{W})=0$ yields Eq.~\eqref{eq:stationary_eq}.
\end{proof}

\begin{proposition}[$1/\lambda$ Control of Protected Drift]
\label{prop:lambda_control}
Let $\mathbf{W}_\lambda$ be any stationary point of $F_\lambda$. Then
\begin{equation}
\left\|\mathbf{U}_k^\top (\mathbf{W}_\lambda-\mathbf{W}_0) \mathbf{V}_k \right\|_F
\le
\frac{\|\nabla f(\mathbf{W}_\lambda)\|_F}{2\lambda}.
\label{eq:lambda_bound}
\end{equation}
\end{proposition}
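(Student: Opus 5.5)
The plan is to start from the stationarity identity of Proposition~\ref{prop:stationary} and project it onto the protected block. Write $\Delta \triangleq \mathbf{U}_k^\top(\mathbf{W}_\lambda-\mathbf{W}_0)\mathbf{V}_k\in\mathbb{R}^{k\times k}$. By Eq.~\eqref{eq:stationary_eq}, any stationary point satisfies
\begin{equation*}
2\lambda\,\mathbf{U}_k\Delta\mathbf{V}_k^\top = -\nabla f(\mathbf{W}_\lambda).
\end{equation*}
Multiplying on the left by $\mathbf{U}_k^\top$ and on the right by $\mathbf{V}_k$ gives
\begin{equation*}
2\lambda\,\Delta = -\mathbf{U}_k^\top\nabla f(\mathbf{W}_\lambda)\mathbf{V}_k ,
\end{equation*}
because $\mathbf{U}_k^\top\mathbf{U}_k=\mathbf{I}_k$ and $\mathbf{V}_k^\top\mathbf{V}_k=\mathbf{I}_k$ (orthonormal columns). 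This step isolates exactly the protected block.

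Next, take Frobenius norms. The key inequality is that compressing a matrix by matrices with orthonormal columns does not increase its Frobenius norm:
\begin{equation*}
\|\mathbf{U}_k^\top \mathbf{G}\mathbf{V}_k\|_F\le\|\mathbf{G}\|_F .
\end{equation*}
One way to see this is via the block decomposition of the previous subsection. With orthogonal completions $\mathbf{U}=[\mathbf{U}_k\ \mathbf{U}_\perp]$ and $\mathbf{V}=[\mathbf{V}_k\ \mathbf{V}_\perp]$, the matrix $\mathbf{U}_k^\top\mathbf{G}\mathbf{V}_k$ is the top-left block of $\mathbf{U}^\top\mathbf{G}\mathbf{V}$. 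Its Frobenius norm is therefore at most $\|\mathbf{U}^\top\mathbf{G}\mathbf{V}\|_F=\|\mathbf{G}\|_F$ by orthogonal invariance. Alternatively, $\|\mathbf{U}_k^\top\|_2\le 1$ and $\|\mathbf{V}_k\|_2\le1$ together with $\|AB\|_F\le\|A\|_2\|B\|_F$ give the same bound.

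Applying this with $\mathbf{G}=\nabla f(\mathbf{W}_\lambda)$ yields $2\lambda\|\Delta\|_F\le\|\nabla f(\mathbf{W}_\lambda)\|_F$, and dividing by $2\lambda$ gives Eq.~\eqref{eq:lambda_bound}. I will state the implicit assumption $\lambda>0$; for $\lambda=0$ the right-hand side is interpreted as $+\infty$ and the bound is vacuous.

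There is no real obstacle here. The only point needing care is justifying the norm-contraction step. I will include the block-decomposition argument explicitly, since it reuses Eq.~\eqref{eq:block_penalty}'s setup and makes clear that equality holds exactly when $\nabla f(\mathbf{W}_\lambda)$ lies entirely in the protected block. In fact the stationarity condition forces this: $\nabla f(\mathbf{W}_\lambda)=-2\lambda\,\mathbf{U}_k\Delta\mathbf{V}_k^\top$, so the bound actually holds with equality. I may remark on this as a sharpening.
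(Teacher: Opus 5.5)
Your proof is correct and matches the paper's argument: project the stationarity condition of Proposition~\ref{prop:stationary} onto the protected block via $\mathbf{U}_k^\top(\cdot)\mathbf{V}_k$, then apply the contraction $\|\mathbf{U}_k^\top X\mathbf{V}_k\|_F\le\|X\|_F$. Your closing remark is also right. Stationarity gives $\nabla f(\mathbf{W}_\lambda)=-2\lambda\,\mathbf{U}_k\Delta\mathbf{V}_k^\top$ and $\|\mathbf{U}_k\Delta\mathbf{V}_k^\top\|_F=\|\Delta\|_F$, so the bound in fact holds with equality.
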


\begin{proof}
Left-multiply Eq.~\eqref{eq:stationary_eq} by $\mathbf{U}_k^\top$ and right-multiply by $\mathbf{V}_k$:
\begin{equation}
\mathbf{U}_k^\top \nabla f(\mathbf{W}_\lambda) \mathbf{V}_k + 2\lambda \, \mathbf{U}_k^\top (\mathbf{W}_\lambda-\mathbf{W}_0) \mathbf{V}_k = 0.
\end{equation}
Taking Frobenius norms and using $\|\mathbf{U}_k^\top X \mathbf{V}_k\|_F \le \|X\|_F$ gives Eq.~\eqref{eq:lambda_bound}.
\end{proof}
\paragraph{Implication.}
Eq.~\eqref{eq:lambda_bound} shows that 1) at any stationary point, the protected-block deviation is bounded by the task-gradient magnitude scaled by $O(1/\lambda)$
; 2) as $\lambda$ increases, the deviation of the
protected block $\mathbf{U}_k^\top(\mathbf{W}_\lambda-\mathbf{W}_0)\mathbf{V}_k$ shrinks at rate $O(1/\lambda)$, formalizing a smooth interpolation between
unconstrained SFT ($\lambda=0$) and the constrained regime obtained as
$\lambda\to\infty$.

\subsection{Rank Selection: Boundary and Threshold Rule}
\label{app:rank_selection}

We now formalize why an excessively large protected rank can hurt rapid adaptation.
We work in the forgetting-dominated regime discussed above, where the pretrained
model is locally close to an OOD stationary point, so the first-order OOD term
is negligible.

Let $\Delta \coloneqq \mathbf{W}-\mathbf{W}_0$ and
\begin{equation}
\widetilde{\Delta} \coloneqq \mathbf{U}^\top \Delta \mathbf{V},
\end{equation}
and index the entries of \(\widetilde{\Delta}\) by \(s=(i,j)\), with coordinate
value \(\delta_s\). For each rank \(k\), define the protected coordinate set
\begin{equation}
S_k \coloneqq \{(i,j): 1\le i\le k,\; 1\le j\le k\}.
\end{equation}
Thus \(s\in S_k\) means that \(\delta_s\) lies in the protected top-left
\(k\times k\) block of the pretrained singular-vector basis.

We use the local quadratic model
\begin{equation}
f_{\mathrm{id}}(\mathbf{W}_0+\Delta)-f_{\mathrm{id}}(\mathbf{W}_0)
\approx
-\sum_s g_s \delta_s + \frac12 \sum_s h_s \delta_s^2,
\qquad h_s>0,
\end{equation}
where \(g_s\) is the in-domain learning drive along coordinate \(s\), and \(h_s\)
is the corresponding local curvature. Under RPSFT, the local objective becomes
\begin{equation}
-\sum_s g_s \delta_s
+
\frac12 \sum_s h_s \delta_s^2
+
\lambda \sum_{s\in S_k} \delta_s^2.
\end{equation}

For OOD forgetting, we use the quadratic proxy
\begin{equation}
f_{\mathrm{ood}}(\mathbf{W}_0+\Delta)-f_{\mathrm{ood}}(\mathbf{W}_0)
\approx
\frac12 \sum_s c_s \delta_s^2,
\qquad c_s\ge 0,
\end{equation}
where \(c_s\) measures how sensitive OOD performance is to movement along
coordinate \(s\): larger \(c_s\) means larger OOD degradation.

\begin{proposition}[Optimal local step]
\label{prop:optimal_local_step}
For each coordinate \(s\), the minimizer of the local regularized objective is
\begin{equation}
\delta_s^\star(k)=
\begin{cases}
\dfrac{g_s}{h_s+2\lambda}, & s\in S_k,\\[1.2ex]
\dfrac{g_s}{h_s}, & s\notin S_k.
\end{cases}
\end{equation}
The resulting OOD increase is
\begin{equation}
F_{\mathrm{ood}}(k)
=
\frac12 \sum_{s\in S_k} \frac{c_s g_s^2}{(h_s+2\lambda)^2}
+
\frac12 \sum_{s\notin S_k} \frac{c_s g_s^2}{h_s^2}.
\end{equation}
The resulting ID gain, measured on the unregularized local ID proxy after taking
the regularized step, is
\begin{equation}
G_{\mathrm{id}}(k)
=
\sum_{s\in S_k}
\left(
\frac{g_s^2}{h_s+2\lambda}
-
\frac12 h_s \frac{g_s^2}{(h_s+2\lambda)^2}
\right)
+
\frac12 \sum_{s\notin S_k} \frac{g_s^2}{h_s}.
\end{equation}
\end{proposition}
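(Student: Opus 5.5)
The local regularized objective is a sum of one-dimensional quadratics, one per coordinate $s$, with no cross terms, so I will minimize each term separately. For $s\in S_k$ the term is $-g_s\delta_s+\tfrac12(h_s+2\lambda)\delta_s^2$. Since $h_s>0$ and $\lambda\ge 0$, this is strictly convex. Setting its derivative $-g_s+(h_s+2\lambda)\delta_s$ to zero gives $\delta_s^\star=g_s/(h_s+2\lambda)$. For $s\notin S_k$ the penalty is absent, and the same computation with curvature $h_s$ gives $\delta_s^\star=g_s/h_s$. Strict convexity of each summand makes these the unique global minimizers, so the joint minimizer is the coordinatewise one.

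Next I substitute $\delta^\star(k)$ into the OOD proxy $\tfrac12\sum_s c_s\delta_s^2$. Squaring each case gives $c_sg_s^2/(h_s+2\lambda)^2$ on $S_k$ and $c_sg_s^2/h_s^2$ off $S_k$. Splitting the sum over $S_k$ and its complement yields the stated formula for $F_{\mathrm{ood}}(k)$.

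For the ID gain I define $G_{\mathrm{id}}(k)$ as minus the unregularized local ID change, that is, $\sum_s\big(g_s\delta_s^\star-\tfrac12 h_s(\delta_s^\star)^2\big)$. On $S_k$ this is $g_s^2/(h_s+2\lambda)-\tfrac12 h_s g_s^2/(h_s+2\lambda)^2$, which is exactly the stated protected-block term. Off $S_k$ it is $g_s^2/h_s-\tfrac12 g_s^2/h_s=\tfrac12 g_s^2/h_s$.

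There is no real obstacle here. The only points needing care are the following:
\begin{itemize}
\item stating explicitly that the gain is evaluated on the unregularized proxy, so the penalty term $\lambda\delta_s^2$ does not appear in $G_{\mathrm{id}}$;
\item noting that the decoupling relies on the diagonal (coordinatewise) quadratic model assumed in the setup, not on any property of the true Hessian.
\end{itemize}
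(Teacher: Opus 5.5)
Your proposal is correct and takes the same route as the paper. The paper also uses separability across coordinates, minimizes each one-dimensional quadratic in closed form, and substitutes $\delta_s^\star(k)$ into the OOD proxy and the unregularized ID proxy; you additionally write out the derivative computations and the strict-convexity justification (from $h_s>0$, $\lambda\ge 0$) that the paper leaves implicit.
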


\begin{proof}
The objective is separable across coordinates, so the optimizer is obtained
coordinatewise. Substituting the resulting \(\delta_s^\star(k)\) into the ID and
OOD quadratic proxies gives the stated formulas.
\end{proof}

\begin{proposition}[Existence of an upper rank boundary]
\label{prop:rank_boundary}
Assume there exists \(q\) such that
\begin{equation}
c_s=0
\qquad \text{for all } s\notin S_q.
\end{equation}
That is, all OOD-sensitive coordinates are already contained in the protected
top-\(q\) block. Define the scalarized utility
\begin{equation}
\Phi(k)\coloneqq G_{\mathrm{id}}(k)-\beta F_{\mathrm{ood}}(k),
\qquad \beta>0,
\end{equation}
where \(\beta\) controls how strongly OOD forgetting is penalized relative to
ID gain. Then every maximizer \(k^\star\) of \(\Phi\) satisfies
\begin{equation}
k^\star \le q.
\end{equation}
\end{proposition}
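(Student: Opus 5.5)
The plan is to compare $\Phi(k)$ directly with $\Phi(q)$ for every $k>q$, using the closed forms of $F_{\mathrm{ood}}(k)$ and $G_{\mathrm{id}}(k)$ from Proposition~\ref{prop:optimal_local_step}. The protected sets are nested, $S_q\subset S_k$ for $k>q$. So passing from rank $q$ to rank $k$ only changes the coordinates $s\in S_k\setminus S_q$. Each such coordinate moves from the unprotected regime, $\delta_s=g_s/h_s$, to the protected regime, $\delta_s=g_s/(h_s+2\lambda)$. All other summands in both formulas are identical at $k$ and at $q$. The key steps are as follows.

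\emph{Step 1: the OOD term is unchanged.} Since $S_k\setminus S_q$ lies outside $S_q$, the hypothesis gives $c_s=0$ there. Hence these coordinates contribute $0$ to $F_{\mathrm{ood}}$ in both regimes, and $F_{\mathrm{ood}}(k)=F_{\mathrm{ood}}(q)$.

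\emph{Step 2: the ID gain can only drop.} For a coordinate with drive $g$ and curvature $h>0$, the change in its contribution is
\begin{equation*}
\frac{g^2}{2h}-\Big(\frac{g^2}{h+2\lambda}-\frac{h\,g^2}{2(h+2\lambda)^2}\Big)
=\frac{g^2\big[(h+2\lambda)^2-2h(h+2\lambda)+h^2\big]}{2h(h+2\lambda)^2}
=\frac{2\lambda^2 g^2}{h(h+2\lambda)^2}\ \ge 0 .
\end{equation*}
Summing over $S_k\setminus S_q$ gives $G_{\mathrm{id}}(k)\le G_{\mathrm{id}}(q)$.

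\emph{Step 3: combine.} Steps 1 and 2 together give
\begin{equation*}
\Phi(k)=\Phi(q)-\sum_{s\in S_k\setminus S_q}\frac{2\lambda^2 g_s^2}{h_s(h_s+2\lambda)^2}\ \le\ \Phi(q)\qquad\text{for all } k>q .
\end{equation*}
Note that $\beta$ plays no role in this comparison.

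The main obstacle is turning ``$\le$'' into the claim that \emph{every} maximizer satisfies $k^\star\le q$. That requires strict inequality. Strictness holds exactly when $\lambda>0$ and some $g_s\neq 0$ with $s\in S_k\setminus S_q$.

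I would make this nondegeneracy explicit. The natural assumption is $\lambda>0$ together with a nonzero in-domain drive on every newly protected coordinate set. Under it, each $k>q$ gives $\Phi(k)<\Phi(q)\le\max_{k'}\Phi(k')$, so no $k>q$ can be a maximizer.

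In the degenerate case all $g_s=0$ on $S_k\setminus S_q$ (or $\lambda=0$), and then $\Phi(k)=\Phi(q)$. Here I would state the weaker conclusion that a maximizer can always be chosen with $k^\star\le q$, equivalently that $\Phi$ is nonincreasing beyond $q$. I would flag this caveat rather than claim more than the calculation gives.
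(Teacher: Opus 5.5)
Your proof is correct and follows the same route as the paper: for $k\ge q$ the OOD term is frozen because $c_s=0$ off $S_q$, and each newly protected coordinate weakly lowers $G_{\mathrm{id}}$, giving $\Phi(k)\le\Phi(q)$; your explicit loss $2\lambda^2 g_s^2/(h_s(h_s+2\lambda)^2)$ is exactly the $\Delta_{\mathrm{ID},s}$ of Corollary~\ref{cor:threshold_rule}. Your caveat is well placed, because the paper goes directly from this weak inequality to ``no maximizer can lie above $q$'', which genuinely requires $\lambda>0$ and some $g_s\neq 0$ on $S_{q+1}\setminus S_q$ (with $\lambda=0$ every $k$ ties), so either your nondegeneracy assumption or your weaker conclusion that some maximizer satisfies $k^\star\le q$ is the right repair.
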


\begin{proof}
For \(k\ge q\), enlarging the protected set no longer changes \(F_{\mathrm{ood}}(k)\),
because every coordinate with \(c_s>0\) is already protected at rank \(q\). Hence
\begin{equation}
F_{\mathrm{ood}}(k)=F_{\mathrm{ood}}(q), \qquad k\ge q.
\end{equation}
On the other hand, protecting any additional coordinate weakly decreases its
contribution to \(G_{\mathrm{id}}(k)\), with strict decrease whenever
\(\lambda>0\) and \(g_s\neq 0\). Therefore
\begin{equation}
G_{\mathrm{id}}(k)\le G_{\mathrm{id}}(q), \qquad k\ge q,
\end{equation}
and thus
\begin{equation}
\Phi(k)\le \Phi(q), \qquad k\ge q.
\end{equation}
So no maximizer can lie above \(q\).
\end{proof}

\begin{corollary}[Threshold rule]
\label{cor:threshold_rule}
Consider a coordinate \(s\) that is currently unprotected. Define the ID cost of
protecting this coordinate by
\begin{equation}
\Delta_{\mathrm{ID},s}
\coloneqq
\frac12 \frac{g_s^2}{h_s}
-
\left(
\frac{g_s^2}{h_s+2\lambda}
-
\frac12 h_s \frac{g_s^2}{(h_s+2\lambda)^2}
\right)
=
\frac{2\lambda^2 g_s^2}{h_s (h_s+2\lambda)^2},
\end{equation}
and define the OOD gain of protecting this coordinate by
\begin{equation}
\Delta_{\mathrm{OOD},s}
\coloneqq
\frac12 c_s \frac{g_s^2}{h_s^2}
-
\frac12 c_s \frac{g_s^2}{(h_s+2\lambda)^2}
=
\frac{2 c_s \lambda (h_s+\lambda) g_s^2}
{h_s^2 (h_s+2\lambda)^2}.
\end{equation}
Then protecting coordinate \(s\) improves \(\Phi\) iff
\begin{equation}
\beta\,\Delta_{\mathrm{OOD},s}>\Delta_{\mathrm{ID},s},
\end{equation}
which is equivalent to
\begin{equation}
c_s>\frac{\lambda h_s}{\beta (h_s+\lambda)}.
\end{equation}
\end{corollary}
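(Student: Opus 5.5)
The plan is to use separability. By Proposition~\ref{prop:optimal_local_step}, both $G_{\mathrm{id}}(k)$ and $F_{\mathrm{ood}}(k)$ are sums of per-coordinate contributions, and each contribution depends only on whether that coordinate is protected. So moving a single unprotected coordinate $s$ into the protected set changes only the $s$-th summands. Writing $\Phi'$ for the utility after protecting $s$, we get $\Phi'-\Phi=-\Delta_{\mathrm{ID},s}+\beta\,\Delta_{\mathrm{OOD},s}$, where the two $\Delta$'s are defined as the unprotected summand minus the protected summand in the formulas of Proposition~\ref{prop:optimal_local_step}. This gives the ``improves $\Phi$ iff $\beta\Delta_{\mathrm{OOD},s}>\Delta_{\mathrm{ID},s}$'' statement at once.

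Next I verify the two closed forms by common-denominator algebra.
\begin{itemize}
\item For the ID cost, put everything over $2h_s(h_s+2\lambda)^2$. The numerator is $g_s^2\bigl[(h_s+2\lambda)^2-2h_s(h_s+2\lambda)+h_s^2\bigr]=g_s^2\bigl((h_s+2\lambda)-h_s\bigr)^2=4\lambda^2g_s^2$, which yields $2\lambda^2g_s^2/\bigl(h_s(h_s+2\lambda)^2\bigr)$.
\item For the OOD gain, the difference of squares $(h_s+2\lambda)^2-h_s^2=4\lambda(h_s+\lambda)$ gives the stated expression directly.
\end{itemize}

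Finally I reduce the inequality. Dividing $\beta\,\Delta_{\mathrm{OOD},s}>\Delta_{\mathrm{ID},s}$ by the positive common factor $2\lambda g_s^2/\bigl(h_s^2(h_s+2\lambda)^2\bigr)$, using $h_s>0$, leaves $\beta c_s(h_s+\lambda)>\lambda h_s$. Since $\beta>0$ and $h_s+\lambda>0$, this is equivalent to $c_s>\lambda h_s/\bigl(\beta(h_s+\lambda)\bigr)$.

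The only delicate point is this division: it requires $\lambda>0$ and $g_s\neq0$. In the degenerate case $\lambda=0$ or $g_s=0$, both $\Delta$'s vanish and protection changes nothing, so $\Phi$ is not improved. I would state that the equivalence is understood under $\lambda>0$ and $g_s\neq 0$, matching the strictness caveat already used in the proof of Proposition~\ref{prop:rank_boundary}. Beyond that caveat there is no real obstacle; the rest is routine algebra.
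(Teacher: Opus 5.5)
Your proposal is correct and takes the same route as the paper, which compares the protected and unprotected one-coordinate contributions from Proposition~\ref{prop:optimal_local_step} and simplifies; your common-denominator algebra for both closed forms and your reduction to $c_s>\lambda h_s/\bigl(\beta(h_s+\lambda)\bigr)$ check out. Your caveat is genuinely needed, not cosmetic. When $\lambda=0$ or $g_s=0$, both $\Delta$'s vanish, so protecting $s$ does not improve $\Phi$, yet the final threshold inequality can still hold (e.g.\ $\lambda=0$ with $c_s>0$). The second equivalence therefore requires $\lambda>0$ and $g_s\neq 0$, which the paper's one-line proof leaves implicit.
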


\begin{proof}
This follows by comparing the protected and unprotected one-coordinate
contributions in Proposition~\ref{prop:optimal_local_step} and simplifying.
\end{proof}

\paragraph{Interpretation.}
The upper-boundary proposition shows that once the protected rank exceeds the
support of OOD-sensitive coordinates, increasing \(k\) can no longer improve
robustness, but can still worsen the trade-off between mitigating forgetting and rapid adaptation by
over-suppressing directions that are useful for in-domain adaptation. The
threshold rule gives the per-coordinate version of the same idea: a direction
should be protected only if its OOD sensitivity \(c_s\) is large enough to
outweigh the in-domain gain lost by shrinking that direction. In this view,
\(c_s\) can be interpreted as the OOD sensitivity of coordinate \(s\) in the
pretrained singular basis, where larger \(c_s\) means that updating this
direction causes greater OOD degradation. If OOD sensitivity decays with
singular index, this naturally induces a finite rank boundary, so the optimal
rank \(k\) should be large enough to cover the high-\(c_s\) directions, but not
so large that it also protects many low-\(c_s\) directions. This provides a
simple explanation for the rank sweep in Appendix~\ref{app:robustness-extra}:
moving from \(k=0\) to a moderate rank improves robustness because it protects
the most OOD-sensitive directions, whereas overly large ranks behave more like
global anchoring and degrade the trade-off between mitigating forgetting and rapid adaptation
by over-constraining directions that are not strongly OOD-sensitive.

\paragraph{Practical guidance.}
Before SFT, we choose \(k\) using the base model and the same Fisher-projected gradient-energy diagnostic shown in Figure~\ref{fig:svd-fisher-motivation}. Concretely, we compute per-sample gradients on a small batch from the OOD data, project them into the pretrained SVD basis of an early attention matrix such as layer-1 \texttt{q\_proj}, and sweep candidate ranks \(r\). We then inspect the curve \(x(r)=100r^2/R^2\) versus \(y(r)=\mathrm{tr}(\mathbf{P}_{\mathrm{svd},r}\mathbf{F})/\mathrm{tr}(\mathbf{F})\). A practical default is the smallest \(r\) whose strict top-\(r\times r\) block captures about \(20\%\) of the gradient energy in the early attention layer: this protects a meaningful fraction of loss-sensitive directions while keeping the protected block small enough for adaptation. In our experiments, \(r=768\) corresponds to a roughly \(5\%\) strict block and already captures about \(20\%\) of the gradient energy, so we use \(k=768\) as the default protected rank.

\subsection{Gradient-Flow View: Exponentially Damped Task-Induced Drift}
\label{app:grad_flow}

To gain insight into the optimization dynamics induced by RPSFT, we consider the
continuous-time gradient-flow limit of the regularized objective
\[
F_\lambda(\mathbf{W})=f(\mathbf{W})+\lambda\|\mathbf{U}_k^\top(\mathbf{W}-\mathbf{W}_0)\mathbf{V}_k\|_F^2.
\]
Assume that $f$ is differentiable with locally Lipschitz gradient, so that
the gradient flow is well-defined. The resulting dynamics are
\begin{equation}
\frac{d\mathbf{W}(t)}{dt}
=
-\nabla f(\mathbf{W}(t))
-
2\lambda\,\mathbf{U}_k\big(\mathbf{U}_k^\top(\mathbf{W}(t)-\mathbf{W}_0)\mathbf{V}_k\big)\mathbf{V}_k^\top.
\label{eq:grad_flow}
\end{equation}

\paragraph{Protected coordinates.}
Define the protected coordinates
\[
A(t)\triangleq \mathbf{U}_k^\top(\mathbf{W}(t)-\mathbf{W}_0)\mathbf{V}_k,
\]
which measure the deviation of the weights from the pretrained model within the
top-$k$ singular subspace of $\mathbf{W}_0$. Multiplying Eq.~\eqref{eq:grad_flow} on the left
by $\mathbf{U}_k^\top$ and on the right by $\mathbf{V}_k$ yields
\begin{equation}
\frac{dA(t)}{dt}
=
-\,\mathbf{U}_k^\top \nabla f(\mathbf{W}(t)) \mathbf{V}_k
-
2\lambda A(t).
\label{eq:A_dynamics} 
\end{equation}

\paragraph{Closed-form solution.}
Eq.~\eqref{eq:A_dynamics} is a linear ODE with a time-dependent forcing term.
Define
\[
G(t)\triangleq \mathbf{U}_k^\top \nabla f(\mathbf{W}(t)) \mathbf{V}_k,
\]
so that $\dot A(t) + 2\lambda A(t) = -G(t)$.
Solving this equation yields
\begin{equation}
A(t)
=
e^{-2\lambda t}A(0)
-
\int_0^t e^{-2\lambda (t-s)} G(s)\,ds,
\label{eq:A_solution}
\end{equation}
or equivalently,
\[
A(t)
=
e^{-2\lambda t}A(0)
-
\int_0^t e^{-2\lambda (t-s)}
\big(\mathbf{U}_k^\top \nabla f(\mathbf{W}(s)) \mathbf{V}_k\big)\,ds.
\]

The first term represents an exponentially decaying initial deviation from the
pretrained weights, while the second term is an exponentially weighted
accumulation of task-induced gradients in the protected subspace.

\paragraph{Interpretation.}
Eq.~\eqref{eq:A_solution} shows that RPSFT acts as an exponential damping
mechanism on task-induced drift within the pretrained top-$k$ singular subspace.
Even when fine-tuning is initialized from the pretrained model (so that $A(0)=0$),
task gradients generally introduce nonzero drift, but their influence is
continuously decayed with rate $2\lambda$. Consequently, RPSFT behaves
as a temporal low-pass filter on gradients in the protected subspace, stabilizing
dominant pretrained representations while still allowing task-relevant adaptation.


\section{Additional Results}
\label{app:robustness-extra}

We study the sensitivity of RPSFT to the protected rank $k$ on Llama-3.1-8B-Instruct as the empirical counterpart to the rank-boundary analysis in Appendix~\ref{app:rank_selection}.
Here, SFT corresponds to $k=0$, and our chosen configuration is $k=768$.
The LoRA row is a vanilla LoRA baseline with adapter rank $r=32$; no RPSFT penalty is applied to it. The full-rank setting $k=4096$ reduces RPSFT to weight-space anchoring around the pretrained model, so we label it as L2 Init in Table~\ref{tab:robust-llama-merged}. This matches the regenerative regularization view of \citet{kumar2024maintainingplasticitycontinuallearning}. The main insight is that Llama benefits from an intermediate protected rank: $k=256$--$768$ preserves adaptation, but larger $k$ progressively weakens both in-domain adaptation and OOD generalization. This matches the positive Llama OOD first-order signal in Figure~\ref{fig:first-order-signal}: when the update direction can still help OOD performance, over-protecting too many directions removes useful transfer rather than only preventing forgetting. The same pattern agrees with the rank-boundary analysis in Appendix~\ref{app:rank_selection}: once the protected block already covers the most OOD-sensitive directions, expanding it mainly constrains useful task updates.

\begin{table}[ht]
\centering
\scriptsize
\setlength{\tabcolsep}{4pt}
\renewcommand{\arraystretch}{0.95}
\caption{Rank robustness on Llama-3.1-8B-Instruct across ID Avg@k, ID Pass@k, and OOD Avg@1 (\%). The LoRA row is vanilla LoRA with adapter rank $r=32$; RPSFT rows vary protected rank $k$. Base cells are left unmarked, and bold marks the best non-base value in each domain block and column.}
\label{tab:robust-llama-merged}
\resizebox{\textwidth}{!}{%
\begin{tabular}{ll S S S S S S S}
\toprule
Domain & Setting & {AIME24} & {AIME25} & {AMC23} & {MATH-500} & {Minerva} & {Olympiad} & {\MeanAvg} \\
\midrule
\multirow{8}{*}{\rotatebox[origin=c]{90}{\textbf{ID Avg@k}}}
& Base & 3.13 & 0.83 & 21.25 & 43.65 & 22.33 & 14.63 & 17.64 \\
& SFT & 5.00 & 5.00 & 32.19 & 59.45 & 26.88 & 25.59 & 25.69 \\
& Vanilla LoRA ($r$=32) & 3.13 & 2.50 & 23.75 & 47.80 & 21.60 & 14.63 & 18.90 \\
& RPSFT-256 & \bfseries 5.42 & 6.46 & 29.84 & 59.05 & \bfseries 28.63 & 26.70 & 26.02 \\
& RPSFT-512 & 4.58 & \bfseries 7.50 & \bfseries 34.06 & 58.20 & 27.34 & \bfseries 27.34 & \bfseries 26.50 \\
& RPSFT-768 & 4.38 & 6.67 & 31.56 & \bfseries 60.45 & 27.48 & 25.52 & 26.01 \\
& RPSFT-2048 & 4.17 & 5.21 & 28.44 & 56.70 & 25.64 & 23.63 & 23.97 \\
& L2 Init ($k$=4096) & 4.38 & 1.04 & 21.88 & 44.60 & 21.51 & 16.00 & 18.24 \\
\midrule
\multirow{8}{*}{\rotatebox[origin=c]{90}{\textbf{ID Pass@k}}}
& Base & 20.00 & 13.33 & 72.50 & 65.00 & 44.85 & 29.78 & 40.91 \\
& SFT & 20.00 & 13.33 & 77.50 & \bfseries 78.60 & 47.06 & 41.33 & 46.30 \\
& Vanilla LoRA ($r$=32) & 23.33 & 20.00 & 80.00 & 67.40 & 48.53 & 29.78 & 44.84 \\
& RPSFT-256 & 20.00 & \bfseries 26.66 & 75.00 & 75.40 & \bfseries 50.74 & 42.67 & 48.41 \\
& RPSFT-512 & 20.00 & 23.33 & \bfseries 87.50 & 76.60 & 49.26 & \bfseries 49.26 & \bfseries 50.99 \\
& RPSFT-768 & 20.00 & 20.00 & 85.00 & 77.20 & \bfseries 50.74 & 42.22 & 49.19 \\
& RPSFT-2048 & 16.67 & 20.00 & 72.50 & 76.60 & 47.79 & 39.85 & 45.57 \\
& L2 Init ($k$=4096) & \bfseries 26.66 & 13.33 & 72.50 & 67.60 & 47.79 & 31.85 & 43.29 \\
\midrule
 &  & {GPQA} & {IFEval} & {MMLU-Pro} & {SuperGPQA} & {Safety} & {TruthfulQA} & {\MeanAvg} \\
\midrule
\multirow{8}{*}{\rotatebox[origin=c]{90}{\textbf{OOD Avg@1}}}
& Base & 25.89 & 32.53 & 47.14 & 18.77 & 61.10 & 58.04 & 40.58 \\
& SFT & 31.70 & 31.42 & 52.86 & 19.75 & 57.90 & 55.70 & 41.55 \\
& Vanilla LoRA ($r$=32) & 28.57 & 31.79 & 48.57 & 17.78 & 63.80 & 56.14 & 41.11 \\
& RPSFT-256 & 33.93 & 30.87 & \bfseries 58.57 & 21.98 & 63.40 & 56.29 & 44.17 \\
& RPSFT-512 & 32.81 & 30.87 & 52.86 & 20.74 & \bfseries 68.70 & 55.99 & 43.66 \\
& RPSFT-768 & \bfseries 34.60 & \bfseries 33.46 & \bfseries 58.57 & \bfseries 24.20 & 61.70 & 58.63 & \bfseries 45.19 \\
& RPSFT-2048 & 33.26 & 31.05 & 54.29 & 23.95 & 58.80 & 56.58 & 42.99 \\
& L2 Init ($k$=4096) & 26.34 & 32.53 & 47.14 & 16.79 & 62.70 & \bfseries 61.11 & 41.10 \\
\bottomrule
\end{tabular}}
\renewcommand{\arraystretch}{1}
\end{table}

\FloatBarrier

\paragraph{Qwen2.5-7B robustness sweep.}
Table~\ref{tab:robust-qwen-merged} reports the same protected-rank sweep on Qwen2.5-7B-Instruct. Here, SFT corresponds to $k=0$, the LoRA row is vanilla LoRA with adapter rank $r=32$, the chosen RPSFT configuration is again $k=768$, and $k=3584$ is equivalent to L2 Init. Qwen shows the other side of the same trade-off: increasing $k$ reduces in-domain adaptation but steadily mitigates OOD forgetting, with the full-rank L2 Init row closest to the base model on OOD average. This matches Figure~\ref{fig:first-order-signal}, where the Qwen OOD first-order signal is weaker and changes sign more often, so unconstrained task adaptation is less reliably aligned with OOD improvement. The trend is also consistent with Appendix~\ref{app:rank_selection}: protecting more OOD-sensitive directions improves retention, but beyond a moderate rank the extra protection starts to suppress task learning. Vanilla LoRA remains weaker, especially on Qwen OOD retention, indicating that parameter-efficient adaptation alone does not provide the same forgetting control as the RPSFT projected-block penalty.

\begin{table}[ht]
\centering
\scriptsize
\setlength{\tabcolsep}{4pt}
\renewcommand{\arraystretch}{0.95}
\caption{Rank robustness on Qwen2.5-7B-Instruct across ID Avg@k, ID Pass@k, and OOD Avg@1 (\%). The LoRA row is vanilla LoRA with adapter rank $r=32$; RPSFT rows vary protected rank $k$. Base cells are left unmarked, and bold marks the best non-base value in each domain block and column.}
\label{tab:robust-qwen-merged}
\resizebox{\textwidth}{!}{%
\begin{tabular}{ll S S S S S S S}
\toprule
Domain & Setting & {AIME24} & {AIME25} & {AMC23} & {MATH-500} & {Minerva} & {Olympiad} & {\MeanAvg} \\
\midrule
\multirow{8}{*}{\rotatebox[origin=c]{90}{\textbf{ID Avg@k}}}
& Base & 12.08 & 7.70 & 52.03 & 72.85 & 37.09 & 37.41 & 36.53 \\
& SFT & 15.21 & \bfseries 19.38 & 50.47 & 72.85 & 31.52 & 37.41 & 37.81 \\
& Vanilla LoRA ($r$=32) & 11.25 & 13.54 & 46.41 & 71.80 & 34.24 & 33.96 & 35.20 \\
& RPSFT-256 & \bfseries 16.25 & 17.50 & 54.38 & 73.25 & 32.81 & 37.56 & 38.63 \\
& RPSFT-512 & 15.00 & \bfseries 19.38 & 52.97 & 73.30 & 33.23 & \bfseries 39.37 & 38.88 \\
& RPSFT-768 & 14.38 & 18.75 & \bfseries 55.31 & \bfseries 74.95 & 33.82 & 38.00 & \bfseries 39.20 \\
& RPSFT-2048 & 14.17 & 17.08 & 51.72 & 74.70 & 33.50 & 39.15 & 38.39 \\
& L2 Init ($k$=3584) & 11.88 & 9.38 & 51.88 & 74.70 & \bfseries 37.55 & 37.37 & 37.13 \\
\midrule
\multirow{8}{*}{\rotatebox[origin=c]{90}{\textbf{ID Pass@k}}}
& Base & 36.67 & 36.67 & 90.00 & 84.60 & 55.17 & 54.96 & 59.68 \\
& SFT & \bfseries 50.00 & 36.67 & 87.50 & \bfseries 87.60 & \bfseries 58.09 & 54.96 & 62.47 \\
& Vanilla LoRA ($r$=32) & 40.00 & 30.00 & 77.50 & 84.60 & 55.88 & 52.15 & 56.69 \\
& RPSFT-256 & 46.67 & 36.67 & \bfseries 90.00 & 86.60 & 54.04 & 54.37 & 61.39 \\
& RPSFT-512 & 43.33 & 40.00 & 85.00 & 86.60 & 58.46 & \bfseries 57.04 & 61.74 \\
& RPSFT-768 & 43.33 & \bfseries 43.33 & \bfseries 90.00 & 87.40 & 56.62 & 56.15 & \bfseries 62.81 \\
& RPSFT-2048 & 43.33 & 36.67 & 85.00 & 86.60 & 57.35 & 56.29 & 60.87 \\
& L2 Init ($k$=3584) & 30.00 & 40.00 & \bfseries 90.00 & 84.80 & \bfseries 58.09 & 52.15 & 59.17 \\
\midrule
 &  & {GPQA} & {IFEval} & {MMLU-Pro} & {SuperGPQA} & {Safety} & {TruthfulQA} & {\MeanAvg} \\
\midrule
\multirow{8}{*}{\rotatebox[origin=c]{90}{\textbf{OOD Avg@1}}}
& Base & 33.93 & 61.37 & 67.14 & 27.65 & 74.50 & 66.96 & 55.26 \\
& SFT & 32.80 & 52.13 & 65.71 & 26.42 & 70.10 & 59.50 & 51.11 \\
& Vanilla LoRA ($r$=32) & 32.59 & 45.84 & 58.57 & 23.21 & 67.60 & 64.04 & 48.64 \\
& RPSFT-256 & 34.82 & 56.38 & 67.14 & 29.38 & 70.00 & 59.80 & 52.92 \\
& RPSFT-512 & \bfseries 36.38 & 56.01 & 65.71 & \bfseries 29.63 & 68.80 & 61.26 & 52.97 \\
& RPSFT-768 & 34.60 & 54.34 & 70.00 & 26.42 & \bfseries 71.10 & 62.43 & 53.15 \\
& RPSFT-2048 & 33.71 & 57.67 & \bfseries 74.29 & 27.41 & 70.90 & 63.45 & 54.57 \\
& L2 Init ($k$=3584) & 35.49 & \bfseries 63.03 & 68.57 & 27.41 & 70.10 & \bfseries 66.67 & \bfseries 55.21 \\
\bottomrule
\end{tabular}}
\renewcommand{\arraystretch}{1}
\end{table}

\FloatBarrier

\paragraph{OOD Pass@1 results.}
Tables~\ref{tab:sft-ood-pass1-app} and~\ref{tab:rl-ood-pass1-app} report the earlier one-sample OOD evaluation. The main results use the four-sample OOD Avg@k and Pass@k evaluation in Tables~\ref{tab:sft-merged} and~\ref{tab:rl-ood}.

\begin{table}[ht]
\centering
\scriptsize
\setlength{\tabcolsep}{3pt}
\renewcommand{\arraystretch}{0.95}
\caption{SFT-stage OOD Pass@1 results on the six OOD benchmarks. Bold marks the strongest tuned result within each model block.}
\label{tab:sft-ood-pass1-app}
\resizebox{\textwidth}{!}{%
\begin{tabular}{l S S S S S S S S S S S S S S S}
\toprule
Benchmark & \multicolumn{5}{c}{\textbf{Llama-3.1-8B-Instruct}} & \multicolumn{5}{c}{\textbf{Qwen2.5-7B-Instruct}} & \multicolumn{5}{c}{\textbf{Qwen2.5-3B-Instruct}} \\
\cmidrule(lr){2-6}\cmidrule(lr){7-11}\cmidrule(lr){12-16}
& {Base} & {SFT} & {IW} & {DFT} & {RPSFT} & {Base} & {SFT} & {IW} & {DFT} & {RPSFT} & {Base} & {SFT} & {IW} & {DFT} & {RPSFT} \\
\midrule
GPQA & 25.89 & 31.70 & \bfseries 37.72 & 27.68 & 34.60 & 33.93 & 32.80 & 32.37 & 30.36 & \bfseries 34.60 & 27.01 & 30.58 & 27.01 & 26.56 & \bfseries 31.70 \\
IFEval & 32.53 & 31.42 & 32.53 & 31.98 & \bfseries 33.46 & 61.37 & 52.13 & 54.34 & \bfseries 59.70 & 54.34 & 54.16 & 47.32 & 46.03 & 51.20 & \bfseries 52.13 \\
MMLU-Pro & 47.14 & 52.86 & 51.43 & 52.86 & \bfseries 58.57 & 67.14 & 65.71 & 58.57 & 62.86 & \bfseries 70.00 & 52.86 & \bfseries 45.71 & \bfseries 45.71 & 34.29 & 44.28 \\
SuperGPQA & 18.77 & 19.75 & 21.48 & 17.53 & \bfseries 24.20 & 27.65 & 26.42 & \bfseries 26.91 & 25.19 & 26.42 & 18.27 & 19.26 & 19.51 & 16.30 & \bfseries 21.48 \\
Safety & 61.10 & 57.90 & \bfseries 62.20 & 60.80 & 61.70 & 74.50 & 70.10 & 70.10 & \bfseries 71.90 & 71.10 & 65.90 & 59.60 & 60.00 & \bfseries 65.80 & 62.20 \\
TruthfulQA & 58.04 & 55.70 & 49.56 & \bfseries 60.82 & 58.63 & 66.96 & 59.50 & 60.53 & \bfseries 65.06 & 62.43 & 59.94 & 49.42 & 50.29 & \bfseries 57.31 & 51.61 \\
\MeanAvg{} & 40.58 & 41.55 & 42.49 & 41.95 & \bfseries 45.19 & 55.26 & 51.11 & 50.47 & 52.52 & \bfseries 53.15 & 46.36 & 41.99 & 41.43 & 41.91 & \bfseries 43.90 \\
\bottomrule
\end{tabular}}
\renewcommand{\arraystretch}{1}
\end{table}

\begin{table}[ht]
\centering
\scriptsize
\setlength{\tabcolsep}{4.5pt}
\caption{Downstream RL OOD Pass@1 results under DAPO. Entries are init$\rightarrow$DAPO, and final checkpoints are selected by ID performance.}
\label{tab:rl-ood-pass1-app}
\begin{tabular}{l c c c c c c c}
\toprule
Method & {GPQA} & {IFEval} & {MMLU-Pro} & {SuperGPQA} & {Safety} & {TruthfulQA} & {\MeanAvg} \\
\midrule
\multicolumn{8}{l}{\textbf{Llama-3.1-8B-Instruct}}\\
\midrule
SFT & \stagepair{31.70}{35.27} & \stagepair{31.42}{31.79} & \stagepair{52.86}{50.00} & \stagepair{19.75}{24.94} & \stagepair{57.90}{59.20} & \stagepair{55.70}{59.06} & \stagepair{41.55}{43.38} \\
IW & \stagebest{37.72}{35.71} & \stagepair{32.53}{31.24} & \stagepair{51.43}{58.57} & \stagepair{21.48}{24.69} & \stagepair{62.20}{60.50} & \stagepair{49.56}{54.82} & \stagepair{42.49}{44.25} \\
DFT & \stagepair{27.68}{27.23} & \stagepair{31.98}{31.05} & \stagebest{52.86}{60.00} & \stagepair{17.53}{19.75} & \stagepair{60.80}{58.60} & \stagebest{60.82}{64.18} & \stagepair{41.95}{43.47} \\
RPSFT & \stagepair{34.60}{35.27} & \stagebest{33.46}{32.53} & \stagepair{58.57}{54.29} & \stagebest{24.20}{29.14} & \stagebest{61.70}{60.80} & \stagepair{58.63}{62.28} & \stagebest{45.19}{45.72} \\
\midrule
\multicolumn{8}{l}{\textbf{Qwen2.5-7B-Instruct}}\\
\midrule
SFT & \stagepair{32.80}{33.92} & \stagepair{52.13}{52.31} & \stagepair{65.71}{71.43} & \stagepair{26.42}{32.10} & \stagepair{70.10}{72.90} & \stagepair{59.50}{65.20} & \stagepair{51.11}{54.64} \\
IW & \stagepair{32.37}{37.28} & \stagepair{54.34}{56.38} & \stagebest{58.57}{74.29} & \stagepair{26.91}{32.59} & \stagepair{70.10}{73.50} & \stagepair{60.53}{61.99} & \stagebest{50.47}{56.01} \\
DFT & \stagepair{30.36}{31.25} & \stagebest{59.70}{60.63} & \stagepair{62.86}{54.29} & \stagepair{25.19}{26.17} & \stagebest{71.90}{73.70} & \stagebest{65.06}{68.27} & \stagepair{52.52}{52.38} \\
RPSFT & \stagebest{34.60}{39.29} & \stagepair{54.34}{54.53} & \stagepair{70.00}{71.43} & \stagebest{26.42}{32.84} & \stagepair{71.10}{72.60} & \stagepair{62.43}{63.74} & \stagepair{53.15}{55.74} \\
\midrule
\multicolumn{8}{l}{\textbf{Qwen2.5-3B-Instruct}}\\
\midrule
SFT & \stagepair{30.58}{27.68} & \stagepair{47.32}{48.43} & \stagepair{45.71}{47.14} & \stagepair{19.26}{22.72} & \stagepair{59.60}{62.50} & \stagepair{49.42}{52.92} & \stagepair{41.99}{43.56} \\
IW & \stagepair{27.01}{27.90} & \stagepair{46.03}{46.58} & \stagepair{45.71}{48.57} & \stagebest{19.51}{22.96} & \stagepair{60.00}{62.30} & \stagepair{50.29}{51.32} & \stagepair{41.43}{43.27} \\
DFT & \stagepair{26.56}{25.67} & \stagebest{51.20}{51.76} & \stagepair{34.29}{44.29} & \stagepair{16.30}{17.78} & \stagebest{65.80}{64.80} & \stagebest{57.31}{56.87} & \stagepair{41.91}{43.53} \\
RPSFT & \stagebest{31.70}{28.79} & \stagepair{52.13}{48.61} & \stagebest{44.28}{55.71} & \stagepair{21.48}{20.99} & \stagepair{62.20}{63.70} & \stagepair{51.61}{53.95} & \stagebest{43.90}{45.29} \\
\bottomrule
\end{tabular}
\end{table}

\FloatBarrier

\section{Experiment Details}
\label{app:experiment-details}

\paragraph{Precomputation.}
We compute the baseline SVD blocks immediately after loading the pretrained model. For each $\ell \in \mathcal{M}'$, we compute top-$k$ singular vectors of $\mathbf{W}^0_\ell$ (e.g., via truncated SVD) and store $(\mathbf{U}^{(k)}_{0,\ell}, \mathbf{V}^{(k)}_{0,\ell}, S^{\mathrm{ref}}_\ell)$. This process costs several minutes to finish, depending on the model size; for Llama-8B, it takes around 3 minutes. 

\paragraph{Training datasets}
For supervised fine-tuning, we use OpenR1-Math \citep{openr1}, which is widely used by prior work; the processed dataset we use is around 25k samples. For downstream reinforcement learning, we use the DAPO-Math-17k set together with the DAPO objective. Every DAPO run is initialized from the corresponding supervised checkpoint, so the RL comparisons isolate the effect of the SFT initializer.

\paragraph{Training setup}
For supervised fine-tuning, we train Qwen2.5-7B and Qwen2.5-3B for 12 epochs, and we train Llama-3.1-8B for 20 epochs because its base checkpoint is weaker. We evaluate the same saved checkpoints across methods. All shared training hyperparameters are kept the same across SFT baselines, and only the method-specific parameters are changed, using the default settings for each method.

Table~\ref{tab:sft-hparams} lists the shared SFT hyperparameters used for all SFT-stage methods.
Unless stated otherwise, RPSFT uses protected rank \(k=768\) and regularization coefficient \(\lambda=1\).

\begin{table}[ht]
\centering
\small
\setlength{\tabcolsep}{8pt}
\caption{Shared supervised fine-tuning hyperparameters.}
\label{tab:sft-hparams}
\begin{tabular}{ll}
\toprule
Parameter & Value \\
\midrule
Learning rate & \num{1e-6} \\
Epochs & 12 for Qwen; 20 for Llama \\
Precision & \texttt{bf16} \\
LR scheduler & cosine \\
Warmup ratio & 0.03 \\
Weight decay & 0.0 \\
\bottomrule
\end{tabular}
\end{table}

For downstream reinforcement learning, we use DAPO and initialize every run from the corresponding supervised checkpoint. We train each run for 100 total training steps and then select the best checkpoint according to in-domain performance for the reported evaluation. Table~\ref{tab:rl-hparams} lists the shared RL-stage hyperparameters.

\begin{table}[ht]
\centering
\small
\setlength{\tabcolsep}{8pt}
\caption{Shared reinforcement-learning fine-tuning hyperparameters.}
\label{tab:rl-hparams}
\begin{tabular}{ll}
\toprule
Parameter & Value \\
\midrule
Actor learning rate & \num{1e-6} \\
Train batch size & 256 prompts \\
PPO mini-batch size & 32 prompts \\
Generation batch size & 128 prompts \\
Responses per prompt & 8 \\
Max prompt length & 2048 tokens \\
Max response length & 10240 tokens \\
Temperature & 1.0 \\
Validation top-p & 0.7 \\
PPO-style clip low / high & 0.2 / 0.28 \\
Total training steps & 100 \\
Overlong buffer length & 2048 tokens \\
Overlong penalty factor & 1.0 \\
\bottomrule
\end{tabular}
\end{table}

\paragraph{Compute and artifacts.}
All SFT and RL training runs use 8 H200 GPUs. We provide the code, dataset preparation and evaluation artifacts, training configurations, and documentation as supplemental material for reproduction, and will make the artifact public on GitHub after the anonymous review period. We use VERL as our main training framework.


\paragraph{Benchmarks}

In-domain tasks: 
\href{https://huggingface.co/datasets/math-ai/aime24}{AIME24},
\href{https://huggingface.co/datasets/math-ai/aime25}{AIME25},
\href{https://huggingface.co/datasets/math-ai/amc23}{AMC23},
\href{https://huggingface.co/datasets/HuggingFaceH4/MATH-500}{MATH-500}~\citep{hendrycks2021measuringmathematicalproblemsolving},
\href{https://huggingface.co/datasets/math-ai/minervamath}{Minerva Math}~\citep{lewkowycz2022solvingquantitativereasoningproblems}, and
\href{https://huggingface.co/datasets/math-ai/olympiadbench}{OlympiadBench}~\citep{he2024olympiadbenchchallengingbenchmarkpromoting}. 

Out-of-domain tasks: 
\href{https://huggingface.co/datasets/Idavidrein/gpqa}{GPQA}~\citep{rein2023gpqagraduatelevelgoogleproofqa},
\href{https://huggingface.co/datasets/google/IFEval}{IFEval-loose}~\citep{zhou2023instructionfollowingevaluationlargelanguage},
\href{https://huggingface.co/datasets/TIGER-Lab/MMLU-Pro}{MMLU-Pro}~\citep{wang2024mmluprorobustchallengingmultitask},
\href{https://huggingface.co/datasets/Maxwell-Jia/SuperGPQA-Astro}{SuperGPQA}~\citep{pteam2025supergpqascalingllmevaluation},
\href{https://huggingface.co/datasets/ThWu/safety_benchmark}{Safety Benchmark}, and
\href{https://huggingface.co/datasets/truthfulqa/truthful_qa}{TruthfulQA}~\citep{lin2022truthfulqameasuringmodelsmimic}.

\section{Metrics for Analysis Figures}

\subsection{Strict SVD Subspace Energy}
\label{app:svd-energy-metric}

Figure~\ref{fig:svd-fisher-motivation} measures the ratio of how much of the empirical Fisher curvature is captured by the strict top singular block. Let \(\mathbf{G}_t \in \mathbb{R}^{m \times n}\) denote the gradient matrix of the \(t\)-th sample, where \(t=1,\dots,N\) indexes the \(N\) sampled gradients used in the diagnostic, and let
\[
g_t = \mathrm{vec}(\mathbf{G}_t)
\]
be its vectorized form. We define the empirical Fisher matrix as
\[
\mathbf{F} = \sum_{t=1}^{N} g_t g_t^\top .
\]

Let \(u_i \in \mathbb{R}^{m}\) and \(v_j \in \mathbb{R}^{n}\) be the \(i\)-th left and \(j\)-th right singular vectors of the pretrained weight matrix, let \(r\) be the protected singular rank, and let \(\mathbf{P}_{\mathrm{svd},r}\) denote the orthogonal projector onto the strict top-\(r\times r\) singular-vector product subspace
\[
\mathrm{span}\bigl\{\mathrm{vec}(u_i v_j^\top): 1\le i,j\le r\bigr\}.
\]

Then the fraction of gradient energy captured by the strict top-\(r\times r\) singular block is
\[
\frac{\mathrm{tr}(\mathbf{P}_{\mathrm{svd},r} \mathbf{F})}{\mathrm{tr}(\mathbf{F})}
=
\frac{\sum_{t=1}^{N} \|\mathbf{P}_{\mathrm{svd},r} g_t\|_2^2}{\sum_{t=1}^{N} \|g_t\|_2^2}.
\]

Equivalently, in matrix form, this can be written as
\[
\frac{\mathrm{tr}(\mathbf{P}_{\mathrm{svd},r} \mathbf{F})}{\mathrm{tr}(\mathbf{F})}
=
\frac{\sum_{t=1}^{N} \left\|\mathbf{U}_r^\top \mathbf{G}_t \mathbf{V}_r\right\|_F^2}
{\sum_{t=1}^{N} \|\mathbf{G}_t\|_F^2},
\]
where \(\mathbf{U}_r = [u_1,\dots,u_r]\) and \(\mathbf{V}_r = [v_1,\dots,v_r]\).

Here, the denominator \(\mathrm{tr}(\mathbf{F})=\sum_{t=1}^N \|g_t\|_2^2\) is the total gradient energy across all samples, meaning the total sum of squared gradient norms, while the numerator \(\mathrm{tr}(\mathbf{P}_{\mathrm{svd},r} \mathbf{F})\) is the portion of that energy lying in the strict top-\(r\times r\) singular block. 
In the figure, the x-axis reports \(x(r)=r^2/R^2\), where \(R=\operatorname{rank}(\mathbf{W}_0)\) is the full singular rank, and the y-axis reports \(\mathrm{tr}(\mathbf{P}_{\mathrm{svd},r}\mathbf{F})/\mathrm{tr}(\mathbf{F})\). We apply this diagnostic to the layer-1 attention
\texttt{q\_proj} weight for Llama-8B, Qwen-7B, and Qwen-3B.

\subsection{Layerwise First-Order Signal}
\label{app:first-order-signal}

To diagnose whether a checkpoint update helps or hurts a dataset at first
order, we compare the update direction against the dataset gradient. For a
checkpoint weight $\mathbf{W}_{\mathrm{ckpt}}$ and base-model weight $\mathbf{W}_{\mathrm{base}}$,
the update is
\begin{equation}
\Delta \mathbf{W} = \mathbf{W}_{\mathrm{ckpt}} - \mathbf{W}_{\mathrm{base}}.
\end{equation}
For a batch $B$ from an ID or OOD dataset, the average gradient on matrix $\mathbf{W}$
is
\begin{equation}
g_{\mathbf{W}}
=
\frac{1}{|B|}
\sum_{(x,y)\in B}
\nabla_{\mathbf{W}} \ell(f_{\mathbf{W}}(x), y).
\end{equation}
Let $\mathcal{P}_\ell$ be the set of matrices assigned to layer $\ell$ in the
plot. The layerwise first-order signal is the average inner product
\begin{equation}
m_\ell
=
\frac{1}{|\mathcal{P}_\ell|}
\sum_{\mathbf{W}\in\mathcal{P}_\ell}
\left\langle g_{\mathbf{W}}, \Delta \mathbf{W} \right\rangle.
\end{equation}
Negative values mean the checkpoint update points along the local descent
direction for that dataset, so it reduces the loss at first order. Positive
values mean the update conflicts with the dataset gradient and tends to increase
the loss at first order.



\subsection{Rotation Metrics}
\label{app:rotation-metrics}

For each layer $\ell$ and weight type
\[
t \in \{\texttt{q\_proj},\texttt{k\_proj},\texttt{v\_proj},\texttt{o\_proj},\texttt{up\_proj},\texttt{down\_proj},\texttt{gate\_proj}\},
\]
let the base-model weight and tuned-model weight be
\[
\mathbf{W}_{\mathrm{base}}^{(\ell,t)},\qquad \mathbf{W}_m^{(\ell,t)}.
\]
Here \(m\) indexes the tuned method or checkpoint being compared. Let \(d_{\mathrm{out}}^{(\ell,t)}\) and \(d_{\mathrm{in}}^{(\ell,t)}\) denote the output and input dimensions of this matrix. We compute truncated SVD with
\[
K_{\ell,t} = \min\bigl(512,d_{\mathrm{out}}^{(\ell,t)},d_{\mathrm{in}}^{(\ell,t)}\bigr),
\]
so that
\[
\mathbf{W}_{\mathrm{base}}^{(\ell,t)} = \mathbf{U}_{\mathrm{base}}^{(\ell,t)} \mathbf{\Sigma}_{\mathrm{base}}^{(\ell,t)} {\mathbf{V}_{\mathrm{base}}^{(\ell,t)}}^\top,
\qquad
\mathbf{W}_m^{(\ell,t)} = \mathbf{U}_m^{(\ell,t)} \mathbf{\Sigma}_m^{(\ell,t)} {\mathbf{V}_m^{(\ell,t)}}^\top,
\]
where
\[
\mathbf{U}_{\mathrm{base}}^{(\ell,t)},\,\mathbf{U}_m^{(\ell,t)} \in \mathbb{R}^{d_{\mathrm{out}}^{(\ell,t)}\times K_{\ell,t}},
\qquad
\mathbf{V}_{\mathrm{base}}^{(\ell,t)},\,\mathbf{V}_m^{(\ell,t)} \in \mathbb{R}^{d_{\mathrm{in}}^{(\ell,t)}\times K_{\ell,t}}.
\]
To measure U-space rotation, we compare the left-singular subspaces through
\[
\mathbf{M}_U^{(\ell,t,m)} = \left(\mathbf{U}_{\mathrm{base}}^{(\ell,t)}\right)^\top \mathbf{U}_m^{(\ell,t)}.
\]
If the singular values of $\mathbf{M}_U^{(\ell,t,m)}$ are $\sigma_1^{(\ell,t,m)},\dots,\sigma_{K_{\ell,t}}^{(\ell,t,m)}$, the principal angles are
\[
\theta_i^{(\ell,t,m)} = \arccos\!\left(\sigma_i^{(\ell,t,m)}\right).
\]
The mean U-space rotation for layer/type pair $(\ell,t)$ is
\[
r_U^{(\ell,t,m)}
=
\frac{180}{\pi}\cdot \frac{1}{K_{\ell,t}}\sum_{i=1}^{K_{\ell,t}}\theta_i^{(\ell,t,m)}.
\]
The layerwise plot in Figure~\ref{fig:rotation-layerwise} uses \texttt{types-all}, so the value at layer $\ell$ averages these per-type rotations over the available matrix types:
\[
y_m(\ell)
=
\frac{\sum\limits_{t \in \mathcal{T}_\ell} c^{(\ell,t,m)} \, r_U^{(\ell,t,m)}}
     {\sum\limits_{t \in \mathcal{T}_\ell} c^{(\ell,t,m)}},
\]
where $\mathcal{T}_\ell$ is the set of available types in layer $\ell$. In the current plotting script, $c^{(\ell,t,m)}=1$ for each available entry, so this reduces to the simple mean across types:
\[
y_m(\ell)
=
\frac{1}{|\mathcal{T}_\ell|}
\sum_{t \in \mathcal{T}_\ell}
\left(
\frac{180}{\pi}\cdot \frac{1}{K_{\ell,t}}\sum_{i=1}^{K_{\ell,t}}
\arccos\!\big(\sigma_i^{(\ell,t,m)}\big)
\right).
\]
Thus, each curve reports how much the top left-singular subspaces rotate away from the base model across layers after averaging over all selected matrix types.

For the rank-wise case-study plot in Figure~\ref{fig:rotation-rankwise}, we fix one matrix case for each model and vary the prefix rank $r\le K_{\ell,t}$. At each $r$, we replace the full truncated bases by their first $r$ columns, compute the singular values of
\[
\mathbf{M}_U^{(r,m)} = \left(\mathbf{U}_{\mathrm{base}}^{(r)}\right)^\top \mathbf{U}_m^{(r)},
\]
where $\mathbf{U}_{\mathrm{base}}^{(r)}$ and $\mathbf{U}_m^{(r)}$ are the first $r$ columns of the truncated left-singular bases for the base and tuned matrices, and report the mean principal angle
\[
y_m(r)
=
\frac{180}{\pi}\cdot \frac{1}{r}\sum_{i=1}^{r}\arccos\!\big(\sigma_i^{(r,m)}\big).
\]
This keeps the same rotation definition and only changes the swept dimension: the layerwise figure varies $\ell$ after averaging over types, while the rankwise figure varies the prefix rank within one fixed case-study matrix.

\subsection{Entropy Metric}
\label{app:entropy-metric}

For prompt $j$, method $m$ generates tokens
\[
y^{(m,j)} = \left(y^{(m,j)}_1,\dots,y^{(m,j)}_{T_{m,j}}\right),
\]
with greedy decoding and $T_{m,j}\le 128$. At generation step $t$, let
\[
p^{(m,j)}_t(v) = \operatorname{softmax}(z^{(m,j)}_t)_v.
\]
The token entropy is
\[
h^{(m,j)}_t
=
-\sum_{v\in\mathcal V} p^{(m,j)}_t(v)\log p^{(m,j)}_t(v),
\]
and the sample-level average entropy is
\[
e_{m,j}
=
\frac{1}{T_{m,j}}\sum_{t=1}^{T_{m,j}} h^{(m,j)}_t.
\]
Each figure plots a Gaussian KDE over $\{e_{m,j}\}_{j=1}^{N_m}$:
\[
\hat f_m(x)
=
\frac{1}{N_m h_m}\sum_{j=1}^{N_m}
\frac{1}{\sqrt{2\pi}}
\exp\!\left(
-\frac{1}{2}\left(\frac{x-e_{m,j}}{h_m}\right)^2
\right),
\]
using the default bandwidth
\[
h_m = 1.06\, s_m\, N_m^{-1/5},
\]
where $s_m$ is the sample standard deviation. We use the same definition for both datasets: AIME25 for the in-domain plot and GPQA for the out-of-domain plot.

\subsection{Auxiliary Entropy Plots Across Model Families}
\label{app:entropy-family}

We include these entropy plots as auxiliary analysis rather than as a primary claim. Compared with the hidden-state drift results in the main paper, the entropy differences are more model-dependent and less uniformly separated across baselines.

\begin{figure}[ht]
\centering
\includegraphics[width=\textwidth]{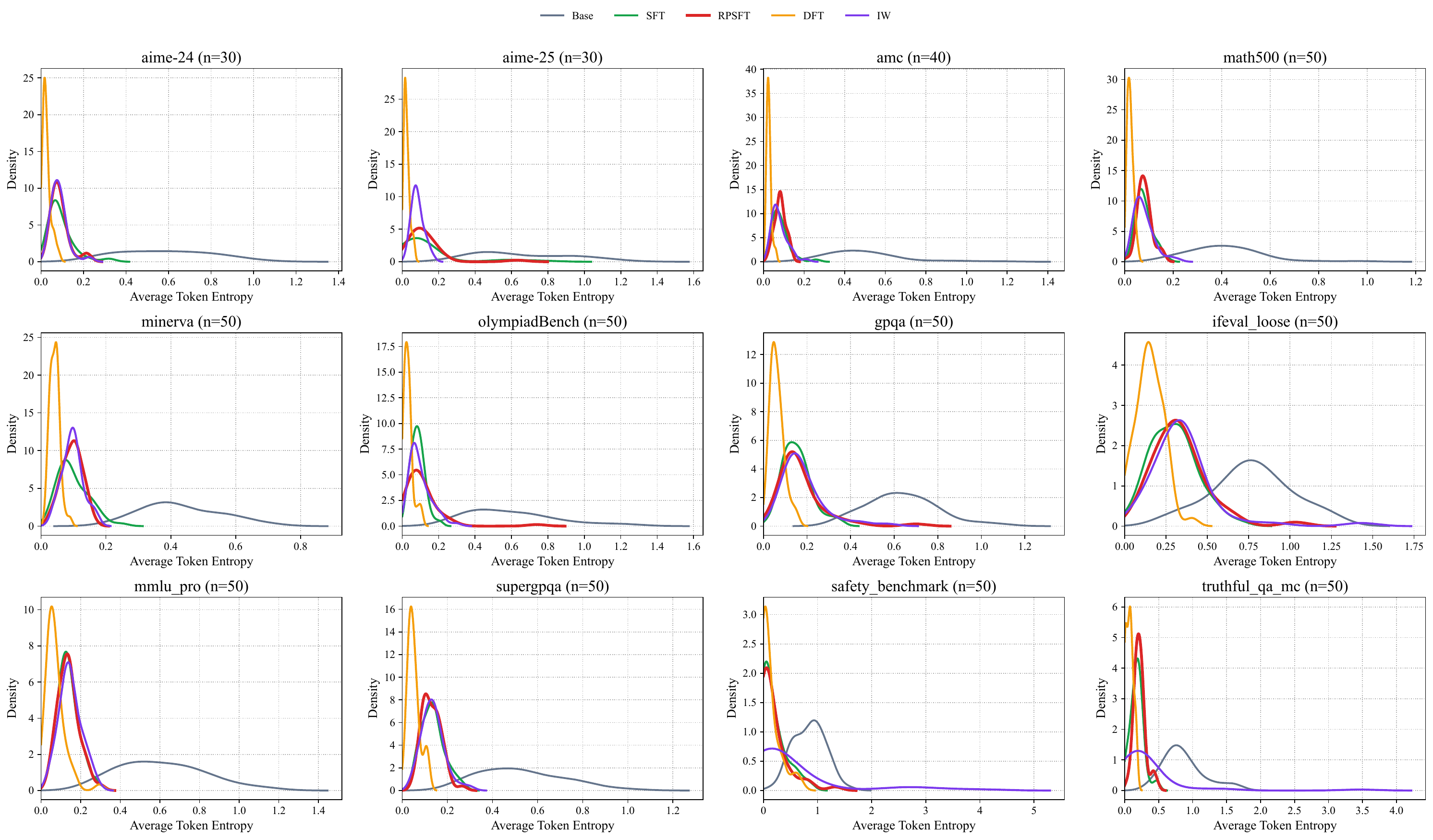}
\caption{Average-token-entropy distributions across the twelve in-domain and out-of-domain benchmarks for Llama-3.1-8B-Instruct. The differences are modest and benchmark-dependent: DFT more often shifts toward lower entropy, while RPSFT generally stays in a similar range to the stronger tuned baselines.}
\label{fig:entropy-llama-overall}
\end{figure}

\begin{figure}[ht]
\centering
\includegraphics[width=\textwidth]{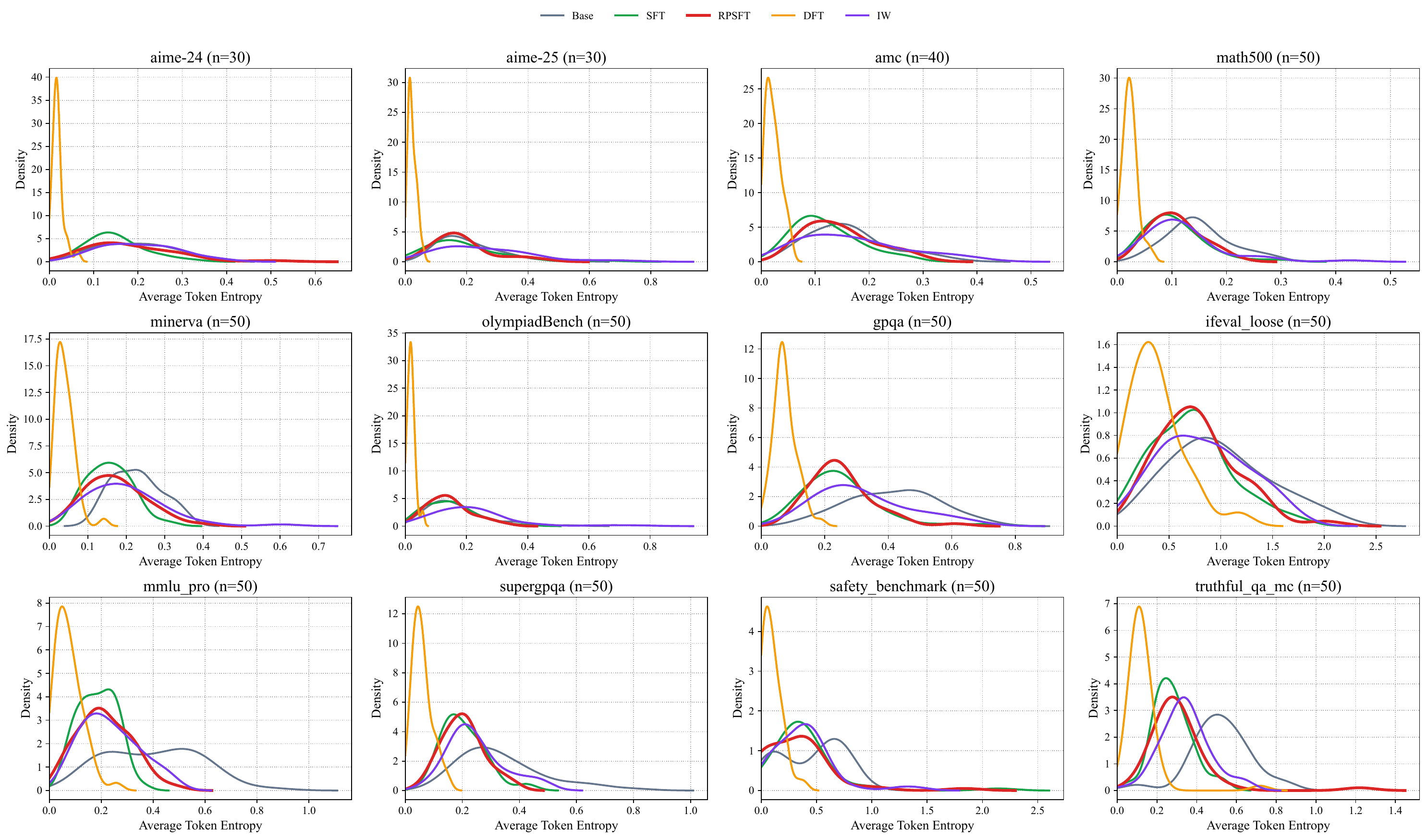}
\caption{Average-token-entropy distributions across the same twelve benchmarks for Qwen2.5-7B-Instruct. RPSFT often remains above DFT and within the broader range of the tuned baselines, but the gaps are not uniformly large across all panels.}
\label{fig:entropy-qwen7b-overall}
\end{figure}

\begin{figure}[ht]
\centering
\includegraphics[width=\textwidth]{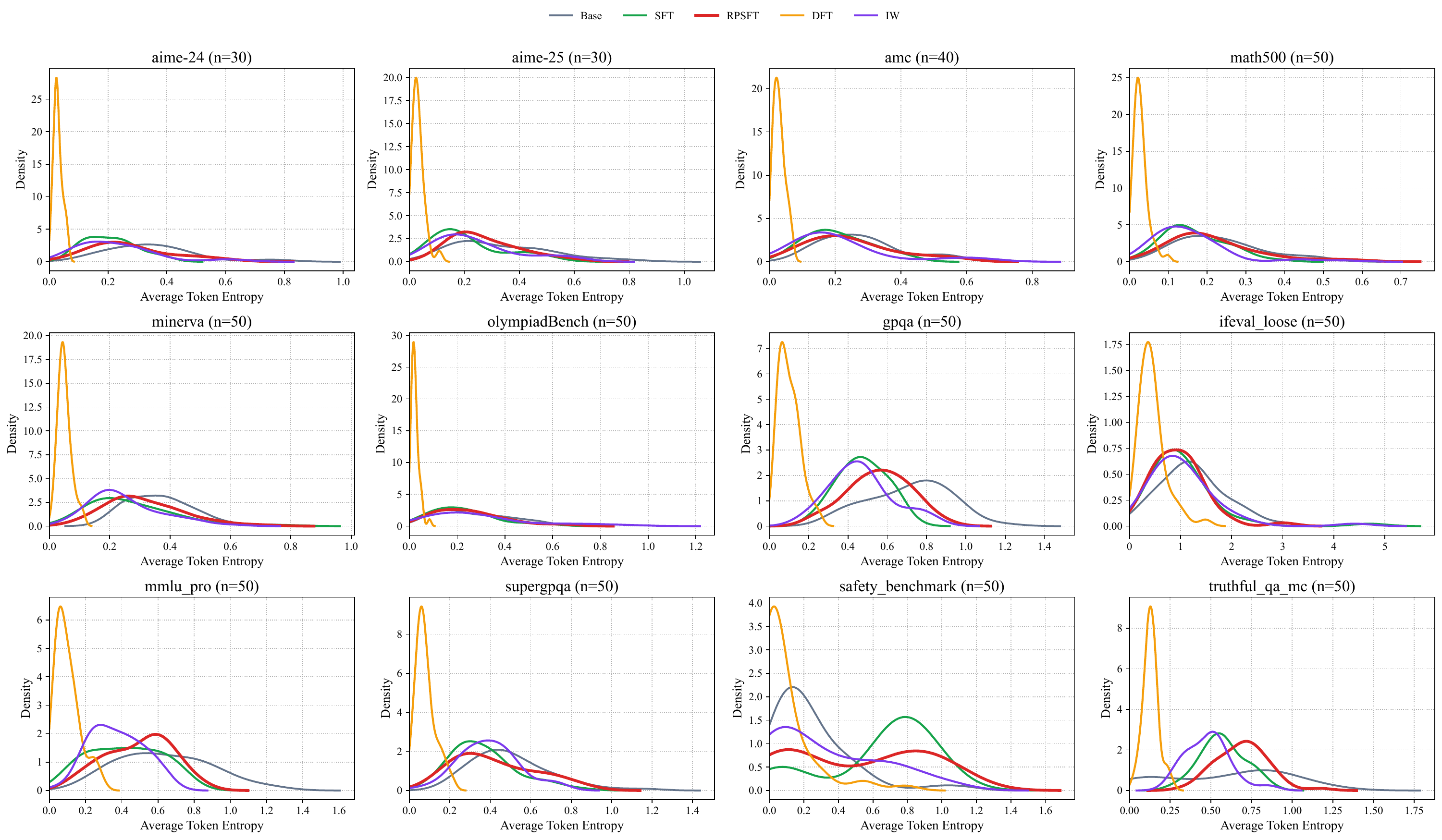}
\caption{Average-token-entropy distributions across the same twelve benchmarks for Qwen2.5-3B-Instruct. RPSFT often shifts toward a broader entropy profile than standard SFT or DFT, although the base model remains the highest-entropy reference and the gaps are still panel-dependent.}
\label{fig:entropy-qwen3b-overall}
\end{figure}

\subsection{Base-Model Drift in Hidden Representations}
\label{app:hidden-drift}

For dataset $d$ with $N_d$ prompts, let $h_i^{(m,d)}\in\mathbb{R}^p$ be the pooled hidden representation for prompt $i$ under model $m\in\{\mathrm{base},\mathrm{sft},\mathrm{RPSFT},\mathrm{dft},\mathrm{iw}\}$.
The hidden-space centroid of model $m$ on dataset $d$ is\looseness=-1
\[
\mu^{(m,d)}
=
\frac{1}{N_d}\sum_{i=1}^{N_d} h_i^{(m,d)} .
\]
We measure drift from the base model by the centroid distance
\[
D_{\mathrm{hidden}}^{(m,d)}
=
\left\|
\mu^{(m,d)}-\mu^{(\mathrm{base},d)}
\right\|_2 .
\]

For visualization, we stack all prompt representations from the five models,
\[
X^{(d)}=
\begin{bmatrix}
H^{(\mathrm{base},d)}\\
H^{(\mathrm{sft},d)}\\
H^{(\mathrm{RPSFT},d)}\\
H^{(\mathrm{dft},d)}\\
H^{(\mathrm{iw},d)}
\end{bmatrix}
\in \mathbb{R}^{M_d \times p},
\qquad
M_d = 5N_d,
\]
center the matrix as $\tilde X^{(d)} = X^{(d)} - \mathbf{1}\,\bar{x}^{(d)\top}$, and project onto the first two PCA directions:
\[
Z^{(d)}=\tilde X^{(d)}W_{1:2}\in\mathbb{R}^{M_d\times 2}.
\]
If $z_i^{(m,d)}\in\mathbb{R}^2$ is the PCA coordinate of prompt $i$, the PCA-plane centroid is
\[
c^{(m,d)}
=
\frac{1}{N_d}\sum_{i=1}^{N_d} z_i^{(m,d)} ,
\]
and the plotted centroid shift is
\[
D_{\mathrm{PCA}}^{(m,d)}
=
\left\|
c^{(m,d)}-c^{(\mathrm{base},d)}
\right\|_2 .
\]
In the plots, each point is one prompt and each arrow starts at the base centroid $c^{(\mathrm{base},d)}$ and ends at the centroid of a tuned model. We use $D_{\mathrm{hidden}}^{(m,d)}$ for the quantitative drift measurement and $D_{\mathrm{PCA}}^{(m,d)}$ only for the 2D visualization.

\begin{figure}[ht]
\centering
\includegraphics[width=1.0\textwidth]{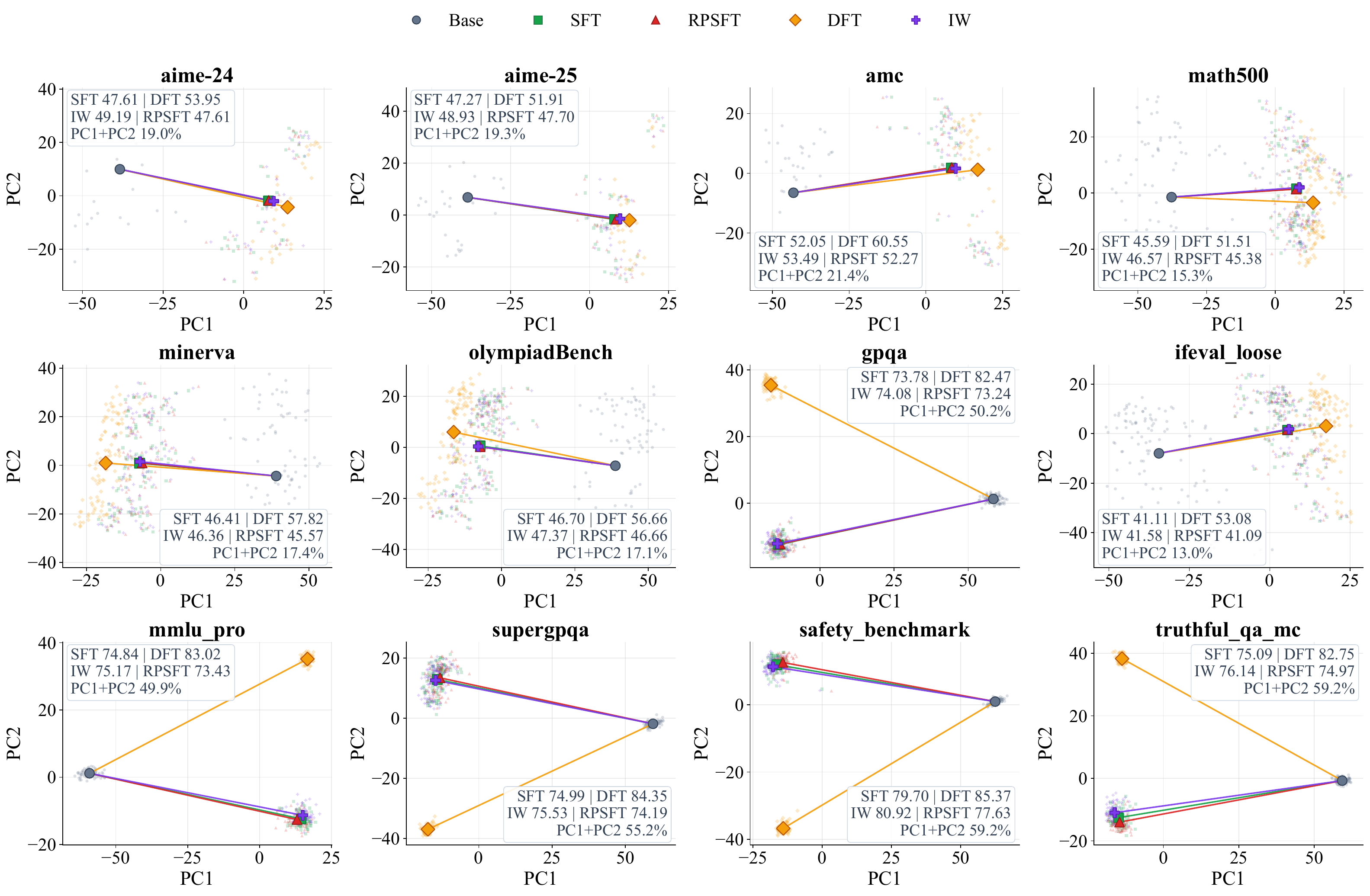}
\caption{Hidden-state drift on Llama-3.1-8B-Instruct. Across the benchmark panels, RPSFT remains close to the base-model centroid and is competitive with the strongest tuned alternatives, which is consistent with reduced representation drift.}
\label{fig:base-shift-llama}
\end{figure}

\begin{figure}[ht]
\centering
\includegraphics[width=1.0\textwidth]{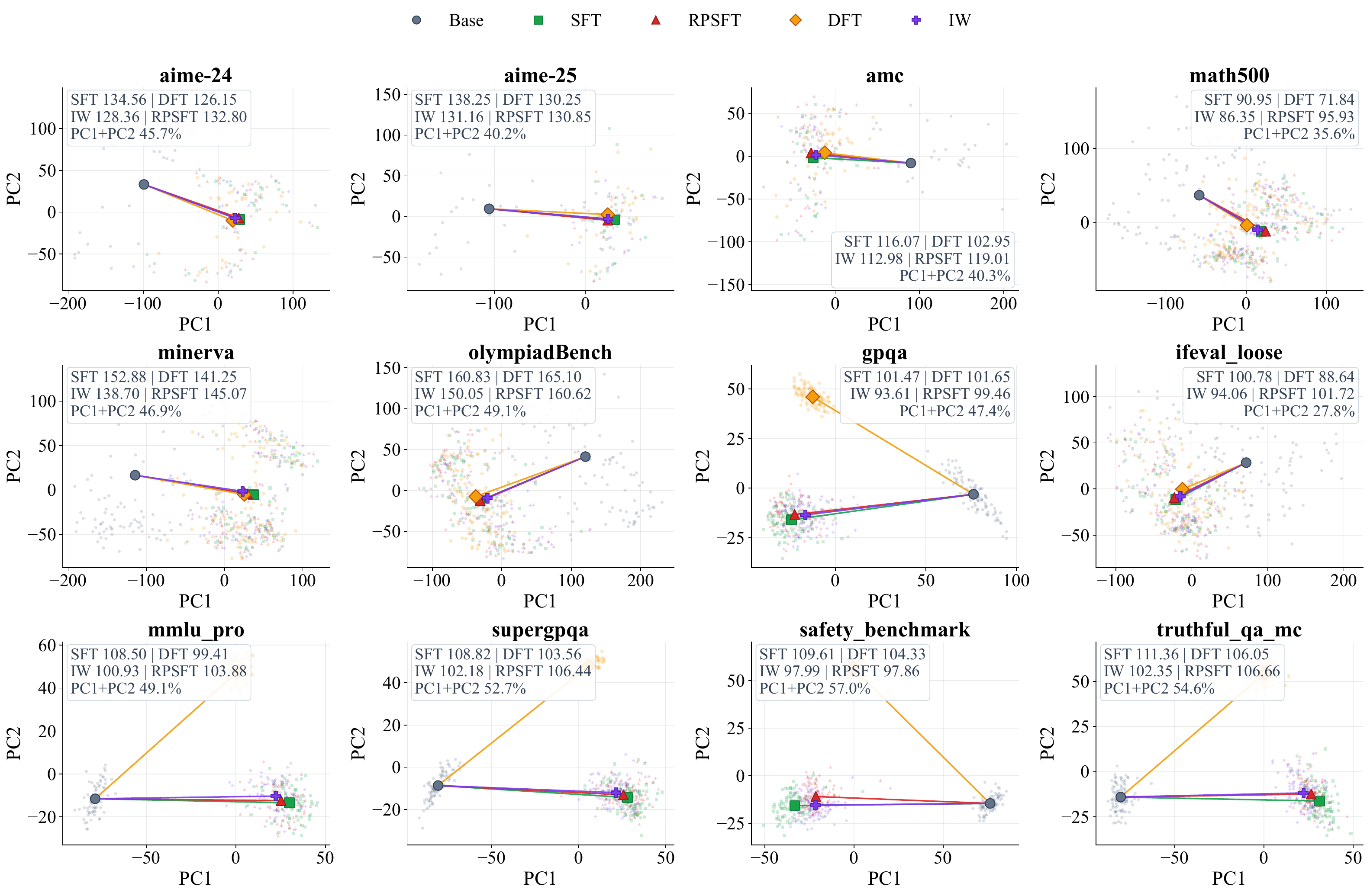}
\caption{Hidden-state drift on Qwen2.5-7B-Instruct. The PCA view compares centroid shifts away from the base model across benchmark panels. RPSFT is usually among the closest tuned centroids and remains closer than standard SFT in most panels, which is consistent with reduced representation drift after supervised fine-tuning.}
\label{fig:base-shift-qwen7b}
\end{figure}

\clearpage

\end{document}